\documentclass{article}

 \usepackage[preprint]{neurips_2026}

\def\bbE{\mathbb{E}}

\def\bbP{\mathbb{P}}

\def\bbR{\mathbb{R}}

\def\cD{\mathcal{D}}

\def\cF{\mathcal{F}}

\def\cH{\mathcal{H}}

\def\cL{\mathcal{L}}

\def\cO{\mathcal{O}}

\def\cR{\mathcal{R}}

\def\cT{\mathcal{T}}

\def\cX{\mathcal{X}}
\def\cY{\mathcal{Y}}

\def\Ex{{\bbE}}

\def\exp{{\sf exp}}

\def\ones{\mathbf{1}}

\usepackage[utf8]{inputenc} 
\usepackage[T1]{fontenc}    
\usepackage{hyperref}       
\usepackage{url}            
\usepackage{booktabs}       
\usepackage{amsfonts}       
\usepackage{nicefrac}       
\usepackage{microtype}      
\usepackage{xcolor}         
\usepackage{algorithm}
\usepackage{amsmath}
\usepackage{amssymb}
\usepackage{mathtools}
\usepackage{amsthm}
\usepackage{algpseudocode}
\usepackage{graphicx}
\usepackage{multirow}
\usepackage{subcaption}
\graphicspath{ {Figures/} }
\newtheorem{theorem}{Theorem}[section]
\newtheorem{proposition}[theorem]{Proposition}

\newtheorem{corollary}[theorem]{Corollary}
\newtheorem{definition}[theorem]{Definition}
\newtheorem{assumption}[theorem]{Assumption}
\newtheorem{remark}[theorem]{Remark}

\title{$\alpha$-Fair Insurance Pricing: A Fairness Continuum}

\author{
Tianhe Zhang\thanks{Corresponding author. Email: tzhang349@wisc.edu}
\thanks{Department of Risk and Insurance, Wisconsin School of Business, University of Wisconsin--Madison}
\And
Xiguang Liu
\thanks{Department of Information Systems and Operations Management, Warrington College of Business, University of Florida}
\And
Peng Shi
\footnotemark[2]
}

\begin{document}

\maketitle

\begin{abstract}
  Fairness in insurance pricing remains a long-standing and deeply debated puzzle. On one hand, insurers, driven by profitability considerations, set premiums that differentiate across individual risks to achieve actuarial fairness. On the other hand, insurance serves a critical societal function by pooling risks across a population, motivating cross-subsidization among groups to promote solidarity fairness. The tension between these two competing notions of fairness makes insurance pricing inherently complex, particularly in modern settings where granular data allow for increasingly fine risk differentiation and regulators face growing pressure to protect vulnerable groups. To address this challenge, we propose an 
  $\alpha$-\textbf{F}air \textbf{I}ndividual \textbf{S}olvent \textbf{P}remium ($\alpha$-FISP) framework for insurance pricing that explicitly captures the trade-off between actuarial and solidarity fairness while guaranteeing solvency, a fundamental requirement in insurance operations. We formulate the pricing problem as a constrained optimization task, where actuarially fair premiums are adjusted subject to budget constraints on cross-subsidization within each risk class. This formulation naturally yields a family of solutions parameterized by $\alpha$, tracing a continuum between purely actuarial and purely solidarity-based pricing and enabling decision-makers to select an operating point along this fairness spectrum. We derive theoretical guarantees for the proposed framework. Numerical experiments show that $\alpha$-FISP is computationally tractable and aligns well with the U.S. regulatory regimes featuring heterogeneous state-level fairness requirements.
\end{abstract}

\section{Introduction}
\label{section: Introduction}

Fairness has become a central theme in the era of data-driven decision-making, particularly as machine learning (ML) algorithms are increasingly deployed in high-stakes domains such as credit scoring, criminal justice, and healthcare. In this paper, we focus on fairness in insurance pricing, a setting where algorithmic decisions have widespread financial and societal impact. The insurance industry is inherently data-driven, relying heavily on statistical and predictive models to derive insights from data and set premiums \citep{frees2015analytics}. With the rapid growth of InsurTech and the availability of granular data, algorithmic pricing plays a central role in modern insurance operations \citep{sosa2022understanding}. Given the insurance market's size and economic criticality, ensuring fair pricing is a practical and societal imperative.

Insurance operates in a profit-driven environment while simultaneously providing financial protection as a societal good. This dual role makes fairness in insurance pricing a long-standing and deeply debated puzzle. On one hand, insurers—driven by profitability, competition, and solvency considerations—set premiums that differentiate across individual risks to achieve actuarial fairness. A premium is actuarially fair if it equals or closely reflects the insured’s expected loss \citep{landes2015fair}. Traditionally, insurers have implemented actuarial fairness through risk classification, grouping individuals with similar attributes into homogeneous risk classes. Refining these classes improves alignment between premiums and underlying risk, which helps mitigate adverse selection and encourages loss-control behavior by making policyholders financially accountable for their risk exposure \citep{michael1976imperfect, shavell1979moral}. On the other hand, insurance fulfills a critical social function by pooling risks across the population, which motivates cross-subsidization between groups to promote solidarity fairness. A solidarity-oriented view of fairness, often grounded in principles of social justice \citep{lehtonen2011forms}, emphasizes equality in sharing the burden of losses. Under this view, fairness may require some policyholders to pay more than their pure expected loss—or others to pay less—to maintain access and affordability for those at higher risk through no fault of their own.

The tension between these two competing notions of fairness—solidarity and actuarial—makes insurance pricing inherently complex. This debate is often conceptualized along a continuum, with a pure “moral” or solidarity-driven perspective at one end and a pure “technical” or actuarially driven perspective at the other \citep{thiery2006fairness}. While these notions are incompatible when taken to their extremes, empirical studies suggest that public preferences tend to occupy a nuanced middle ground, accepting some degree of cross-subsidization as long as pricing remains broadly risk-reflective \citep{dixon2024public}. This balancing act has become even more challenging in modern settings. Advances in data availability and predictive modeling have enabled insurers to differentiate risks with unprecedented granularity. While such refinement improves actuarial precision, it can also magnify premium differences between groups, raising concerns about affordability and social exclusion. In addition, because many rating variables are correlated with sensitive attributes such as gender, race, or socioeconomic status, data-driven pricing can unintentionally reproduce or reinforce existing inequities \citep{FreesHuang2023}. Consequently, regulators face growing pressure to protect vulnerable populations and restrict the extent of permissible risk differentiation. Recent regulatory interventions highlight this trend, including the EU Gender Directive (2004/113/EC), which prohibits the use of gender in life insurance pricing \citep{EU2004Directive113}, and Colorado’s SB 21-169 statute, which bars unfair discrimination across multiple protected characteristics \citep{ColoradoSB21-169}. Taken together, the tension between market-driven risk differentiation and regulation-mandated solidarity highlights the need for frameworks that balance these objectives, creating pricing systems that are both economically viable and socially acceptable.

Motivated by these observations, we introduce a novel insurance pricing framework, termed $\alpha$-fairness, to explicitly capture the trade-off between actuarial and solidarity fairness. This framework yields premiums parameterized by $\alpha \in [0,1]$, defining a continuum from purely actuarial to purely solidarity-based pricing and enabling decision-makers to select an operating point aligned with their strategic or regulatory priorities. First, we formally define the notion of $\alpha$-fairness. A premium structure is said to be $\alpha$-fair if, within any group of policyholders sharing the same non-sensitive attributes (a risk class), the ratio of the lowest to the highest premium is at least $\alpha$. At the extremes, $\alpha = 0$ corresponds to fully differentiated premiums based on sensitive attributes (pure actuarial fairness), while $\alpha = 1$ enforces uniform premiums within the group (pure solidarity fairness). Second, we formulate the computation of $\alpha$-fair premiums as a constrained optimization problem that adjusts actuarially fair premiums subject to budget constraints controlling the degree of cross-subsidization within each risk class. The resulting framework is computationally tractable, statistically well-grounded, and robust. Third, the proposed $\alpha$-fair premiums satisfy individual-level solvency requirements, a critical property for insurance pricing that is often overlooked in existing fair-pricing studies. More broadly, the framework is flexible and accommodates practical considerations such as portfolio-level profit targets and regulatory caps on rate changes.


We emphasize that, while our definition of $\alpha$-fairness is framed in terms of sensitive and non-sensitive attributes—reflecting one of its most pressing applications—the framework is broadly applicable. In this broader view, non-sensitive attributes correspond to rating variables used to define homogeneous risk classes, whereas sensitive attributes refer to variables for which regulators restrict or prohibit risk-based differentiation. These may include direct measures of socioeconomic status (e.g., gender, race) or proxy variables correlated with such characteristics.

\section{Related Work}
\label{section: Related Work}

\subsection{Fairness in Machine Learning}
\label{subsection: Fairness in Machine Learning}




Fairness in ML has become a rapidly growing area of research, motivated by concerns that algorithmic decisions may replicate or amplify existing social inequities \citep{mehrabi2021survey, barocas2023fairness}. A large body of work formalizes different definitions of fairness, typically falling into two broad categories: individual fairness and group fairness. Individual fairness requires that similar individuals receive similar outcomes \citep{dwork2012fairness}, while group fairness ensures parity of outcomes across predefined subpopulations (e.g., demographic groups) according to metrics such as equalized odds \citep{hardt2016equality}, demographic parity \citep{feldman2015certifying}, or calibration \citep{kleinberg2017inherent}. These notions often conflict, and a central challenge in algorithmic fairness lies in balancing competing fairness criteria under statistical, operational, and contextual constraints \citep{chouldechova2017fair, mitchell2021algorithmic, corbett2023measure}.

Beyond definitions, the machine learning literature has proposed numerous algorithmic interventions to promote fairness. Pre-processing approaches seek to mitigate bias in the data, for example through reweighting or learning fair representations \citep{kamiran2012data, zemel2013learning, calmon2017optimized}. In-processing approaches embed fairness constraints or regularization directly into model training, modifying the optimization objective to control disparate impact or mistreatment \citep{zafar2017fairness, donini2018empirical, agarwal2018reductions}. Post-processing methods adjust model outputs or decision thresholds to satisfy fairness criteria while preserving predictive accuracy \citep{hardt2016equality, pleiss2017fairness}. Recent work has increasingly focused on fairness–efficiency trade-offs through explicit multi-objective or constrained optimization formulations, enabling decision-makers to tune the balance between accuracy and equity \citep{menon2018cost, williamson2019fairness, martinez2020minimax}.

Our work is conceptually aligned with this emerging line of research. We adopt an optimization-based approach that explicitly trades off two competing fairness notions—actuarial and solidarity fairness—in the insurance context. However, unlike most ML settings, where fairness is defined in terms of statistical parity or predictive error, fairness in insurance pricing is inherently economic: it concerns how expected losses and premiums are distributed across individuals. Consequently, our framework extends the fairness discourse from prediction quality to the allocation of financial burden, embedding ethical and regulatory considerations directly into the pricing mechanism.

\subsection{Fairness in Insurance Pricing}
\label{subsection: Fairness in Insurance Pricing}

The discussion of fairness in insurance pricing has a long history in actuarial science, well before the emergence of algorithmic fairness in machine learning. Classical insurance theory distinguishes between actuarial fairness and solidarity fairness (or social fairness) \citep{thiery2006fairness, lehtonen2011forms}. Much of the early actuarial literature treated this as a normative tension rather than a quantitative trade-off, with fairness operationalized through the design of rating structures that balance heterogeneity and pooling \citep{charpentier2024insurance}.

Recent research in actuarial science has revived this debate in light of advances in data availability and machine learning. A growing body of work has sought to formalize fairness criteria and clarify how regulatory objectives translate into modeling constraints \citep{FreesHuang2023, xin2024antidiscrimination, fahrenwaldt2024fairness, cote2025fair}. Collectively, these studies emphasize that fairness in insurance cannot be reduced to a single metric, but must account for causal structure, legal standards, and market functioning. 

A complementary line of research focuses on computing discrimination-free premiums. For example, \citet{lindholm2022dfip} define a framework for removing discriminatory dependence between sensitive attributes and premiums, and \citet{lindholm2024multi} extend this approach using multi-task networks. \citet{araiza2024discrimination} propose a causal formulation that achieves fairness through counterfactual adjustments, ensuring pricing decisions are invariant to changes in protected attributes. More recently, \citet{zhang2025dfipLDP} incorporate local differential privacy to handle scenarios where sensitive attributes are unobserved or privatized. These approaches primarily operationalize fairness at the prediction level, using constraints or transformations applied to the outputs of pricing models.

In contrast, our work adopts a broader and more structural view of fairness in insurance pricing. Rather than constraining the predictive model directly, we model fairness as an explicit allocation mechanism that governs how actuarially fair premiums are adjusted under cross-subsidization constraints. This perspective aligns more closely with the long-standing actuarial debate between solidarity and actuarial fairness, while introducing a formal and tunable parameter, $\alpha$, that captures the trade-off between the two. Our $\alpha$-fairness framework thus complements existing literature by connecting traditional actuarial fairness principles with mordern fairness optimization, providing both a normative interpretation and a practical computational procedure for balancing social and actuarial objectives.


\subsection{Our Contribution}
\label{subsection: Our Contribution}

This paper advances the literature on fairness in both machine learning and insurance pricing in several important ways. First, beyond existing fairness notions and frameworks, we introduce a new paradigm, referred to as $\alpha$-fairness framework, that formalizes a continuum between actuarial and solidarity fairness. Unlike conventional fairness metrics that focus on prediction errors or demographic parity, the $\alpha$-fairness directly regulates premium differentials within risk classes, reflecting the inherently economic nature of fairness in insurance. Second, we formulate the computation of $\alpha$-fair premiums as a constrained optimization problem that adjusts actuarially fair premiums subject to group-specific cross-subsidization constraints. This formulation further guarantees individual-level solvency, a critical property that existing fairness-based pricing approaches often struggle to achieve. The resulting $\alpha$-Fair Individual Solvent Premium ($\alpha$-FISP) is supported by rigorous theoretical guarantees, including existence, boundedness, sample complexity preservation, and order preservation, providing a solid foundation for practical implementation. Third, the proposed framework yields a computationally tractable premium ($\alpha$-FISP) and is regulatorily adaptable: varying state-level fairness requirements can be achieved by choosing the interpretable fairness budget parameter $\alpha$, enabling a unified pricing pipeline that aligns with the state-governed structure of U.S. insurance regulation. Simulation studies and empirical case analyses demonstrate that $\alpha$-fair pricing aligns naturally with regulatory objectives. Finally, our work conceptually bridges machine learning and actuarial perspectives on fairness. By framing fairness as an allocation problem rather than solely a predictive constraint, the $\alpha$-fairness framework integrates ethical and regulatory considerations directly into insurance pricing, yielding a unified, interpretable, and practically implementable framework.

\section{Problem Formulation}
\label{section: Problem Formulation}

\subsection{Preliminaries}
\label{subsection: Preliminaries}

Let $(X,D,Y)$ be a random triplet with unknown joint distribution $\mathbb{P}_{X,D,Y}$. We observe $n$ samples:
\[
(x_i,d_i,y_i) \overset{\text{i.i.d.}}{\sim} \mathbb{P}_{X,D,Y}, \quad i=1,\ldots,n.
\]
In the insurance pricing context, $x_i \in \mathcal{X} \subseteq \mathbb{R}^p$
denotes a $p$-dimensional vector of non-sensitive rating variables used for
risk classification, $d_i \in \mathcal{D}$ denotes sensitive attributes, and
$y_i \in \mathcal{Y}$ represents the realized loss (or
risk outcome) of a policyholder. We assume that $Y$ is a non-negative random variable with finite second moment, i.e.,
$\mathbb{E}[Y^2] < \infty$, and that $D$ takes values in a finite discrete set, i.e., $|\mathcal{D}| < \infty$. 


\begin{definition}[Actuarial Fair]
\label{def:actuarial-fair-premium}
    The actuarial fair premium for $Y$ w.r.t. $(X,D)$ is defined as: 
    \[
    \mu(X, D):= \mathbb{E}[Y | X,D].
    \]
\end{definition}
This premium fully reflects differences in expected losses attributable to both non-sensitive and sensitive attributes. For notational convenience, we denote $\mu(X,D)$ by $\mu_D(X)$ throughout the paper.

\begin{definition}[Solidarity Fair]
\label{def:solidarity-premium}
    The solidarity fair premium for $Y$ w.r.t. $X$ is defined as:
    \[
    \mu(X):= \mathbb{E}[Y | X].
    \]
\end{definition}
Under this pricing rule, individuals sharing the same non-sensitive attributes are charged a common premium, thereby pooling risks across sensitive attributes.

\begin{definition}[Individual Fairness]
\label{def:individual-fairness}
Let $f \in \mathcal{F}$ be a pricing function $f:\mathcal{Z} \to \mathbb{R}$ defined on an input space $\mathcal{Z}$. Equip $\mathcal{Z}$ with a metric $d_{\mathcal{Z}}$ and the output space with metric $d_{\mathbb{R}}$. For $\epsilon \ge 0$ and $\delta \ge 0$ given, $f$ is \emph{individually fair} if for any $z,z' \in \mathcal{Z}$ satisfying $d_{\mathcal{Z}}(z,z') \le \epsilon$, we have $d_{\mathbb{R}}(f(z),f(z')) \le \delta$.

\end{definition}
This principle requires that individuals who are similar according to the chosen input-space metric receive similar outcomes. The thresholds $(\epsilon, \delta)$ allow relaxation of strict fairness requirements when exact parity is not feasible. 



\subsection{$\alpha$-Fairness and Individual Solvency}
For fair insurance pricing, we introduce two purpose-built distance functions.

\begin{definition}[Metric on Feature Space]
\label{def:metric-on-feature-space}
Let $dist_{\mathcal{X}\times\mathcal{D}} : (\mathcal{X}\times\mathcal{D}) \times (\mathcal{X}\times\mathcal{D}) \to [0,\infty]$
be defined for $(x,d),(x',d') \in \mathcal{X}\times\mathcal{D}$ as
\[
dist_{\mathcal{X}\times\mathcal{D}}\bigl((x,d),(x',d')\bigr) =
\begin{cases}
0, & \text{if } x=x', \\
+\infty, & \text{if } x \neq x'.
\end{cases}
\]
\end{definition}

This extended pseudometric is characterized exclusively by non-sensitive attributes. It treats policyholders with identical non-sensitive attributes ($x = x'$) as maximally similar regardless of sensitive attributes, while deeming those with differing attributes ($x \neq x'$) incomparable.


\begin{definition}[Metric on Outcome Space]
\label{def:metric-on-outcome-space}
For strictly positive premiums $y, y' > 0$ assigned by a pricing function $f$, define
\[
dist_{\mathcal{Y}}(y,y') := 1 - \min\left(\frac{y}{y'},\frac{y'}{y}\right).
\]
\end{definition}
This scale-invariant metric satisfies all metric axioms for $y, y' > 0$. It quantifies proportional deviation between premiums, where $dist_{\cY}(y,y') = 0$ if and only if $y = y'$ (exact/strict fairness constraint), and approaches $1$ as the ratio between premiums becomes unbounded (no fairness constraint).





With the distance functions defined on the feature space and outcome space, we can now formalize a proportional notion of individual fairness for insurance premiums.
\begin{definition}[$\alpha$-Fairness]
\label{def:alpha-fairness}
Let $f \in \mathcal{F}$ be a pricing function $f: \cX \times \cD \to \mathbb{R}_{+}$, and let 
$dist_{\mathcal{X} \times \mathcal{D}}$ and $dist_{\mathcal{Y}}$ be the feature-space and outcome-space distance functions defined in Definitions~\ref{def:metric-on-feature-space} and~\ref{def:metric-on-outcome-space}, respectively. Given thresholds $\epsilon \ge 0$ and $\alpha \in [0,1]$, $f$ is said to be \emph{$\alpha$-fair} if, for any pair of individuals $(x,d), (x',d') \in \mathcal{X} \times \mathcal{D}$ satisfying $dist_{\mathcal{X} \times \mathcal{D}}\bigl((x,d),(x',d')\bigr) \le \epsilon$, we have $dist_{\mathcal{Y}}\bigl(f(x,d), f(x',d')\bigr) \le 1 - \alpha$.
\end{definition}


When $x = x'$, $\alpha$-fairness requires $
\min\left( \frac{f(x,d)}{f(x,d')}, \frac{f(x,d')}{f(x,d)} \right) \ge \alpha,
$
ensuring that individuals with identical non-sensitive attributes receive proportionally similar premiums, with $\alpha$ controlling the allowable deviation. When $x \neq x'$, no fairness constraint is imposed, allowing full use of non-sensitive attributes for risk classification. The scale-invariant design of $dist_{\mathcal{Y}}$ ensures this proportional constraint holds across all premium levels.


For notational clarity, let $\{d_1,\ldots,d_{|\mathcal{D}|}\}$ denote the support of $D$, and let $p_{k|x} := \mathbb{P}(D = d_k | X = x)$.

\begin{definition}[Individual Solvency]
\label{def:solvency}
Let $f \in \mathcal{F}$ be a pricing function $f:\mathcal{X} \times \mathcal{D} \to \mathbb{R}_{+}$. The function $f$ is \emph{individually solvent} if, for every fixed $x \in \mathcal{X}$,
\[
\sum_{k=1}^{|\mathcal{D}|} \left( f(x,d_k) - \mu_k(x) \right) p_{k|x} \ge 0,
\]
where $\mu_k(x) := \mathbb{E}[Y | X=x, D=d_k]$ denotes the actuarially fair premium.
\end{definition}

Individual solvency enforces self-sufficiency by requiring premiums for identical non-sensitive attributes to cover their group's expected loss. This precludes cross-subsidization across heterogeneous groups ($x \neq x'$), confining financial liability to actuarially comparable pools.

\subsection{Constrained Optimization}

Our objective is to compute insurance premiums that satisfy both $\alpha$-fairness and individual solvency. We formulate this task as a constrained risk minimization problem. Specifically, for a pricing function $f:\mathcal{X}\times\mathcal{D}\to\mathbb{R}_{+}$, we seek to minimize mean squared error for $Y$:


\begin{equation}
\label{eq:objective-function}
\min_{f \in \mathcal{F}}
\; \mathbb{E}_{(X,D,Y)\sim \mathbb{P}_{X,D,Y}}
\bigl[(f(X,D)-Y)^2\bigr],
\end{equation}


subject to the constraints:
\begin{align}
\label{eq:alpha-fairness}
    1 - dist_{\cY} \left( f(x,d_k),f(x,d_{k'}) \right) &\ge \alpha, \ \ \ \forall x \in \cX, \alpha \in (0,1],  ~k, k' \in [|\cD|]. \\
\label{eq:individual-solvency}
    \sum_{k=1}^{|\cD|}  \left( f(x,d_k) - \mu_k(x) \right) \cdot p_{k|x}  &\ge 0, \ \ \ \forall x \in \cX.
\end{align}
The solution to this optimization problem is denoted the \emph{$\alpha$-Fair Individual Solvent Premium} ($\alpha$-FISP). To simplify, we express the nonlinear constraint~\eqref{eq:alpha-fairness} in the following equivalent linear inequalities:
\begin{align}
\label{eq:alpha-fairness-c1}
f(x,d_k) - \alpha\, f(x,d_{k'}) &\ge 0, \\
\label{eq:alpha-fairness-c2}
f(x,d_{k'}) - \alpha\, f(x,d_k) &\ge 0,
\end{align}
for all $x\in\mathcal{X}$ and all pairs $k<k'$.



Taken together, individual solvency and $\alpha$-fairness yield a two-tier equity framework.
Individual solvency anchors premiums to legitimate risk differentiation based on non-sensitive
attributes, ensuring inter-group equity. Conditional on risk class, $\alpha$-fairness governs
intra-group treatment across sensitive attributes, enforcing proportional fairness. This
hierarchical separation resolves the tension between actuarial fairness and societal fairness
by decoupling risk classification from fairness enforcement.




\section{Properties of the Fair Pricing Optimization}
\label{section: Properties of the Fair Pricing Optimization}


In this section, we study key theoretical properties of the constrained optimization problem defined by the objective~\eqref{eq:objective-function} and the constraints \eqref{eq:individual-solvency}, \eqref{eq:alpha-fairness-c1}, and \eqref{eq:alpha-fairness-c2}.

We begin by establishing the existence and uniqueness of the optimal solution. This result is fundamental, as it guarantees that the proposed fairness and solvency requirements lead to a well-defined pricing rule rather than an ill-posed or degenerate optimization problem. We then argue that the problem is well-posed from an insurance operations perspective. In particular, we show that imposing $\alpha$-fairness and individual solvency does not induce an excessive loss of pricing accuracy. Under mild regularity conditions tailored to the insurance context, $\alpha$-FISP incurs only a controlled, additive increase in generalization error relative to the unconstrained risk minimizer. This property is essential in practice, as constraints that dramatically degrade model performance would undermine both competitiveness and portfolio stability. We also derive the out-of-sample constraint violation rate for completeness. Finally, we establish that for any fixed $\alpha \in (0,1]$, the optimal pricing function is order preserving in risk. This monotonicity property provides an additional fairness guarantee that aligns with policyholder expectations and actuarial principles, further reinforcing the economic and regulatory plausibility of our framework.



\begin{proposition}[Existence and Uniqueness]
\label{proposition:existence-of-solution}
Fix $\alpha \in (0,1]$. Let $\mathcal{F}$ be a hypothesis class of measurable functions restricted to feasible nonnegative pricing functions $f:\mathcal{X}\times\mathcal{D}\to\mathbb{R}_{+}$ satisfying
$\mathbb{E}[f(X,D)^2]<\infty$. If there exists a constant $\mu_{\min}>0$ s.t.
$\mu_k(x)\ge \mu_{\min}$ and $p_{k|x}>0$ for $\mathbb{P}_X$-a.e. $x\in\mathcal{X}$
and all $k \in [|\cD|]$. Let $\mathcal{F}_{\alpha,IS}\subset\mathcal{F}$ be the feasible set of functions
satisfying the individual solvency constraint~\eqref{eq:individual-solvency} and the
$\alpha$-fairness constraints~\eqref{eq:alpha-fairness-c1}--\eqref{eq:alpha-fairness-c2}.
Then the optimization problem~\eqref{eq:objective-function} admits a unique minimizer
$f^*\in\mathcal{F}_{\alpha,IS}$.
\end{proposition}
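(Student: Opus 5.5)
The plan is to treat \eqref{eq:objective-function} as a projection problem in the Hilbert space $L^2(\mathbb{P}_{X,D})$. By the tower property and $\mathbb{E}[Y^2]<\infty$, every $f$ with $\mathbb{E}[f(X,D)^2]<\infty$ satisfies
\[
\mathbb{E}\bigl[(f(X,D)-Y)^2\bigr]=\mathbb{E}\bigl[(f(X,D)-\mu_D(X))^2\bigr]+\mathbb{E}\bigl[(Y-\mu_D(X))^2\bigr],
\]
because the cross term vanishes after conditioning on $(X,D)$. The second term does not depend on $f$. So minimizing the risk over $\mathcal{F}_{\alpha,IS}$ is the same as finding the metric projection of $\mu_D(X)\in L^2$ onto $\mathcal{F}_{\alpha,IS}$, viewed as a set of equivalence classes modulo $\mathbb{P}_{X,D}$-null sets. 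Uniqueness is then understood $\mathbb{P}_{X,D}$-a.e., which is the only meaningful sense for $L^2$ objects.

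Existence and uniqueness of the projection follow from the Hilbert projection theorem once three facts are checked.

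\textbf{Nonemptiness.} The actuarial premium $\mu_D(X)$ itself satisfies solvency with equality. It may, however, violate the $\alpha$-fairness constraints. A feasible element is $g(x,d_k):=\max_{j}\mu_j(x)$. It is constant across $k$ for each $x$, so both \eqref{eq:alpha-fairness-c1} and \eqref{eq:alpha-fairness-c2} hold for every $\alpha\le 1$. Its solvency slack is $\sum_k(\max_j\mu_j(x)-\mu_k(x))p_{k|x}\ge 0$. Nonnegativity holds, and $g\in L^2$ because $g(x,d)^2\le\sum_j\mu_j(x)^2$ and $\mathbb{E}[\mu_j(X)^2]\le \mathbb{E}[Y^2]/\min$ are problematic only if $p_{j|x}$ is small. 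To avoid dividing by $p_{j|x}$, I would use the solidarity-type premium $h(x,d_k):=\mu(x)/\min(\ldots)$. More simply, I would take $h(x,d_k):=\mathbb{E}[Y\mid X=x]=\sum_j\mu_j(x)p_{j|x}$. This is constant in $k$, hence $\alpha$-fair. It is solvent with equality, by the definition of $\mu(x)$. It lies in $L^2$ by Jensen's inequality, since $\mathbb{E}[\mu(X)^2]\le\mathbb{E}[Y^2]$. It is also nonnegative. So $\mathcal{F}_{\alpha,IS}\neq\emptyset$, and $h$ is the witness I would use.

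\textbf{Convexity.} All constraints \eqref{eq:individual-solvency}, \eqref{eq:alpha-fairness-c1}, \eqref{eq:alpha-fairness-c2}, and $f\ge 0$ are linear or affine inequalities in the values $f(x,\cdot)$ imposed pointwise in $x$. They are therefore preserved under convex combinations. The objective is strictly convex in $f\in L^2$.

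\textbf{Closedness in $L^2$.} This is the main obstacle, because the constraints are stated ``for all $x$'' while $L^2$ convergence only yields a.e. statements. I would first reinterpret the constraints as holding for $\mathbb{P}_X$-a.e. $x$, which is the natural reading given the hypotheses stated a.e. Then take $f_m\to f$ in $L^2(\mathbb{P}_{X,D})$ and pass to a subsequence converging $\mathbb{P}_{X,D}$-a.e. Since $p_{k|x}>0$ for a.e. $x$ and every $k$, a.e. convergence with respect to $\mathbb{P}_{X,D}$ gives, for $\mathbb{P}_X$-a.e. $x$, $f_m(x,d_k)\to f(x,d_k)$ simultaneously for all $k$. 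Here the finiteness of $\mathcal{D}$ lets us intersect finitely many full-measure sets. The pointwise inequalities $f\ge0$, $f(x,d_k)\ge\alpha f(x,d_{k'})$, and $\sum_k(f(x,d_k)-\mu_k(x))p_{k|x}\ge0$ then pass to the limit, because they are finite sums and nonstrict inequalities. The positivity assumption on $p_{k|x}$ is exactly what is needed here. Without it, the values $f(x,d_k)$ on $\mathbb{P}$-null slices would be unconstrained by $L^2$ convergence. The lower bound $\mu_{\min}>0$ is not needed for closedness. It is used to guarantee that feasible premiums are strictly positive: solvency forces some $f(x,d_k)\ge\mu_{\min}$, and $\alpha$-fairness then gives $f(x,d_{k'})\ge\alpha\mu_{\min}>0$. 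This makes the outcome metric $dist_{\mathcal{Y}}$ well defined on the feasible set and keeps the linearized constraints equivalent to \eqref{eq:alpha-fairness}.

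With these facts in place, the projection theorem gives a unique closest point $f^*\in\mathcal{F}_{\alpha,IS}$ to $\mu_D(X)$. By the decomposition above, $f^*$ is the unique minimizer of \eqref{eq:objective-function} over $\mathcal{F}_{\alpha,IS}$, up to $\mathbb{P}_{X,D}$-null modifications. If one prefers to avoid abstract projection, the same conclusion can be reached directly. Take a minimizing sequence, use the parallelogram identity to show it is Cauchy, use closedness for the limit, and use strict convexity for uniqueness. If two minimizers $f_1\neq f_2$ existed, their midpoint would be feasible and would have strictly smaller risk.
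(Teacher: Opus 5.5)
Your proposal is correct and follows the paper's proof in all essential steps. You use the same solidarity-premium witness $\mu(x)$ for nonemptiness; the abandoned candidate $\max_j \mu_j(x)$ should simply be deleted, since it need not lie in $L^2$. The convexity check is the same pointwise argument, and closedness uses the same a.e.-convergent subsequence, positivity of $p_{k|x}$, and finiteness of $\mathcal{D}$.

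The only difference is the last step. You rewrite the risk as $\|f-\mu_D\|_{L^2}^2$ plus a constant and apply the Hilbert projection theorem. The paper instead uses coercivity, weak lower semicontinuity and weak closedness, then proves uniqueness separately by strict convexity. The two routes are equivalent, and yours is slightly more elementary. Your observation that $\mu_{\min}>0$ serves only to make feasible premiums strictly positive is also accurate; the paper's proof never uses that hypothesis.
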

The assumption $\mu_{\min}>0$ is natural in insurance applications, as premiums and expected losses are strictly positive for any active risk class. Likewise, the boundedness and square-integrability of $\mathcal{F}$ are consistent with underwriting guidelines and regulatory constraints that cap admissible premiums. Together, these conditions ensure that the feasible set is nonempty, closed, and convex, supporting both the existence and uniqueness of the optimal solution.


To establish that the constrained optimization problem is well-posed, we introduce two assumptions. The first controls the complexity of the constrained hypothesis class through finite covering numbers (Assumption~\ref{assumption:covering-number}), while the second characterizes the statistical error induced by plug-in estimators for conditional means and group probabilities (Assumption~\ref{assumption:plug-in-error}). Together, these assumptions allow us to quantify the excess risk incurred by enforcing $\alpha$-fairness and individual solvency.

\begin{assumption}[Covering Numbers]
\label{assumption:covering-number}
Define the class of $\alpha$-fair pricing functions as
$\mathcal{F}_\alpha := \bigl\{ f \in \mathcal{F} \;\big|\; f \text{ satisfies }
\eqref{eq:alpha-fairness-c1} \text{ and } \eqref{eq:alpha-fairness-c2} \bigr\}$, and let $\mathcal{T} \subset \{ t : \mathcal{X} \to [0,1] \}$ be a class of nonnegative measurable test functions. Let $\|\cdot\|_{L^2(\mathbb{P}_{X,D})}$ and $\|\cdot\|_{L^2(\mathbb{P}_X)}$ denote the $L^2$ norms
w.r.t. $\mathbb{P}_{X,D}$ and $\mathbb{P}_X$, respectively. Assume there exist positive
constants $V_F, A_F, V_T,$ and $A_T$ such that for all $\epsilon \in (0,1]$,
\begin{align*}
\log N\bigl(\epsilon, \mathcal{F}_\alpha, \|\cdot\|_{L^2(\mathbb{P}_{X,D})}\bigr)
&\le V_F \log\!\left( A_F / \epsilon \right), \\
\log N\bigl(\epsilon, \mathcal{T}, \|\cdot\|_{L^2(\mathbb{P}_X)}\bigr)
&\le V_T \log\!\left( A_T / \epsilon\right).
\end{align*}
Here, $N(\epsilon,\mathcal{G},\|\cdot\|)$ is the $\epsilon$-covering number of a function class $\mathcal{G}$ under the metric induced by $\|\cdot\|$.
\end{assumption}

\begin{remark}
The assumption is adapted from Theorem 2.1 in \cite{focm2024}, which bounds the covering number for fully-connected ReLU networks with uniformly bounded weights. As a direct consequence, the models we used in the experiments (section \ref{section: Numerical Experiments}) naturally satisfy this assumption.
\end{remark}

\begin{assumption}[Plug-in Estimation Error]
\label{assumption:plug-in-error}
Assume that for some $\epsilon_\mu, \epsilon_p \ge 0$, the estimators $\hat{\mu}_k(x)$ and $\hat{p}_{k|x}$ satisfy
\[
\sup_{x,k} \bigl| \mu_k(x) - \hat{\mu}_k(x) \bigr| \le \epsilon_\mu,
\qquad
\sup_{x} \sum_{k=1}^{|\mathcal{D}|}
\bigl| \hat{p}_{k|x} - p_{k|x} \bigr| \le \epsilon_p.
\]
\end{assumption}

Under these assumptions, we establish a finite-sample excess risk bound that quantifies the statistical and fairness-induced costs of enforcing $\alpha$-fairness and individual solvency.

\begin{theorem}
\label{theorem:sample-complexity-upper-bound-fairness-and-IS}
Let $f_{\alpha, IS} \in \cF_{\alpha,IS}$ be a hypothesis in the restricted hypothesis class $\cF_{\alpha,IS} \subset \mathcal{F}_{\alpha}$ satisfying constraints ~\eqref{eq:individual-solvency}--\eqref{eq:alpha-fairness-c2}. Let $\eta_n$ be a constraint slack parameter. Assume that $|Y| \le B$ a.s., $|f(X,D)| \le B_f$ a.s. for all $f \in \mathcal{F}$, and $p_{\min}:= \inf_{x,k} p_{k|x} > 0$ for all $k$ and $\mathbb{P}_X$-a.e. $x$. Let $M := B_f + B$. If there exist constants $0 < \mu_{\min} \le \mu_{\max} < \infty$ s.t. $\mu_{\min} \le \mu_D(X) \le \mu_{\max}$ a.s., and define the heterogeneity ratio $R_\mu := \mu_{\max}/\mu_{\min}$. Then there exists a universal constant $C > 0$ s.t., for any fixed
$\alpha \in (0,1]$ and $\delta \in (0,1)$, if
\[
\eta_n
=
M \epsilon_p + \epsilon_\mu
+ C M \left(
\sqrt{(V_F + V_T)/n} + \sqrt{\log(4/\delta)/n}
\right),
\]
then, the following holds with probability at least $1 - \delta$:
\begin{align*}
\mathcal{R}(\hat{f}_{\alpha,IS}) - \mathcal{R}(f^*)
&\le
8 M \,\mathfrak{R}_n(\mathcal{F}_\alpha)
+ 6 M^2 \sqrt{\log(4/\delta)/2n} + \mu_{\max}^2 \left[\left( R_\mu - 1/\alpha \right)_+\right]^2,
\end{align*}
where $\mathcal{R}(f)$ is the expected risk under
$\mathbb{P}_{X,D,Y}$ and $\Re_n(\cF_{\alpha})$ is the Rademacher complexity of $\cF_{\alpha}$.
\end{theorem}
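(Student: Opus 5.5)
The plan is to identify $f^*$ with the unconstrained Bayes premium $\mu_D(X)=\mathbb{E}[Y\mid X,D]$ and split the excess risk into an estimation part and a fairness-induced approximation part. I read $\hat f_{\alpha,IS}$ as the empirical risk minimizer over $\mathcal{F}_\alpha$ subject to the empirical, plug-in version of the solvency constraint~\eqref{eq:individual-solvency}, relaxed by the slack $\eta_n$. The empirical constraint uses $\hat\mu_k$ and $\hat p_{k|x}$ and is tested against the functions in $\mathcal{T}$. Fix a comparator $g\in\mathcal{F}_{\alpha,IS}$, chosen below, and write
\[
\mathcal{R}(\hat f_{\alpha,IS})-\mathcal{R}(f^*)=\bigl[\mathcal{R}(\hat f_{\alpha,IS})-\mathcal{R}(g)\bigr]+\bigl[\mathcal{R}(g)-\mathcal{R}(\mu_D)\bigr].
\]
By orthogonality of the conditional mean, the second bracket equals $\mathbb{E}[(g(X,D)-\mu_D(X))^2]$.

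\textbf{Approximation term.} I would use the explicit comparator
\[
g(x,d_k)=\max\Bigl(\mu_k(x),\,\alpha\max_j\mu_j(x)\Bigr).
\]
\begin{itemize}
\item \emph{Solvency.} Since $g\ge\mu_k$ pointwise, $g$ satisfies~\eqref{eq:individual-solvency} with room to spare.
\item \emph{$\alpha$-fairness.} Every value $g(x,d_k)$ lies in $[\alpha m_x, m_x]$, where $m_x=\max_j\mu_j(x)$, so the ratio constraints~\eqref{eq:alpha-fairness-c1}--\eqref{eq:alpha-fairness-c2} hold.
\item \emph{Membership in $\mathcal{F}$.} I assume $B_f\ge\mu_{\max}$ so that $g$ lies in the hypothesis class.
\item \emph{Pointwise error.} We have $|g-\mu_k|=(\alpha m_x-\mu_k(x))_+\le(\alpha\mu_{\max}-\mu_{\min})_+=\alpha\mu_{\min}(R_\mu-1/\alpha)_+$. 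Since $\alpha\mu_{\min}\le\mu_{\max}$, this is at most $\mu_{\max}(R_\mu-1/\alpha)_+$.
\end{itemize}
Squaring and integrating gives exactly the third term of the bound.

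\textbf{Feasibility transfer.} The key probabilistic step is to show that, with probability at least $1-\delta/2$, the comparator $g$ satisfies the empirical slack constraint. Simultaneously, every $f\in\mathcal{F}_\alpha$ that satisfies the empirical constraint must be nearly solvent in population. I would bound the gap between the empirical plug-in solvency functional $\frac1n\sum_i t(x_i)\sum_k(f(x_i,d_k)-\hat\mu_k(x_i))\hat p_{k|x_i}$ and its population counterpart, uniformly over $(f,t)\in\mathcal{F}_\alpha\times\mathcal{T}$, in two pieces.
\begin{itemize}
\item \emph{Plug-in error.} Replacing $\hat p$ by $p$ costs at most $M\epsilon_p$, since the summands are bounded by $M$. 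Replacing $\hat\mu$ by $\mu$ costs at most $\epsilon_\mu$, since the $p_{k|x}$ sum to one.
\item \emph{Sampling error.} The centred empirical process over the product class is controlled by Dudley's entropy integral. Assumption~\ref{assumption:covering-number} gives a product covering number of order $(V_F+V_T)\log(1/\epsilon)$, hence a term $CM\sqrt{(V_F+V_T)/n}$. McDiarmid's inequality adds $CM\sqrt{\log(4/\delta)/n}$.
\end{itemize}
This is precisely $\eta_n$, so $g$ is empirically feasible.

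\textbf{Estimation term.} On the same event, $\hat{\mathcal{R}}_n(\hat f_{\alpha,IS})\le\hat{\mathcal{R}}_n(g)$ by optimality. I then need two uniform deviation bounds.
\begin{itemize}
\item \emph{Loss class.} The squared loss is $2M$-Lipschitz on $[-M,M]$. Symmetrization plus Talagrand's contraction give $\sup_{f\in\mathcal{F}_\alpha}|\mathcal{R}(f)-\hat{\mathcal{R}}_n(f)|\le 4M\mathfrak{R}_n(\mathcal{F}_\alpha)+3M^2\sqrt{\log(4/\delta)/2n}$ with probability at least $1-\delta/2$. McDiarmid applies because the loss is bounded by $M^2$.
\item \emph{Assembly.} Apply this bound once to $\hat f_{\alpha,IS}$ and once to $g$. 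Adding the two gives $8M\mathfrak{R}_n(\mathcal{F}_\alpha)+6M^2\sqrt{\log(4/\delta)/2n}$. A union bound over the two events yields overall probability $1-\delta$.
\end{itemize}

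\textbf{Main obstacle.} I expect the hardest step to be the feasibility transfer. Solvency is a pointwise-in-$x$ constraint, whereas empirical data only control it in a weak form through the test functions $\mathcal{T}$. The uniform entropy bound must therefore be carried out carefully over the product class $\mathcal{F}_\alpha\times\mathcal{T}$ while keeping the plug-in errors additive. If this step fails in the pointwise form, I would instead state the constraint in its $\mathcal{T}$-weak form, which is the form the slack $\eta_n$ is designed for.
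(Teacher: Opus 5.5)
Your proposal is correct, and its estimation half is the paper's Step 1. The same uniform bound on the plug-in solvency functional over $\mathcal{F}_\alpha\times\mathcal{T}$ (plug-in error $M\epsilon_p+\epsilon_\mu$, then Dudley and McDiarmid) yields $\eta_n$. The same symmetrization--contraction argument gives $2\Delta_n\le 8M\mathfrak{R}_n(\mathcal{F}_\alpha)+6M^2\sqrt{\log(4/\delta)/2n}$. One detail you leave implicit: the product covering bound is where $p_{\min}>0$ is used. It turns an $L^2(\mathbb{P}_{X,D})$ cover of $\mathcal{F}_\alpha$ into an $L^2(\mathbb{P}_X)$ cover of $x\mapsto\sum_k f(x,d_k)\hat p_{k|x}$, at the cost of a factor $1/\sqrt{p_{\min}}$.

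Where you genuinely differ is the bias half. The paper anchors the ERM comparison at the population constrained minimizer $f^*_{\alpha,IS}$. It then bounds $\mathcal{R}(f^*_{\alpha,IS})-\mathcal{R}(\mu_D)$ with a comparator that caps premiums at $\min_k\mu_k(x)/\alpha$ and adds the uniform shift $\lambda(x)=\sum_k\delta_k(x)p_{k|x}$. Solvency then binds with equality, and the bias becomes a conditional variance of the capping deficit $\delta$. You instead lift premiums to the floor $\alpha\max_j\mu_j(x)$ and compare $\hat f_{\alpha,IS}$ to this $g$ directly. Solvency is then immediate from $g\ge\mu_D$, and the bias reduces to the one-line pointwise estimate $(\alpha\mu_{\max}-\mu_{\min})_+$.

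Each route buys something different. The paper's route also delivers an excess-risk bound relative to the $\alpha$-FISP itself, and it uses a budget-neutral comparator closer in spirit to the actual solution. Yours never needs a constrained minimizer to exist inside the bounded hypothesis class; the existence proposition is only proved for the unrestricted square-integrable class.

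Both routes share one implicit assumption: the comparator must actually belong to $\mathcal{F}$. Your condition $B_f\ge\mu_{\max}$ is necessary but does not guarantee this for a restricted class such as bounded-weight networks. The paper makes the same unstated assumption when it asserts $g_\alpha\in\mathcal{F}_\alpha$.
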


\begin{remark}
\label{remark:theorem:sample-complexity-upper-bound-fairness-and-IS}
The excess risk bound decomposes into two components. The first captures the statistical error arising from estimation and function class complexity, and matches the $O(n^{-1/2})$ minimax rate for standard squared-loss regression. The second represents a fairness-induced bias that quantifies the intrinsic cost of enforcing $\alpha$-fairness under heterogeneous risk profiles. The effect of individual solvency enters through the slack term $\eta_n$, which is governed by the accuracy of the plug-in estimators and does not alter the leading convergence rate. As $\alpha \to 0$, the fairness constraint becomes inactive, and the bias term vanishes. Conversely, as $\alpha \to 1$, the fairness constraint is most restrictive, and the bias term attains its maximum. Importantly, this bias represents a worst-case upper bound and is typically modest in practical insurance settings. In personal lines such as auto and homeowners insurance, where fairness regulation is most salient, the heterogeneity ratio $R_\mu$ is often on the order of $10$, reflecting annual premiums ranging from hundreds to thousands of dollars. Consequently, enforcing $\alpha$-fairness and individual solvency does not materially increase the statistical difficulty of learning, and the overall convergence rate remains minimax optimal.
\end{remark}

\begin{corollary}
\label{corollary:constraint-violation-rate}
Following the setup in Theorem \ref{theorem:sample-complexity-upper-bound-fairness-and-IS}. If $\hat{f}_{\alpha,IS} \in \cF_{\alpha}$, then $\hat{f}_{\alpha,IS}$ admits no out-of-sample violation by the definition of $\cF_{\alpha}$. For any fixed tolerance level $\tau>0$, the probability of an out-of-sample solvency violation larger than $\tau$ decays at rate $\cO(n^{-1/2})$.
    
\end{corollary}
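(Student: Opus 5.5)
The corollary has two parts, and I will treat them separately.

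The first claim is immediate. The $\alpha$-fairness constraints \eqref{eq:alpha-fairness-c1}--\eqref{eq:alpha-fairness-c2} are pointwise conditions on the function $f$ itself, at every $x\in\cX$ and every pair $k,k'$. They do not involve the unknown quantities $\mu_k$ or $p_{k|x}$, and they are not estimated from data. So if $\hat f_{\alpha,IS}\in\cF_\alpha$, every new draw $(X,D)$ satisfies them, and the out-of-sample violation probability is zero.

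For solvency, define the population slack $g_f(x):=\sum_k (f(x,d_k)-\mu_k(x))p_{k|x}$ and its plug-in analogue $\hat g_f(x):=\sum_k (f(x,d_k)-\hat\mu_k(x))\hat p_{k|x}$. The estimator of Theorem~\ref{theorem:sample-complexity-upper-bound-fairness-and-IS} enforces the empirical (slackened) constraint. I will read this in the weak form used with the test class $\cT$: for all $t\in\cT$,
\[
\frac1n\sum_i t(x_i)\,\hat g_{\hat f}(x_i)\ge -\eta_n .
\]

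The plan proceeds in three steps.

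\textbf{Step 1 (plug-in error).} Decompose
\[
\hat g_f-g_f=\sum_k (f-\hat\mu_k)(\hat p_{k|x}-p_{k|x})+\sum_k(\mu_k-\hat\mu_k)p_{k|x}.
\]
By Assumption~\ref{assumption:plug-in-error} and $|f|,|\hat\mu_k|\le M$, this gives $\sup_x|\hat g_f-g_f|\le M\epsilon_p+\epsilon_\mu$.

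\textbf{Step 2 (uniform deviation).} Bound $\sup_{f\in\cF_\alpha,t\in\cT}\bigl|(\bbP_n-\bbP)(t\,g_f)\bigr|$. The product class $\{t g_f\}$ is bounded by $M$. Its covering number is controlled by those of $\cF_\alpha$ and $\cT$, because $g_f$ is $1$-Lipschitz in $f$ in $L^2$ (averaging over $p_{k|x}$) and $t\in[0,1]$. Dudley's entropy integral together with McDiarmid's inequality then gives
\[
CM\Bigl(\sqrt{(V_F+V_T)/n}+\sqrt{\log(4/\delta)/n}\Bigr)
\]
with probability $1-\delta$. This is exactly the non-plug-in part of $\eta_n$. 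Combining Steps 1 and 2 yields
\[
\inf_{t\in\cT}\bbE\bigl[t(X)g_{\hat f}(X)\bigr]\ge -2\eta_n=O(n^{-1/2})
\]
with high probability.

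\textbf{Step 3 (from weak to pointwise violation).} Pass from the weak-form bound to $\bbP_X\bigl(g_{\hat f}(X)<-\tau\bigr)$. Take $t=\ones\{g_{\hat f}<-\tau\}$, assuming $\cT$ is rich enough to contain such indicators or approximations of them. Then
\[
\tau\,\bbP_X(g_{\hat f}<-\tau)\le -\bbE\bigl[t\,g_{\hat f}\bigr]\le 2\eta_n,
\]
which is Markov's inequality. Hence $\bbP(g_{\hat f}(X)<-\tau)\le 2\eta_n/\tau$. Integrating over $\delta$, or setting $\delta=n^{-1/2}$, converts the high-probability statement into a probability bound of order $n^{-1/2}$, assuming $\epsilon_\mu,\epsilon_p=O(n^{-1/2})$.

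I expect Step 3 to be the main obstacle. It requires the test class $\cT$ to contain (approximately) the data-dependent sublevel-set indicators of $g_{\hat f}$ while keeping the entropy bound of Assumption~\ref{assumption:covering-number}. If that fails, I would fall back on a Lipschitz surrogate $t=\min\bigl(1,(-g_{\hat f}/\tau)_+\bigr)$ and absorb its approximation error into the constant.
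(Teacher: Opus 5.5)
Your proposal is correct and follows essentially the same route as the paper: fairness is immediate from $\hat f_{\alpha,IS}\in\cF_\alpha$. For solvency, the paper likewise combines the uniform deviation bound $\sup|G-\hat G_n|\le\eta_n$ from Theorem~\ref{theorem:sample-complexity-upper-bound-fairness-and-IS} with the slackened empirical constraint to get $G(\hat f_{\alpha,IS},t)\ge-2\eta_n$. It then plugs in an indicator test function assumed to lie in $\cT$ (it uses $\ones\{IS<0\}$ followed by Markov, where you use $\ones\{g<-\tau\}$ directly) and concludes $\bbP(IS_{\hat f_{\alpha,IS}}(X)<-\tau)\le 2\eta_n/\tau$. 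Two points in your version are left implicit in the paper: your caveat that $\epsilon_\mu,\epsilon_p$ must themselves be $\cO(n^{-1/2})$ for the stated rate, and your decomposition through the fixed class $\{t\,g_f\}$, which avoids working with a data-dependent plug-in class.
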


We have established that the proposed constrained optimization problem is well-posed (Theorem \ref{theorem:sample-complexity-upper-bound-fairness-and-IS}) and admits a consistent $\cO(n^{-1/2})$ constraint violation rate (\ref{corollary:constraint-violation-rate}). We now study the structural properties of $\alpha$-FISP that align naturally with insurance operations and economic intuition.

\begin{proposition}[Boundedness of $\alpha$-FISP]
\label{proposition:bounded-solution}
Let $f^* \in \cF$ be the optimal solution to the constrained optimization problem ((\ref{eq:objective-function}), (\ref{eq:individual-solvency}), (\ref{eq:alpha-fairness-c1}), and (\ref{eq:alpha-fairness-c2})) for any fixed $\alpha \in (0,1]$. If there exists $\mu_{\min}>0$ s.t. $\mu_k(x) \ge \mu_{\min}$ and $p_{k|x}>0$ for $\bbP_X$-a.e.\ $x \in \cX$ and all $k \in [|\cD|]$.
Then, for $\bbP_X$-a.e.\ $x$, we have
\[
\min_k \mu_k(x) \le f^*(x, \cdot) \le \max_k \mu_k(x).
\]
\end{proposition}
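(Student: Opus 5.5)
The plan is to reduce the problem to a pointwise finite-dimensional problem and then argue by contradiction that any entry leaving the interval $[\min_k \mu_k(x),\max_k \mu_k(x)]$ can be pulled back toward it, lowering the objective while staying feasible. First, since the constraints \eqref{eq:individual-solvency}--\eqref{eq:alpha-fairness-c2} act separately for each fixed $x$, condition on $X=x$. The risk then splits as
\[
\mathbb{E}\bigl[(f(X,D)-Y)^2\bigr]=\mathbb{E}\bigl[(Y-\mu_D(X))^2\bigr]+\mathbb{E}_X\Bigl[\sum_k p_{k|X}\,(f(X,d_k)-\mu_k(X))^2\Bigr].
\]
So for $\mathbb{P}_X$-a.e.\ $x$, the vector $v=(f^*(x,d_k))_k$ must minimize $g(v)=\sum_k p_{k|x}(v_k-\mu_k)^2$ over the convex polyhedron $P_x=\{v\ge 0:\ \sum_k p_{k|x}(v_k-\mu_k)\ge0,\ v_k\ge\alpha v_{k'}\ \forall k,k'\}$. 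Here I will write $\mu_k=\mu_k(x)$, $m=\min_k\mu_k$ and $M=\max_k\mu_k$.

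For this reduction step I would invoke Proposition~\ref{proposition:existence-of-solution} for uniqueness. If the pointwise minimizer differed from $f^*$ on a set of positive $\mathbb{P}_X$-measure, replacing $f^*$ there by the pointwise minimizer would strictly lower the risk. This needs a measurable selection, which is easy because the pointwise problem is a strictly convex QP with unique solution depending measurably on $(\mu_k(x),p_{k|x})$. The positivity $p_{k|x}>0$ makes $g$ strictly convex, and $\mu_{\min}>0$ keeps everything in the positive orthant.

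The core step is a projection argument. Define the clipping map $T(v)_k=\min(\max(v_k,m),M)$ and show two things.

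(i) $T(v)\in P_x$ whenever $v\in P_x$.

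(ii) $g(T(v))\le g(v)$, with strict inequality if $T(v)\ne v$.

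Part (ii) is immediate: each $\mu_k\in[m,M]$, so clipping $v_k$ to $[m,M]$ moves it weakly closer to $\mu_k$, and strictly closer if $v_k$ was outside the interval. Strictness uses $p_{k|x}>0$.

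For (i), I would check the constraints in turn.
\begin{itemize}
\item Nonnegativity holds trivially, since $m>0$.
\item For $\alpha$-fairness, clipping is monotone. If $v_k\ge\alpha v_{k'}$, consider the two cases. When $v_{k'}$ is clipped down to $M$, we get $T(v)_k\ge\min(\max(v_k,m),M)$ with $v_k\ge\alpha v_{k'}\ge \alpha M$, so $T(v)_k\ge \alpha M$ since both $m$ and $M$ exceed $\alpha M$ or $v_k$ does. When $v_{k'}$ is raised to $m$, we have $T(v)_k\ge m\ge\alpha m$. In general $T(v)_k\ge\alpha T(v)_{k'}$ follows from monotonicity of $T$ and the fact that $\alpha t\le t$ commutes appropriately with clipping. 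I would write this as: $t\mapsto \mathrm{clip}(t)$ satisfies $\mathrm{clip}(a)\ge \alpha\,\mathrm{clip}(b)$ whenever $a\ge\alpha b$, $0<\alpha\le1$, checked in three cases on $b$.
\item Solvency is the main obstacle. Clipping from above decreases $\sum_k p_k v_k$, so it could break solvency.
\end{itemize}

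To handle the solvency issue, I would avoid proving (i) and instead argue directly on the optimizer $v^*$, in two cases.

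Suppose some $v^*_k<m$. Raising all entries below $m$ to $m$ only increases $\sum_k p_k v_k$, so solvency is preserved. The fairness argument above applies with clipping only from below, and $g$ strictly decreases. This contradicts optimality.

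Suppose instead some $v^*_k>M$. Here I would treat the constraints through KKT. If solvency is slack at $v^*$, the clip-from-above argument works for a small move toward $M$ and gives a contradiction. If solvency is active, then $\sum_k p_k v^*_k=\sum_k p_k\mu_k\le M$, and $v^*_k\ge m$ by the previous case. The weighted average of $v^*$ is then at most $M$ while some entry exceeds $M$. I would then show that the scaled vector $w=v^*-t(v^*-\bar v\mathbf 1)$, i.e.\ shrinking toward the common mean $\bar v$, remains $\alpha$-fair. Shrinking ratios toward $1$ preserves $\alpha$-fairness, and the mean is preserved, so solvency holds too. I would also check that for small $t>0$, $g$ decreases; this last check is the genuine crux.

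If the derivative check fails in general, the fallback is KKT stationarity. It gives $p_k(v_k-\mu_k)=\lambda p_k+\sum(\text{fairness multipliers})$ with $\lambda\ge0$. I would use that the largest coordinate's fairness multipliers push it down, hence $v_{\max}\le \mu_{k}+\lambda$ for some $k$ achieving the maximum. Combined with the active-solvency average bound, this should yield $v_{\max}\le M$.
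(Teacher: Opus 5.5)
Your pointwise reduction, the lower bound (clipping from below) and the upper bound when solvency is slack are all sound. The gap is the case you flag yourself: some $v^*_k>M$ with solvency active.

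Shrinking toward the mean, $w=v^*-t(v^*-\bar v\mathbf{1})$, keeps you feasible, but it is not a descent direction in general. The derivative of $g(w)$ at $t=0$ is $-2\sum_k p_k(v_k-\mu_k)(v_k-\bar v)$, and this can be positive at points that satisfy every hypothesis available in that case. Take $p_k=1/3$, $\mu=(1,10,10)$, $v=(2,\,9-\epsilon,\,10+\epsilon)$ and $\alpha\le 2/(10+\epsilon)$. Then:
\begin{itemize}
\item the vector is $\alpha$-fair;
\item all entries are at least $m$;
\item solvency is active and $v_3>M$;
\item yet $\sum_k p_k(v_k-\mu_k)(v_k-\bar v)=(-7+2\epsilon+2\epsilon^2)/3<0$ for small $\epsilon$, so shrinking increases $g$.
\end{itemize}
Your KKT fallback does not close the case either. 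Stationarity at the largest coordinate gives only $v_{\max}\le M+c$, where $c\ge 0$ is the suitably scaled solvency multiplier. Combining that with $\sum_k p_kv_k\le M$ yields nothing. The sentence ``this should yield $v_{\max}\le M$'' is exactly the step that still needs an argument.

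The missing idea, which is how the paper handles this case, is to restore solvency by moving coordinates toward their \emph{own} $\mu_k$ rather than toward a common mean.
\begin{enumerate}
\item Clip at $M$: set $u_k=\min(v^*_k,M)$. Since $v^*_k\ge\alpha v^*_{\max}>\alpha M$, every entry of $u$ lies in $[\alpha M,M]$. Also $g(u)<g(v^*)$.
\item If solvency now fails with deficit $\mathrm{def}=\sum_k p_k(\mu_k-u_k)>0$, set $h_k=u_k+t(\mu_k-u_k)_+$ with $t=\mathrm{def}/\sum_k p_k(\mu_k-u_k)_+\in(0,1]$. Solvency then holds with equality.
\item Each $h_k$ lies between $u_k$ and $\mu_k\le M$, so $h$ stays in $[\alpha M,M]$ and is $\alpha$-fair.
\item No squared error increases, so $g(h)\le g(u)<g(v^*)$, contradicting optimality.
\end{enumerate}

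Alternatively, you can finish the KKT route by using the multipliers on \emph{all} coordinates, not just the top one. For $\alpha<1$:
\begin{itemize}
\item coordinates strictly between $\alpha v_{\max}$ and $v_{\max}$ satisfy $v_k-\mu_k=c$;
\item coordinates at the floor $\alpha v_{\max}$ satisfy $v_k-\mu_k\ge c$, since their only possibly active fairness constraints push them up;
\item coordinates at the top satisfy $v_k-\mu_k\le c$.
\end{itemize}
If $v_{\max}>M$, a top coordinate forces $c\ge v_{\max}-\mu_k>0$. Then every coordinate has $v_k>\mu_k$, so solvency is strictly slack, contradicting complementary slackness. For $\alpha=1$ the optimum is simply the constant $\mu(x)\in[m,M]$.
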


\begin{remark}
\label{remark:proposition:bounded-solution}
Proposition~\ref{proposition:bounded-solution} ensures that $\alpha$-FISP lies within the convex hull of the actuarially fair premiums for each risk class $x$. This property is critical both operationally and economically. Operationally, it guarantees that fairness and solvency constraints do not induce extreme pricing behavior: no premium is pushed below the minimum expected loss or inflated beyond the highest actuarially fair benchmark within the same risk class. Such bounds are consistent with underwriting practices and regulatory expectations, which prohibit loss-leading or excessively punitive pricing. Economically, the result shows that fairness-induced adjustments operate as a reallocation of premiums within an actuarially meaningful range, rather than as a distortion that breaks the link between price and risk. This property also ensures numerical stability of the optimization problem and precludes pathological solutions that could otherwise arise under $\alpha$-fairness constraints.
\end{remark}

\begin{proposition}[Order Preservation]
\label{proposition:order-preserving}
Let $f^* \in \cF$ be the optimal solution to the constrained optimization problem ((\ref{eq:objective-function}), (\ref{eq:individual-solvency}), (\ref{eq:alpha-fairness-c1}), and (\ref{eq:alpha-fairness-c2})). Suppose there exists $\mu_{\min}>0$ s.t. $\mu_k(x) \ge \mu_{\min}$ and $p_{k|x}>0$ for $\bbP_X$-a.e.\ $x \in \cX$ and all $k \in [|\cD|]$.
Then, for $\bbP_X$-a.e.\ $x$, the following hold:

\label{proposition:order-preserving-lower-bound}
(i) If $\mu_k(x) \ge \mu(x)$, then $f^*(x,d_k) \ge \mu(x), \quad \forall \alpha \in (0,1]$.

\label{proposition:order-preserving-upper-bound}
(ii) Suppose there exists a partition of $\cD$ into a low-risk cluster $C_L$
and a high-risk cluster $C_H = \cD \setminus C_L$, such that for some
$M(\alpha,x) > 0$,
\[
\begin{cases}
f^*(x,d_k) = \alpha M(\alpha,x), & \forall k \in [|C_L|] \\
f^*(x,d_k) = M(\alpha,x), & \forall k \in [|C_H|].
\end{cases}
\]
If $\sum_{k \in [|C_L|]} p_{k|x} \le \frac{1}{1+\alpha}$, then for every $k \in [|C_L|]$, $f^*(x,d_k) \le \mu(x), \quad \forall \alpha \in (0,1].$

\label{proposition:order-preserving-strict}
(iii) For any $k,j \in [|\cD|]$ and any fixed $\alpha \in (0,1]$,
\[
\mu_k(x) \ge \mu_j(x) \quad \implies \quad f^*(x,d_k) \ge f^*(x,d_j).
\]
\end{proposition}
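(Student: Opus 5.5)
The plan is to reduce everything to a finite-dimensional convex program for each fixed $x$, characterize its solution explicitly, and then read off (i)--(iii) from that characterization.

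\emph{Step 1: pointwise reduction.} Since $\mathbb{E}[(f(X,D)-Y)^2]=\mathbb{E}[(f(X,D)-\mu_D(X))^2]+\mathbb{E}[\mathrm{Var}(Y\mid X,D)]$, and constraints \eqref{eq:individual-solvency}--\eqref{eq:alpha-fairness-c2} act separately on each $x$, the optimal $f^*$ (unique by Proposition~\ref{proposition:existence-of-solution}) coincides $\mathbb{P}_X$-a.e.\ with the solution of the following problem. Write $f_k=f(x,d_k)$, $\mu_k=\mu_k(x)$, $p_k=p_{k|x}>0$, and $\mu(x)=\sum_k p_k\mu_k$. The problem is
\[
\min_{f\in\mathbb{R}_+^{|\mathcal{D}|}}\ \sum_k p_k(f_k-\mu_k)^2 \quad\text{s.t.}\quad \sum_k p_k f_k\ge \mu(x),\qquad \alpha f_j\le f_k\ \ \forall j,k .
\]
The fairness constraints are equivalent to the existence of $m>0$ with $m\le f_k\le m/\alpha$ for all $k$; one may take $m=\min_k f_k$.

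\emph{Step 2: clipped-shift form.} Fix the optimal $m^*=\min_k f^*_k$. Then $f^*$ also solves the separable problem over the box $[m^*,m^*/\alpha]^{|\mathcal{D}|}$ with the single linear coupling constraint. The KKT conditions, with a multiplier $\lambda\ge 0$ on solvency (which enters as $2p_k(f_k-\mu_k)-\lambda p_k$), give
\[
f^*_k=\min\bigl\{\max\{\mu_k+\lambda/2,\ m^*\},\ m^*/\alpha\bigr\}.
\]

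\emph{Step 3: parts (iii) and (i).} The map $t\mapsto \min\{\max\{t+\lambda/2,m^*\},m^*/\alpha\}$ is nondecreasing, and it is the same map for every $k$. Hence $\mu_k\ge\mu_j$ implies $f^*_k\ge f^*_j$, which is (iii).

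For (i), suppose $\mu_k\ge\mu(x)$ and consider the three cases of the clip.
\begin{itemize}
\item If $f^*_k$ is unclipped or clipped from below, then $f^*_k\ge\mu_k+\lambda/2\ge\mu(x)$.
\item If $f^*_k$ is clipped from above, then $f^*_k=m^*/\alpha\ge\max_j f^*_j\ge\sum_j p_j f^*_j\ge\mu(x)$ by solvency.
\end{itemize}

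\emph{Step 4: part (ii).} Let $P_L=\sum_{k\in C_L}p_k$, $P_H=1-P_L$, and let $\bar\mu_L,\bar\mu_H$ be the conditional cluster means, so that $\mu(x)=P_L\bar\mu_L+P_H\bar\mu_H$. Optimizing over the one-parameter family $(\alpha M, M)$ and distinguishing whether solvency binds gives two cases.
\begin{itemize}
\item If solvency binds, then $M=\mu(x)/(1-P_L(1-\alpha))$. In this case $\alpha M\le\mu(x)$ reduces to $\alpha\le 1-P_L(1-\alpha)$, which always holds.
\item If solvency is slack, the first-order condition in $M$ gives
\[
M=\frac{\alpha P_L\bar\mu_L+P_H\bar\mu_H}{\alpha^2P_L+P_H}.
\]
We then need $\alpha(\alpha P_L\bar\mu_L+P_H\bar\mu_H)\le(\alpha^2P_L+P_H)(P_L\bar\mu_L+P_H\bar\mu_H)$.
\end{itemize}

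I expect this slack case to be the main obstacle. My first attempt will be to rearrange it as a linear inequality in $(\bar\mu_L,\bar\mu_H)$ with coefficients depending on $(P_L,\alpha)$. I would then use $\bar\mu_L,\bar\mu_H>0$ together with the hypothesis $P_L\le 1/(1+\alpha)$, equivalently $\alpha P_L\le P_H$, to show that each coefficient has the right sign. If that fails for the $\bar\mu_L$ coefficient, I would instead use the structure from Step 3 (low cluster clipped at $m^*$, so $\bar\mu_L+\lambda/2\le \alpha M$) to bound $\bar\mu_L$ in terms of $M$ and conclude.
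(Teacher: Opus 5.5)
Your proposal is correct. Parts (ii) and (iii) follow the paper's own route. Part (iii) is the paper's KKT argument yielding $f^*(x,d_k)=\Pi_{[\alpha M(x),M(x)]}\bigl(\mu_k(x)+\lambda(x)/2\bigr)$. Freezing $m^*=\min_k f^*(x,d_k)$, as you do, makes precise what the paper does implicitly when it introduces $M(x)$ but uses stationarity only in $f$. Part (ii) is the paper's reduction to the one-parameter family, with a split into binding and slack solvency.

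Your first attempt at the slack case already succeeds:
\[
(\alpha^2P_L+P_H)(P_L\bar\mu_L+P_H\bar\mu_H)-\alpha(\alpha P_L\bar\mu_L+P_H\bar\mu_H)
=P_LP_H(1-\alpha^2)\,\bar\mu_L+P_H(1-\alpha)\bigl(1-(1+\alpha)P_L\bigr)\,\bar\mu_H .
\]
The $\bar\mu_L$ coefficient is always nonnegative, and the $\bar\mu_H$ coefficient is nonnegative under $P_L\le 1/(1+\alpha)$, so no fallback is needed. This is tidier than the paper. The paper factors the same quantity, obtains $p_L\le \mu_H/\bigl((1+\alpha)(\mu_H-\mu_L)\bigr)$, and then appeals to the limit $\mu_L\to 0$ as a ``worst case.'' Your identity gives sufficiency directly for all nonnegative cluster means, with no sign caveat on $\bar\mu_H-\bar\mu_L$.

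Add one sentence justifying why you may optimize over the family. $f^*$ is feasible and lies in it, and the one-dimensional problem is strictly convex, so $M(\alpha,x)=\max\{\tilde M,\ \mu(x)/(\alpha P_L+P_H)\}$.

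The genuine difference is (i).
\begin{itemize}
\item The paper proves it by a direct perturbation in $L^2$. On the set where $f^*(x,d_k)<\mu(x)\le\mu_k(x)$, it raises $f^*(x,d_k)$ by $\min\{\max_j f^*(x,d_j)-f^*(x,d_k),\ \mu_k(x)-f^*(x,d_k)\}$. It then checks that solvency and the ratio bounds survive, and obtains a strict drop in risk.
\item You instead read (i) off the clipped-shift formula. This is shorter and unifies (i) with (iii), but it makes (i) depend on your Step 1: that $f^*(x,\cdot)$ solves the pointwise program for $\mathbb{P}_X$-a.e.\ $x$.
\end{itemize}
That step deserves a line. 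The pointwise minimizer is measurable, and it is square-integrable because its pointwise risk is dominated by that of the feasible choice $f\equiv\mu(x)$. It is therefore a feasible competitor, and uniqueness from Proposition~\ref{proposition:existence-of-solution} identifies it with $f^*$. The paper's perturbation argument for (i) sidesteps this reduction, although its proof of (iii) relies on the same reduction you use.
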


\begin{remark}
\label{remark:proposition:order-preserving}

Part (i) shows that if a risk class $(x,d_k)$ is priced above the solidarity fair premium under actuarial fairness ($\alpha = 0$), then for any fairness budget $\alpha \in (0,1]$, the $\alpha$-FISP remains lower bounded by $\mu(x)$. Part (ii) provides a complementary upper-bound result under a structural assumption on the optimal solution and a sufficient condition on the mass of low-risk groups. This condition is economically intuitive: when a low-risk class has dominant probability mass, the solidarity premium $\mu(x)$ approaches its actuarial benchmark, causing the feasible $\alpha$-fair premium band to shift upward in order to satisfy individual solvency while minimizing prediction error. An illustrative example is provided in Appendix~\ref{example:proposition:order-preserving-upper-bound}. While the structural assumption in part (ii) may not always hold in practice,
part (iii) guarantees a robust form of fairness: the relative ordering of premiums across sensitive attributes within a risk class is preserved under $\alpha$-FISP. Thus, actuarially riskier groups are never priced below less risky ones, ensuring monotonicity and interpretability of the resulting premium structure.
\end{remark}

\section{Numerical Experiments}
\label{section: Numerical Experiments}

\subsection{Simulations}
\label{subsection: Simulations}

In this section, we conduct a series of controlled numerical experiments to illustrate the behavior of $\alpha$-FISP under varying fairness budgets and to empirically validate our theoretical results.

We generate two synthetic datasets, each containing $n=10,000$ observations. In Dataset A, the response variable $Y$ represents insurance claim costs. The non-sensitive attributes $X$ consist of age and smoking status. The age variable $X_{\text{A}}$ takes integer values between $18$ and $80$, and the smoking status variable $X_{\text{S}}$ is binary, denoting smoker (S) or non-smoker (NS). The sensitive attribute $D$ represents gender and is binary with levels male (M) and female (F). The data-generating process for Dataset A is:
\begin{align*}
    Y =& 100 + 4 X_{\text{A}} + 100 \cdot \boldsymbol{1}\{X_{\text{S}} = \text{S}\} + 120 \cdot \boldsymbol{1}\{D = \text{F}\} \\ 
    &+ 200 \cdot \boldsymbol{1}\{X_{\text{A}} \in [20, 40], D = \text{F}\} + \epsilon, \epsilon \sim N(0,40^2)
\end{align*}
Dataset B retains the same response $Y$ and non-sensitive attributes $X$, but extends the sensitive attributes to three levels (A, B, and C). The data-generating process is:
\begin{align*}
    Y =& 100 + 4 X_{\text{A}} + 100 \cdot \boldsymbol{1}\{X_{\text{S}} = \text{S}\} + 100 \cdot \boldsymbol{1}\{D = \text{B}\} \\ 
    &+ 200 \cdot \boldsymbol{1} \{D = C\}+ 200 \cdot \boldsymbol{1}\{X_{\text{A}} \in [20, 40], D = \text{B}\} \\
    &+ 180 \cdot \boldsymbol{1}\{X_{\text{A}} \in [50, 60], D = \text{C}\} + \epsilon, \epsilon \sim N(0,40^2)
\end{align*}

To examine the order-preserving property in Proposition \ref{proposition:order-preserving}, we further construct variants A1/B1 and A2/B2 by assigning different group probabilities. Specifically, A1 and B1 satisfy the condition $\sum_{k \in [|C_L|]} p_{k|x} \le \frac{1}{1 + \alpha}$, whereas A2 and B2 violate this condition. In all simulations, we assume $X_{\text{A}} \perp D$ and $X_{\text{A}} \perp X_{\text{S}}$. Detailed probability assignments are provided in Appendix \ref{appendix:simulation-prob-assignment}.

All models are trained using two-layer feed-forward neural networks, with hidden layer sizes of $(5,5)$ for Dataset A and $(16,8)$ for Dataset B. Model training is performed by optimizing the Lagrangian associated with the constrained optimization problem defined by the objective (\ref{eq:objective-function}) and the constraints (\ref{eq:individual-solvency}), (\ref{eq:alpha-fairness-c1}), and (\ref{eq:alpha-fairness-c2}). The final model parameters are obtained by averaging across five random splits. 



Figure \ref{fig:toy_gt_equivalence_alpha} illustrates the behavior of $\alpha$-FISP using Dataset A1. Panel (a) shows that as $\alpha \to 0$, the premiums converge to the actuarially fair premiums defined in Definition~\ref{def:actuarial-fair-premium}, whereas panel (b) shows that as $\alpha \to 1$, they approach solidarity-fair premiums as defined in Definition~\ref{def:solidarity-premium}.

\begin{figure}[h]
    \begin{subfigure}{0.5\textwidth}
        \centering
        \includegraphics[width=\textwidth]{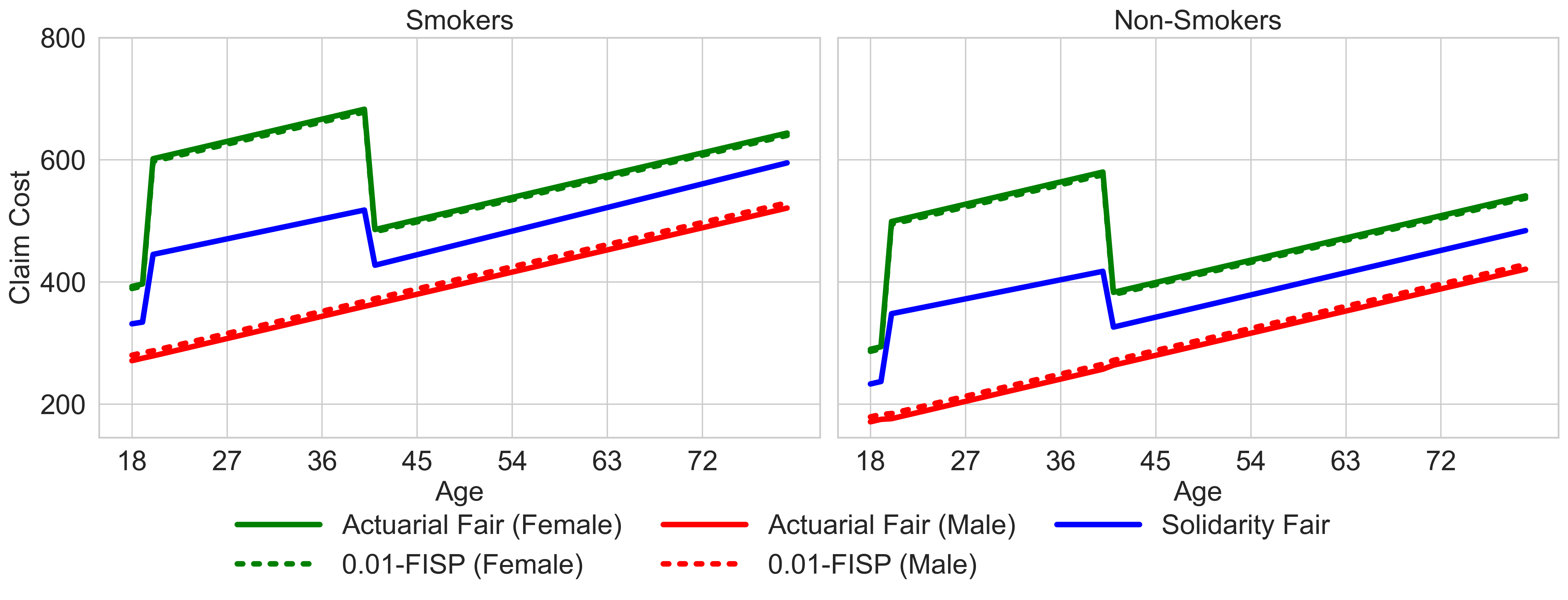}
        \caption{Convergence to actuarially fair premiums as $\alpha\to 0$}
        \label{subfig:toy_alpha0.01_gt_s}
    \end{subfigure}
    \begin{subfigure}{0.5\textwidth}
        \centering
        \includegraphics[width=\textwidth]{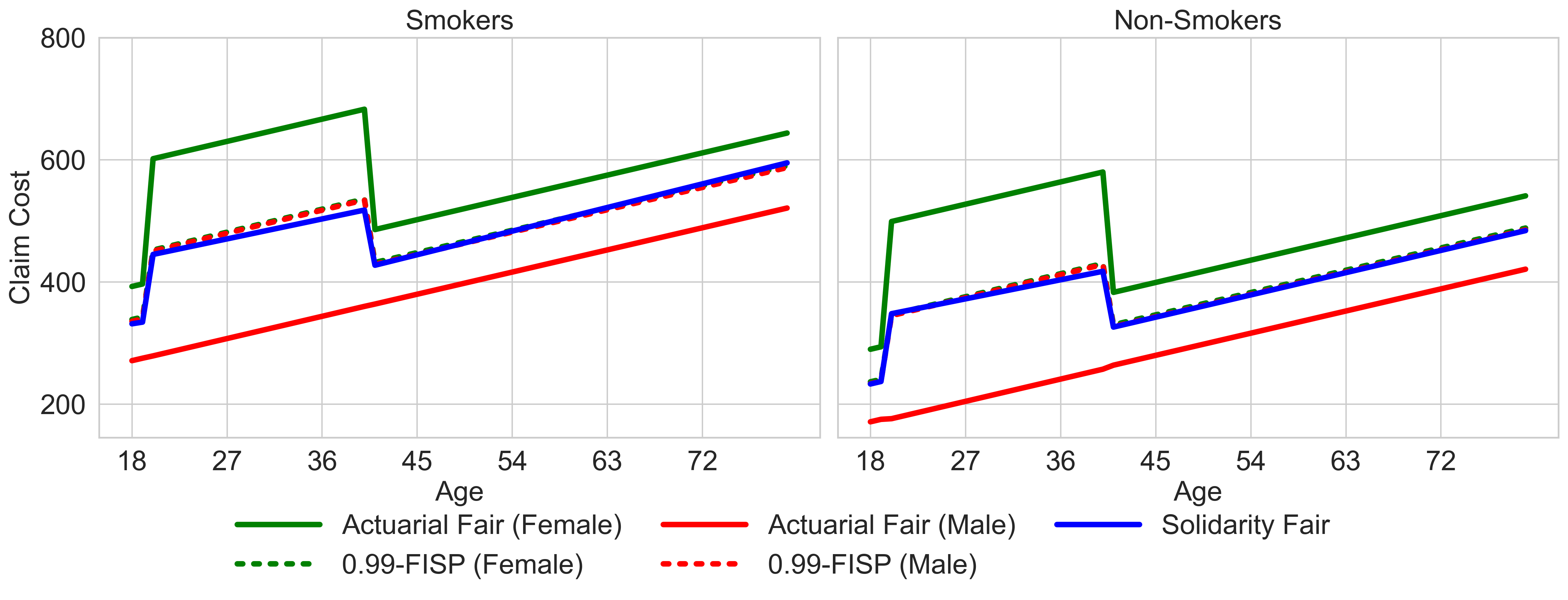}
        \caption{Convergence to solidarity-fair premiums as $\alpha\to 1$}
        \label{subfig:toy_alpha0.99_gt_s}
    \end{subfigure}
    \caption{$\alpha$-fair premiums at the two fairness extremes}
    \label{fig:toy_gt_equivalence_alpha}
\end{figure} 

Figure \ref{fig:toy_gt_alpha} illustrates the role of the low-risk mass condition $\sum_{k \in [|C_L|]} p_{k|x} \le \frac{1}{1 + \alpha}$ in ensuring order preservation for $\alpha$-fair premiums. Using a representative value of $\alpha=0.8$, panel (a), constructed using Dataset A1, shows that low-risk $\alpha$-FISP is bounded above by $\mu(x)$ and high-risk $\alpha$-FISP is bounded below by $\mu(x)$ when the condition is satisfied, while panel (b), based on Dataset A2, demonstrates that when the condition is violated, low-risk $\alpha$-FISP may exceed the solidarity fair premium $\mu(x)$. 

\begin{figure}[h]
    \begin{subfigure}{0.5\textwidth}
        \centering
        \includegraphics[width=\textwidth]{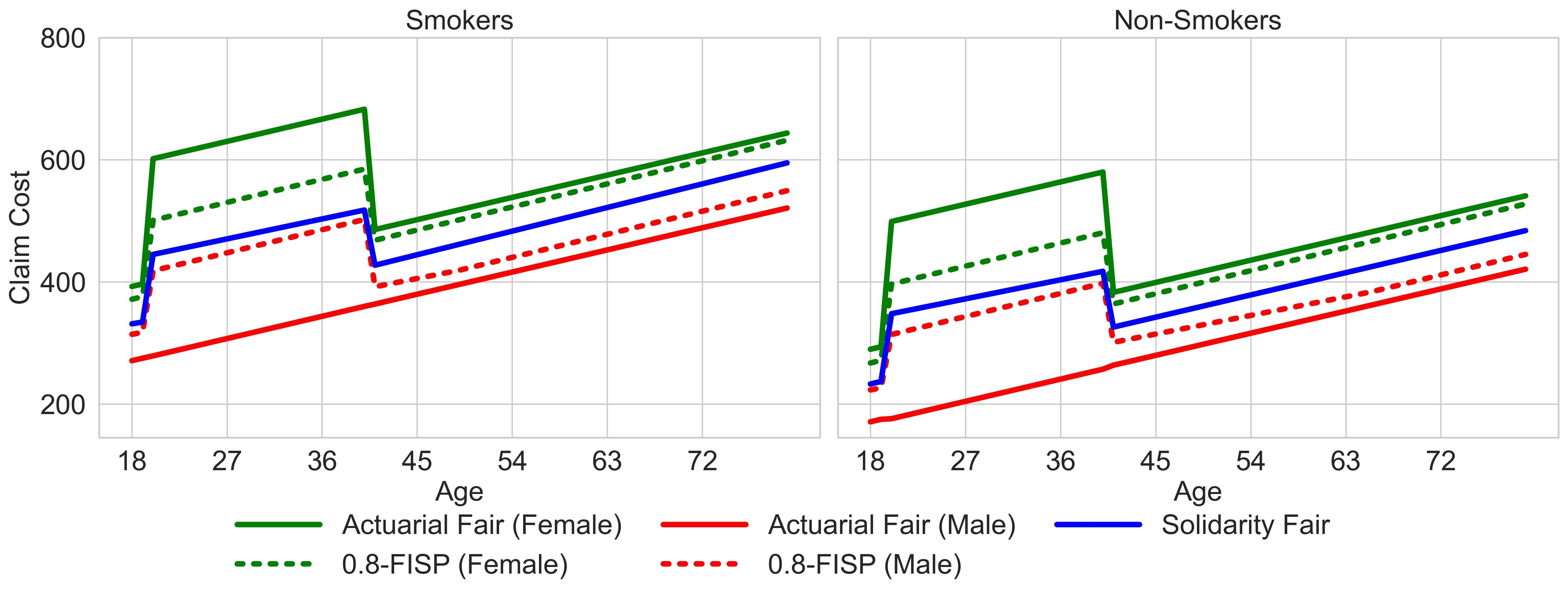}
        \caption{Condition satisfied ($P_L = 0.5 \le \frac{1}{1 + \alpha}$)}
        \label{subfig:toy_alpha0.8_gt_s}
    \end{subfigure}
    \begin{subfigure}{0.5\textwidth}
        \centering
        \includegraphics[width=\textwidth]{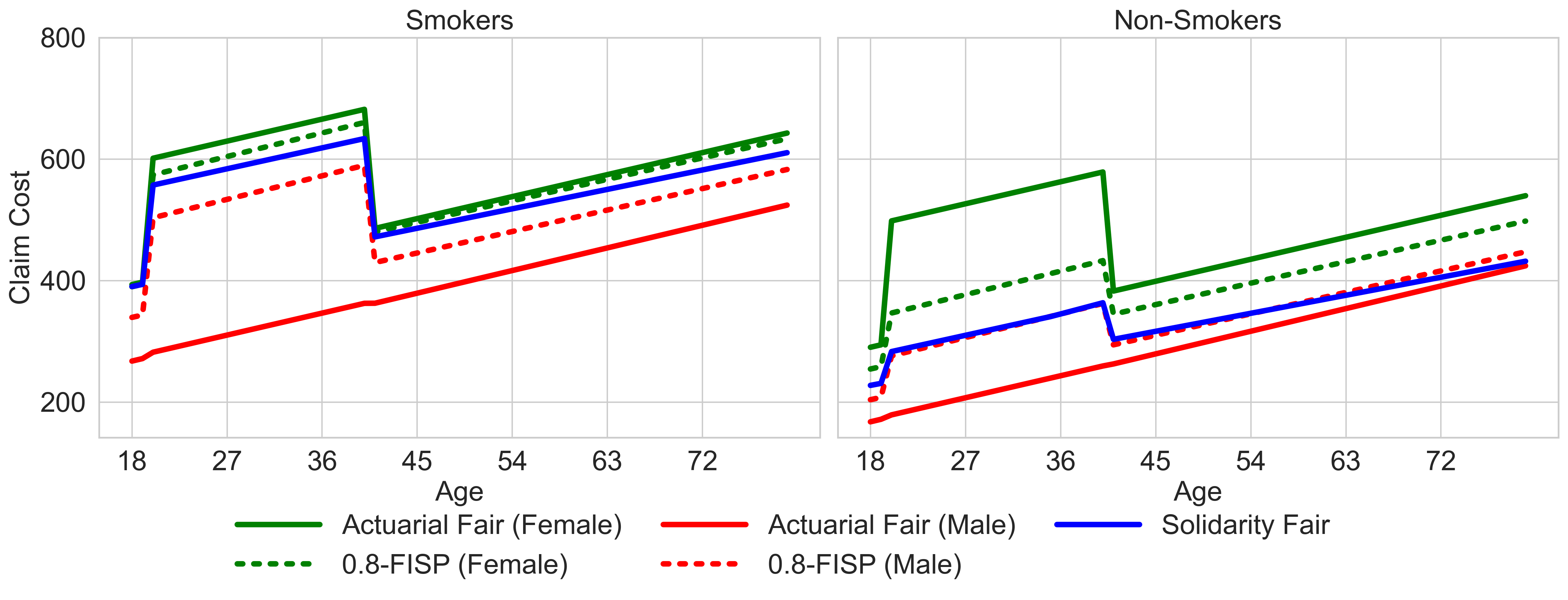}
        \caption{Condition violated ($P_L = 0.7 > \frac{1}{1 + \alpha}$)}
        \label{subfig:toy_alpha0.8_gt_v}
    \end{subfigure}
     \caption{Order preservation of $\alpha$-fair premiums for $|\cD| = 2$ ($\alpha=0.8$)}
    \label{fig:toy_gt_alpha}
\end{figure} 

Figure \ref{fig:toy3_gt_alpha} is analogous to Figure \ref{fig:toy_gt_alpha} with $\alpha = 0.8$, but is obtained using Datasets B1 and B2 for panels (a) and (b), respectively, and illustrates the general case with $|\cD| > 2$. The contrast is more pronounced in this multi-group setting: when the low-risk mass condition is violated, the low-risk $\alpha$-FISP exceeds the solidarity fair premium $\mu(x)$, underscoring the necessity of the condition stated in Proposition \ref{proposition:order-preserving} (ii). Nevertheless, the relative premium ordering is maintained in all cases, as proved in Proposition \ref{proposition:order-preserving-strict} (iii).

\begin{figure}[H]
    \begin{subfigure}{0.49\textwidth}
        \centering
        \includegraphics[width=\textwidth]{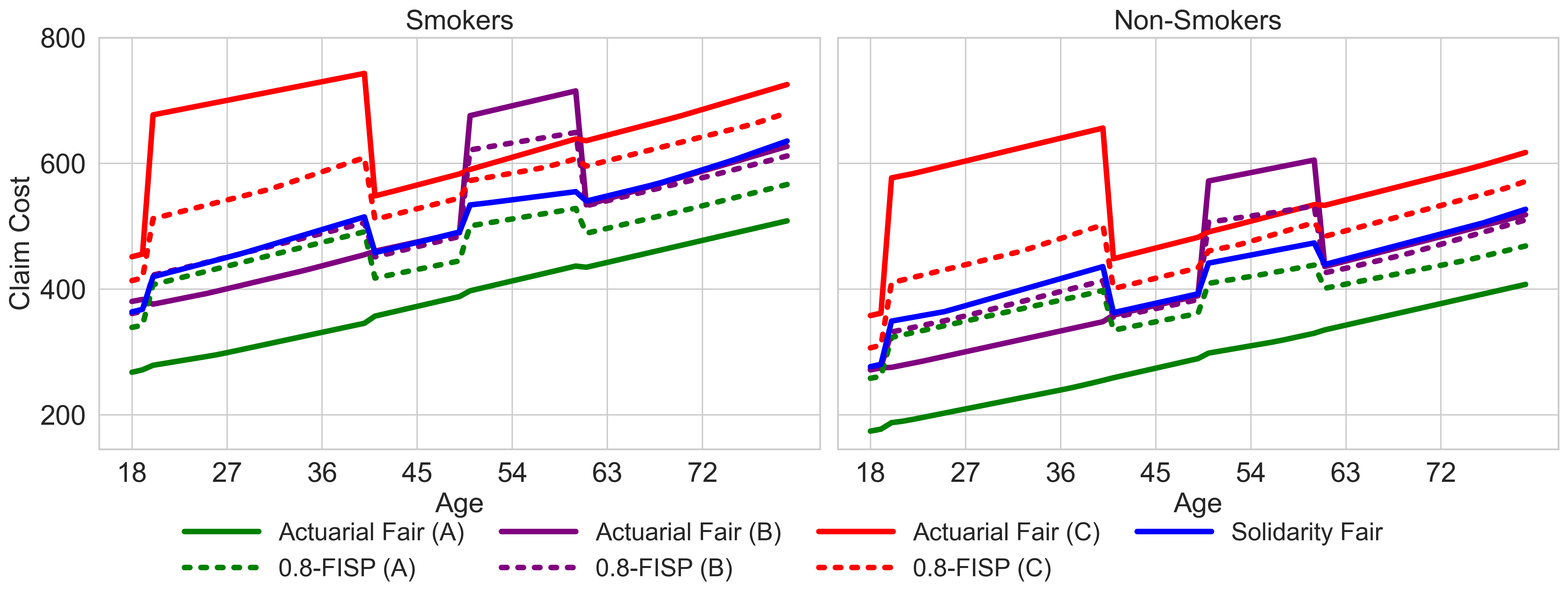}
        \caption{Condition satisfied ($P_L = \frac{1}{3} \le \frac{1}{1 + \alpha}$)}
        \label{subfig:toy3_alpha0.8_gt_s}
    \end{subfigure}
    \begin{subfigure}{0.49\textwidth}
        \centering
        \includegraphics[width=\textwidth]{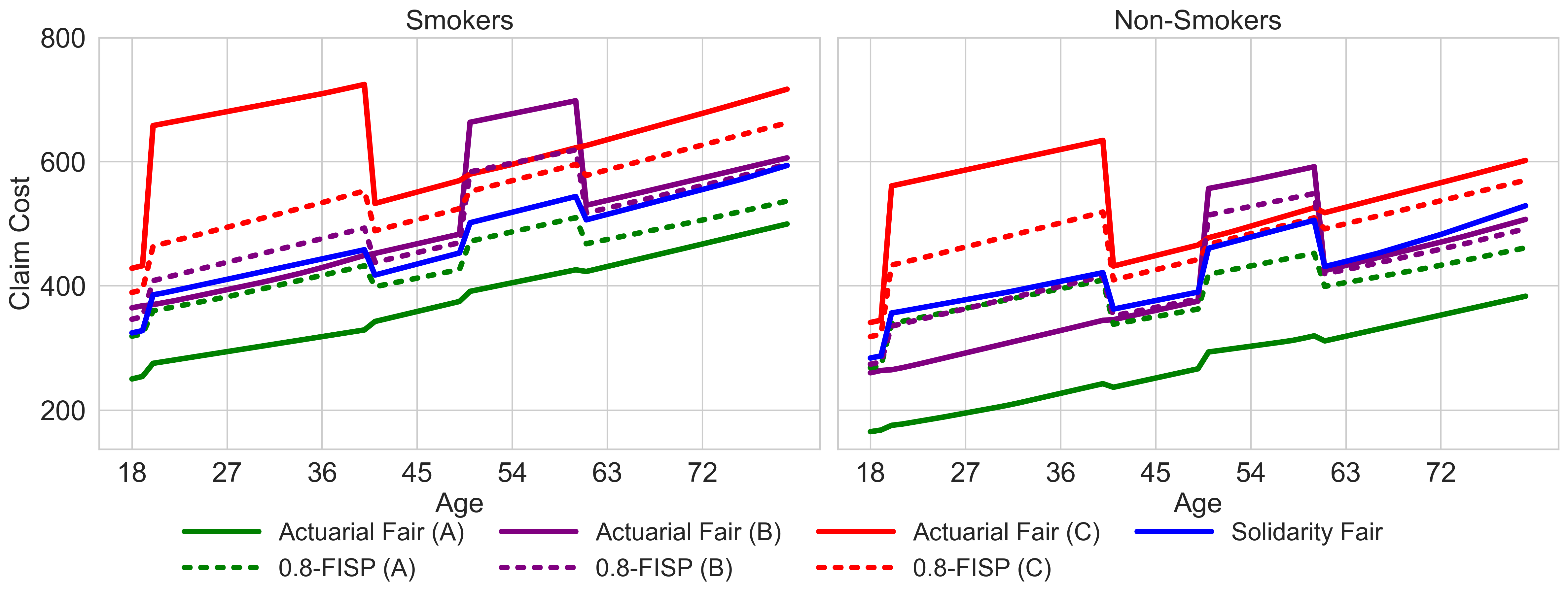}
        \caption{Condition violated ($P_L = 0.6 > \frac{1}{1 + \alpha}$)}
        \label{subfig:toy3_alpha0.8_gt_v}
    \end{subfigure}
    \caption{Order preservation of $\alpha$-fair premiums for $|\cD| > 2$}
    \label{fig:toy3_gt_alpha}
\end{figure} 

\subsection{Health Insurance}
\label{subsection: Health Insurance}

We analyze the U.S. Health Insurance dataset \citep{10.5555/2588158}, treating sex as the sensitive attribute. Reflecting common industry practices where tail risks are ceded to reinsurance, we restrict our study to the dense region of frequent claims. Within this region, we examine observations at the $25^{\text{th}}, 50^{\text{th}}$, and $75^{\text{th}}$ percentile of health expenditures as representative risk classes. As in Section \ref{subsection: Simulations}, we employ a two-layer feed-forward NN with $16$ and $8$ neurons in the first and second hidden layers, respectively. Figure \ref{fig:Health_alpha_sensitivity_50th_segments} displays $\alpha$-FISP for $\alpha \in \{0.25,0.5,0.75\}$, with actuarially fair and solidarity-fair premiums shown as benchmarks, for the risk class corresponding to the 50th percentile of health expenditures.
\begin{figure}[H]
    \centering
    \includegraphics[width=0.7\textwidth]{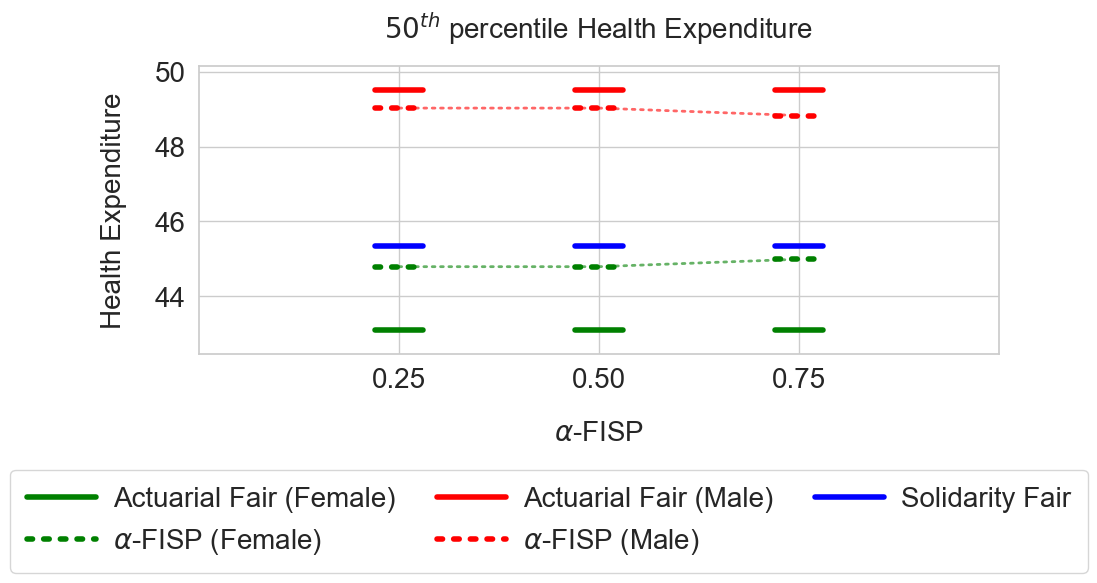}
    \caption{$\alpha$-fair premiums for the median health risk class}   
    \label{fig:Health_alpha_sensitivity_50th_segments}
\end{figure}
The results illustrate that increasing $\alpha$ tightens the fairness constraint, progressively shifting premiums from actuarial levels toward the solidarity benchmark. This visualization highlights the cross-subsidization mechanism embedded in the $\alpha$-fair framwork and empirically validates the order-preserving property established in Proposition \ref{proposition:order-preserving}. These findings are consistent with the simulation results in Section \ref{subsection: Simulations}, and analogous patterns for other expenditure percentiles are in Appendix \ref{appendix:deferred-health-insurance-figures}. 

\subsection{Solvency Margin and Choice of $\alpha$}
\label{subsection: Solvency Margin and Choice of alpha}

In this section, we first empirically compare $\alpha$-FISP with the widely cited fair insurance pricing method (DFIP) proposed by \cite{lindholm2022dfip}. Then, we provide some insight on how to choose $\alpha$ in practice.

\begin{table}[htbp]
    \centering
    \scriptsize
    \setlength{\tabcolsep}{3pt}
    \renewcommand{\arraystretch}{0.92}
    \resizebox{\textwidth}{!}{
    \begin{tabular}{llrrrrrr}
        \toprule
        \textbf{Condition} & \textbf{Metric}
        & \multicolumn{6}{c}{$\boldsymbol{\alpha}$} \\
        \cmidrule(lr){3-8}
        & & \textbf{0.01} & \textbf{0.20} & \textbf{0.40}
          & \textbf{0.60} & \textbf{0.80} & \textbf{0.99} \\
        \midrule
        \multirow{6}{*}{Satisfied}
        & $\alpha$-FISP Test MSE 
        & 1658.57 & 1659.62 & 1697.04 & 3349.90 & 7039.25 & 12798.40 \\
        & DFIP Test MSE 
        & 13081.87 & 13081.87 & 13081.87 & 13081.87 & 13081.87 & 13081.87 \\
        & $\alpha$-FISP Solvency Margin 
        & \textbf{2.16} & \textbf{2.20} & \textbf{3.33} & \textbf{9.35} & \textbf{10.72} & \textbf{7.09} \\
        & DFIP Solvency Margin 
        & -9.87 & -9.87 & -9.87 & -9.87 & -9.87 & -9.87 \\
        & $\alpha$-FISP Solvency Violation Rate (\%) 
        & 0.00 & 0.00 & 0.00 & 0.00 & 0.00 & 0.00 \\
        & $\alpha$-FISP Fairness Violation Rate (\%) 
        & 0.00 & 0.00 & 0.74 & 1.22 & 1.11 & 0.70 \\
        \midrule
        \multirow{6}{*}{Violated}
        & $\alpha$-FISP Test MSE 
        & 1674.11 & 1674.11 & 1687.53 & 3727.72 & 6555.59 & 11302.19 \\
        & DFIP Test MSE 
        & 12551.79 & 12551.79 & 12551.79 & 12551.79 & 12551.79 & 12551.79 \\
        & $\alpha$-FISP Solvency Margin 
        & \textbf{2.63} & \textbf{2.63} & \textbf{3.42} & \textbf{13.99} & \textbf{9.29} & \textbf{15.45} \\
        & DFIP Solvency Margin 
        & -7.17 & -7.17 & -7.17 & -7.17 & -7.17 & -7.17 \\
        & $\alpha$-FISP Solvency Violation Rate (\%) 
        & 0.00 & 0.00 & 0.00 & 0.00 & 0.00 & 0.00 \\
        & $\alpha$-FISP Fairness Violation Rate (\%) 
        & 0.00 & 0.00 & 0.73 & 0.88 & 0.57 & 0.00 \\
        \bottomrule
    \end{tabular}
    }
    \caption{$\alpha$-FISP and DFIP Comparison on Dataset A ($|\cD| = 2$)}
    \label{tab:toy_DFIP_comparison}
\end{table}

\begin{table}[htbp]
    \centering
    \scriptsize
    \setlength{\tabcolsep}{3pt}
    \renewcommand{\arraystretch}{0.92}
    \resizebox{0.75\textwidth}{!}{
    \begin{tabular}{lrrrrrr}
        \toprule
        \textbf{Metric}
        & \multicolumn{6}{c}{$\boldsymbol{\alpha}$} \\
        \cmidrule(lr){2-7}
        & \textbf{0.01} & \textbf{0.20} & \textbf{0.40}
        & \textbf{0.60} & \textbf{0.80} & \textbf{0.99} \\
        \midrule
        $\alpha$-FISP Test MSE
        & 10.41 & 10.41 & 10.41 & 10.41 & 10.70 & 17.03 \\
        DFIP Test MSE
        & 18.48 & 18.48 & 18.48 & 18.48 & 18.48 & 18.48 \\
        $\alpha$-FISP Solvency Margin
        & \textbf{0.68} & \textbf{0.68} & \textbf{0.68}
        & \textbf{0.68} & \textbf{0.62} & \textbf{1.32} \\
        DFIP Solvency Margin
        & -0.19 & -0.19 & -0.19 & -0.19 & -0.19 & -0.19 \\
        $\alpha$-FISP Solvency Violation Rate (\%)
        & 0.00 & 0.00 & 0.00 & 0.00 & 0.00 & 0.17 \\
        $\alpha$-FISP Fairness Violation Rate (\%)
        & 0.00 & 0.00 & 0.00 & 0.00 & 0.00 & 2.93 \\
        \bottomrule
    \end{tabular}
    }
    \caption{$\alpha$-FISP and DFIP Comparison on Healthcare Insurance Dataset}
    \label{tab:health_DFIP_comparison}
\end{table}

Table \ref{tab:toy_DFIP_comparison} and \ref{tab:health_DFIP_comparison} show that $\alpha$-FISP maintains a positive solvency margin across all fairness budgets, while DFIP produced insolvent premiums. This signifies one key merit of $\alpha$-FISP: explicitly incorporating the critical solvency requirement in risk pricing, which is overlooked in the current fair insurance pricing literature. While Corollary \ref{corollary:constraint-violation-rate} suggests no out-of-sample fairness violation assuming the model class fully contains $\cF_\alpha$, the positive fairness violation rates here are due to model mis-specification-the model class specified may not fully contain $\cF_{\alpha}$.

\begin{figure}[H]
    \begin{subfigure}{0.33\textwidth}
        \centering
        \includegraphics[width=\textwidth]{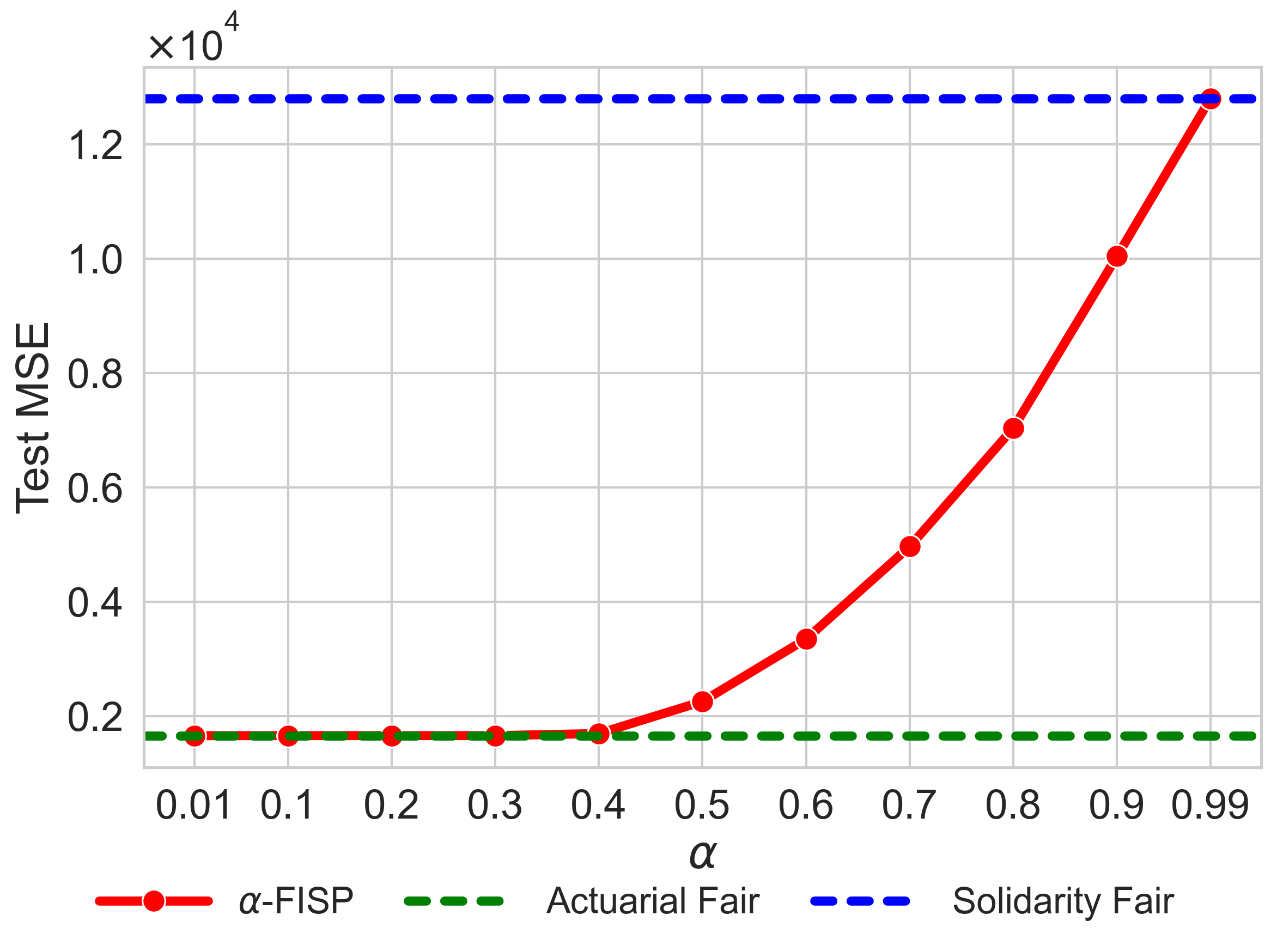}
        \caption{Dataset A1}
        \label{fig:toy_mse_benchmark_s}
    \end{subfigure}
    \begin{subfigure}{0.33\textwidth}
        \centering
        \includegraphics[width=\textwidth]{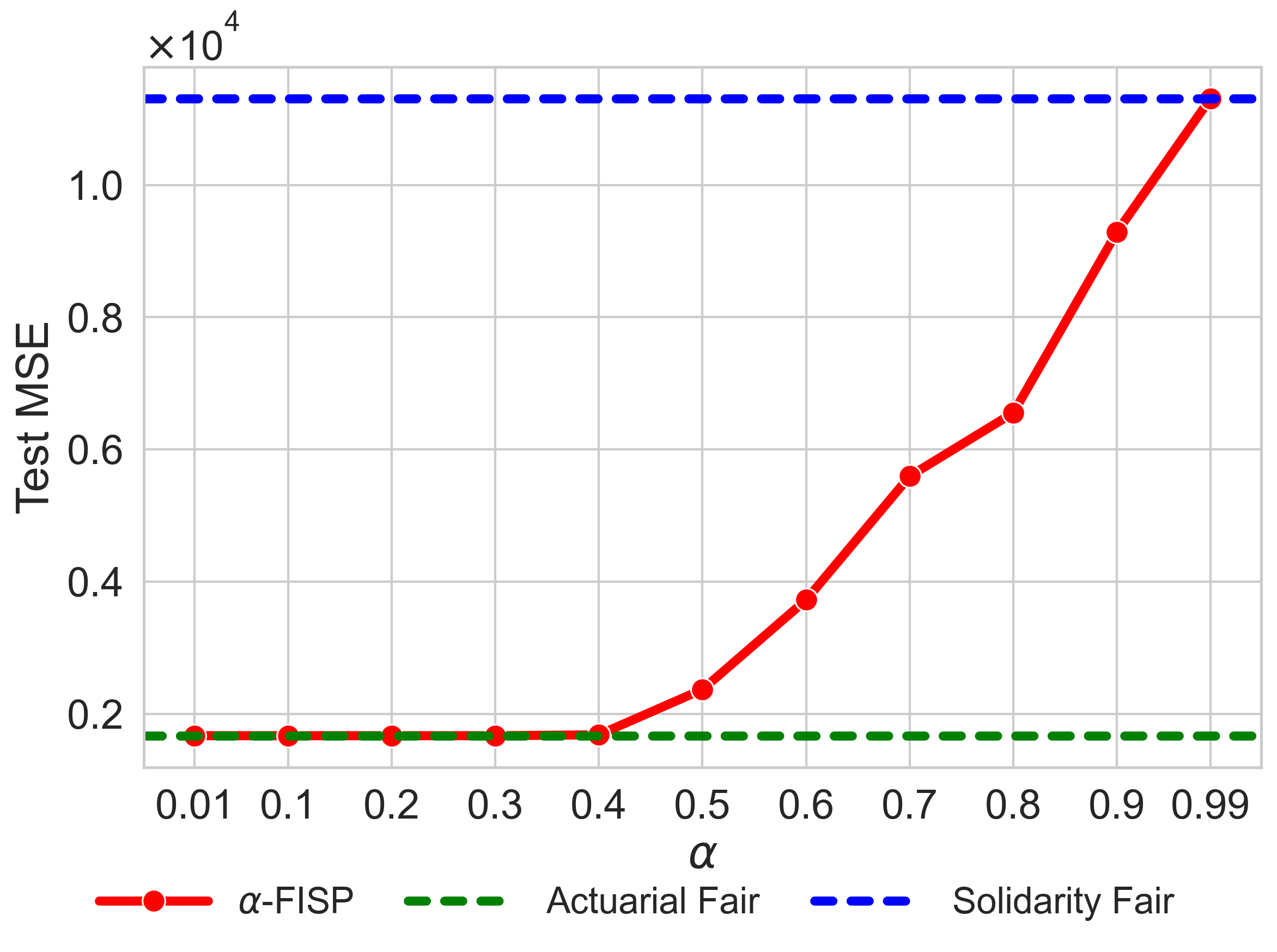}
        \caption{Dataset A2}
        \label{fig:toy_mse_benchmark_v}
    \end{subfigure}
    \begin{subfigure}{0.33\textwidth}
        \centering
        \includegraphics[width=\textwidth]{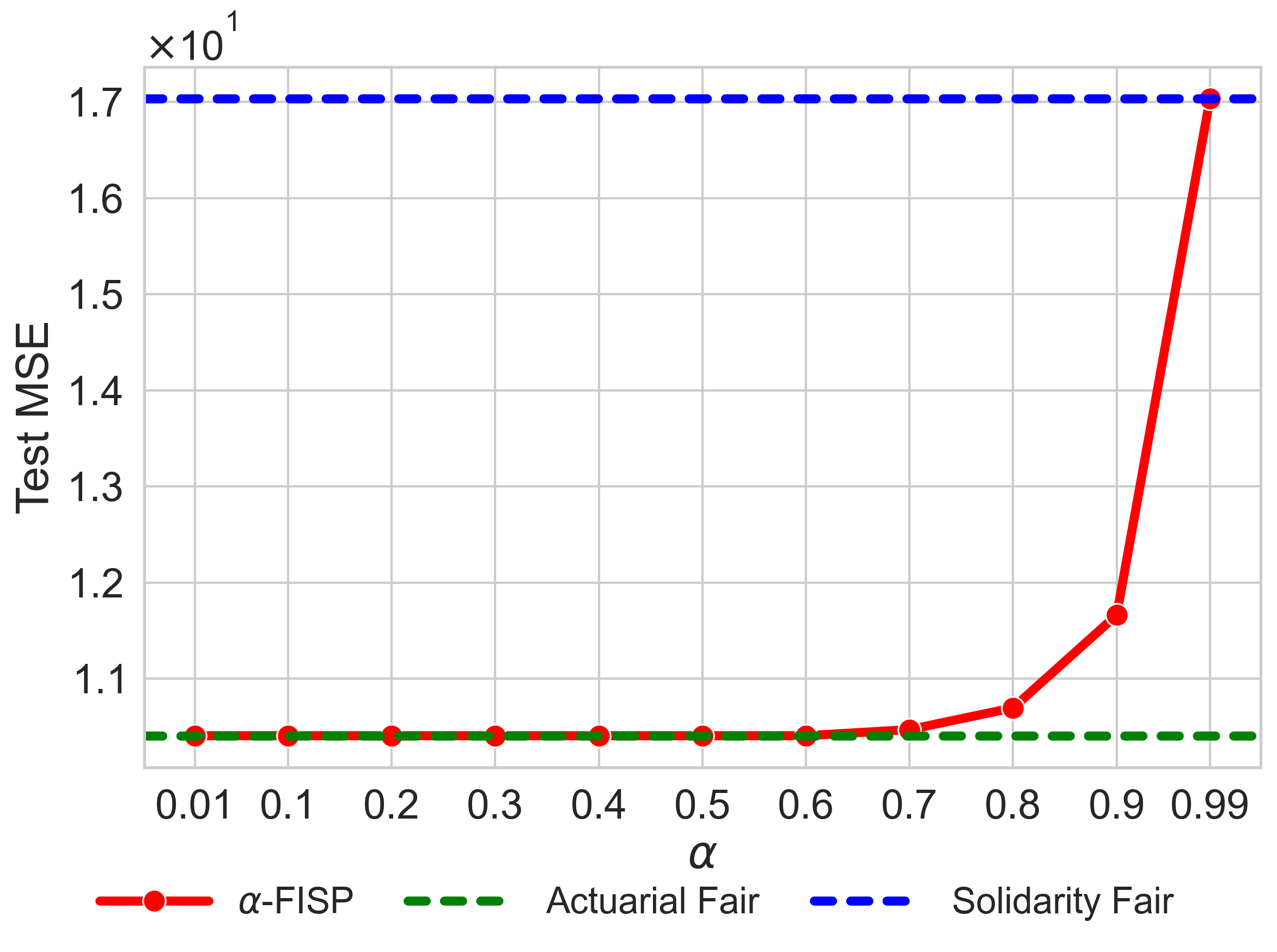}
        \caption{Healthcare Insurance Dataset}
        \label{fig:health_mse_benchmark}
    \end{subfigure}
    \caption{MSE vs. $\alpha$ Plots}
    \label{fig:toy_health_mse_benchmark}
\end{figure} 

Figure \ref{fig:toy_mse_benchmark_s}, \ref{fig:toy_mse_benchmark_v}, and \ref{fig:health_mse_benchmark} show different behaviors of MSE loss as $\alpha$ increases. This suggests that the sensitivity of MSE on $\alpha$ is highly dependent on the underlying data distribution. Therefore, creating such plots may be a necessary first step in selecting an optimal $\alpha$ that best balances two competing fairness notions. For example, comparing the optimal $\alpha$ for dataset A and the Healthcare insurance dataset, a smaller $\alpha$ may be more defensible for Dataset A, while a larger $\alpha$ may be more defensible since the MSE does not rise significantly even for $\alpha = 0.9$.

\section{Conclusion}
This paper introduces $\alpha$-fair insurance pricing, a principled framework that formalizes fairness in insurance as a continuum between actuarial and solidarity perspectives. By directly regulating premium dispersion within risk classes, $\alpha$-fairness captures the economic substance of fairness in pricing decisions, rather than relying on indirect predictive or demographic criteria. We formulate pricing as a constrained optimization problem, yielding the $\alpha$-FISP as a flexible and tractable solution. We establish theoretical guarantees and show through numerical experiments that the framework is computationally efficient and compatible with heterogeneous regulatory environments. More broadly, this work reframes fairness in insurance pricing as an allocation problem, offering a transparent and interpretable mechanism for navigating regulatory and ethical trade-offs.

\newpage
\bibliographystyle{chicago}
\bibliography{neurips_2026}


\newpage
\appendix

\section{Appendix A: Deferred Examples}
\label{appendix:deferred-examples}

\subsection{Proposition \ref{proposition:order-preserving} (ii) Example}
\label{example:proposition:order-preserving-upper-bound}
We present a simple numeric example in this section. Consider a simplified setting with a single risk class (ignoring non-sensitive risk class $x$ for brevity) and a binary sensitive attribute $D \in \{L,H\}$, representing Low-risk and High-risk groups. Assume actuarial fair premiums, $\mu_L = 10$ and $\mu_H = 100$ for each class, and a fairness budget $\alpha = 0.5$. According to Proposition \ref{proposition:order-preserving} (ii), we calculate the sufficient cutoff $p_L \le \frac{1}{1+\alpha} = \frac{1}{1+0.5}$, which is approximately 0.67. We compare two scenarios varying the population mix $p_L = \bbP(D=L)$.

\textbf{Case 1: Condition Satisfied ($p_L = 0.5$)}. We assume low-risk mass equals to 0.5, which is below the threshold 0.67. Solidarity premium, $\mu$, is calculated through a weighted average: $0.5 \cdot10+(1-0.5)\cdot100 = 55$. Recall optimal solution is defined as the minimizer of weighted loss function $L = p_L\cdot(f_L-\mu_L)^2+p_H\cdot(f_H-\mu_H)^2$. Under $\alpha$-FISP constraint, we have $f_L=\alpha \cdot f_H$. Then, re-formulate loss function to $L(f_H) = p_L\cdot(\alpha \cdot f_H-\mu_L)^2+p_H\cdot(f_H-\mu_H)^2$, and take derivative respect to $f_H$, we have $f_H^* = \cfrac{\alpha p_L\mu_L+p_H\mu_H}{\alpha^2 p_L + p_H}$. Plug-in numerical values, we have $f_H^* =84$, and $f_L^* =42$. We can see solvency constraint is naturally satisfied as $0.5\cdot84+0.5\cdot42 = 63 \ge 55$. We also observe that $f_L^* \le \mu$. The result is intuitive: the fair premium for the low-risk group is subsidized but remains below the population average.

\textbf{Case 2: Condition Violated ($p_L = 0.9$)}. Now the low-risk group dominates the population ($0.9 > 0.67$). Use similar formulation as in previous case, $\mu$ equals to 19; $f_H^*$ equals to approximately  44.62; $f_L^*$ equals to approximately 22.31. Individual solvency is satisfied ($0.9\cdot22.31+0.1\cdot 44.62 \approx 24.54 \ge 19$), but notice that $f_L^* > \mu$. $f_L^* < f_H^*$ confirms result Proposition \ref{proposition:order-preserving} (iii), but low-risk mass domination, under $\alpha$-fair constraint, leads to price paid by low-risk individuals exceed the solidarity benchmark.

Through this illustrative example, we demonstrate the practical implication of Proposition \ref{proposition:order-preserving}. When the low-risk mass condition is satisfied, the $\alpha$-FISP framework performs ideally, yielding a premium structure that naturally interpolates between actuarial and solidarity fairness. When the low-risk mass condition is violated, rigid adherence to a proportional fairness in skewed populaitons, especially when ensure myopic fair forcing low-risk premium equivalent to high-risk premium, may unintentionally over-penalize low-risk individuals, creating a scenario where the "fair" price is paradoxically more expensive than a fully solidified price. Our $\alpha$-FISP flexibility prevents the over-penalization of low-risk individuals that often arises from binary notion of fairness.

\newpage
\section{Appendix B: Deferred Figures}
\label{appendix:deferred-figures}

\subsection{Simulation Probability Assignment}
\label{appendix:simulation-prob-assignment}

In A1 (condition satisfied): $\bbP(X_{\text{S}} = \text{S}) = 0.5$ and $\bbP(D = \text{F} | X_{\text{S}} = \text{S}) = \bbP(D = \text{F} | X_{\text{S}} = \text{NS}) = 0.5$. 

In A2 (condition violated): $\bbP(X_{\text{S}} = \text{S}) = 0.3$, $\bbP(D = \text{F} | X_{\text{S}} = \text{S}) = 0.8$, and $\bbP(D = \text{F} | X_{\text{S}} = \text{NS}) = 0.3$. 

In B1 (condition satisfied): $\bbP(X_{\text{S}} = \text{S}) = 0.5$ and $\bbP(D = \text{A} | X_{\text{S}} = \text{S}) = \bbP(D = \text{B} | X_{\text{S}} = \text{S}) = \bbP(D = \text{C} | X_{\text{S}} = \text{S}) = \frac{1}{3}$. 

In B2 (condition violated): $\bbP(X_{\text{S}} = \text{S}) = 0.5$, $\bbP(D = \text{A} | X_{\text{S}} = \text{S}) = 0.6$, and $\bbP(D = \text{B} | X_{\text{S}} = \text{S}) = \bbP(D = \text{C} | X_{\text{S}} = \text{S}) = 0.2$

\newpage
\subsection{Simulation Figures (D = 2)}
\label{appendix:deferred-simulation-figures_D=2}

we present $\alpha$-FISP under varying fairness budget $\alpha$ for datasets A1 and A2 (i.e. $|\cD| = 2$) in the following:

\begin{figure}[H]
    \begin{subfigure}{0.5\textwidth}
        \centering
        \includegraphics[width=\textwidth]{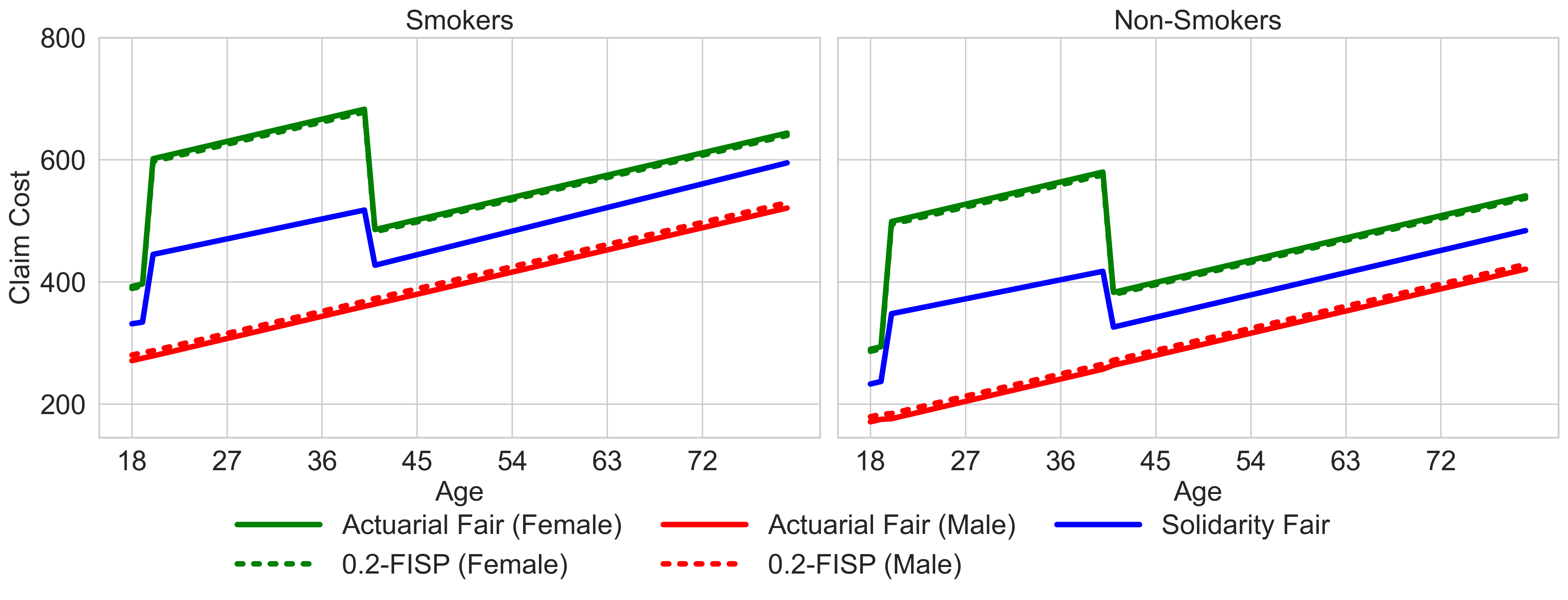}
        \caption{Condition satisfied ($P_L = 0.5 \le \frac{1}{1 + \alpha}$)}
        \label{subfig:toy_alpha0.2_gt_s}
    \end{subfigure}
    \begin{subfigure}{0.5\textwidth}
        \centering
        \includegraphics[width=\textwidth]{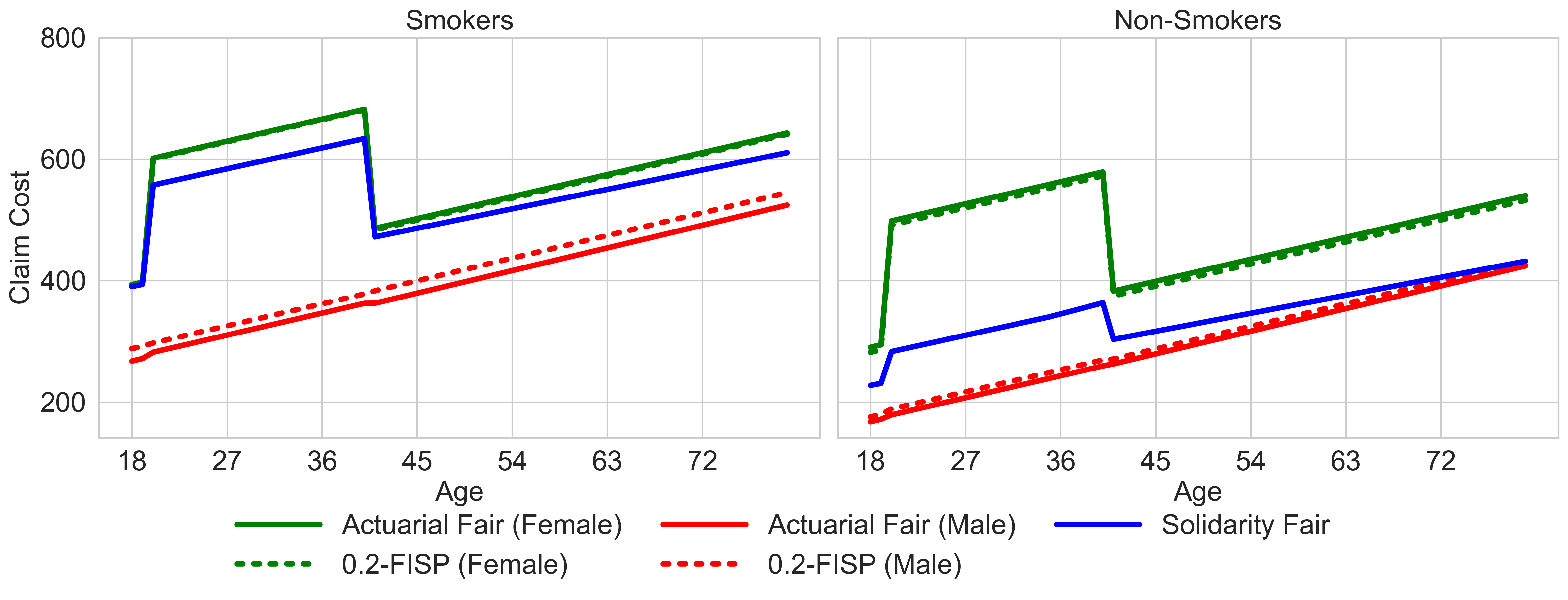}
        \caption{Condition satisfied ($P_L = 0.7 \le \frac{1}{1 + \alpha}$)}
        \label{subfig:toy_alpha0.2_gt_v}
    \end{subfigure}
     \caption{Order preservation of $\alpha$-fair premiums for $|\cD| = 2$ and $\alpha = 0.2$}
    \label{fig:toy_gt_alpha_0.2}
\end{figure} 

\begin{figure}[H]
    \begin{subfigure}{0.5\textwidth}
        \centering
        \includegraphics[width=\textwidth]{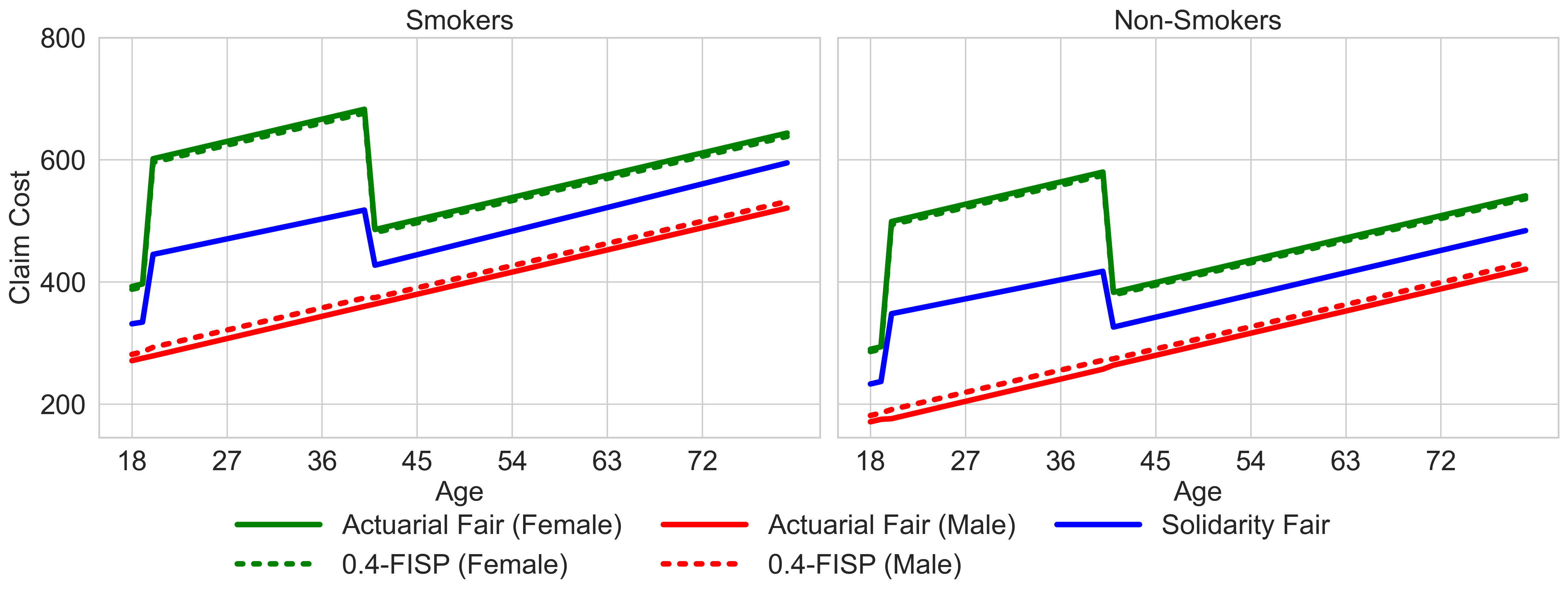}
        \caption{Condition satisfied ($P_L = 0.5 \le \frac{1}{1 + \alpha}$)}
        \label{subfig:toy_alpha0.4_gt_s}
    \end{subfigure}
    \begin{subfigure}{0.5\textwidth}
        \centering
        \includegraphics[width=\textwidth]{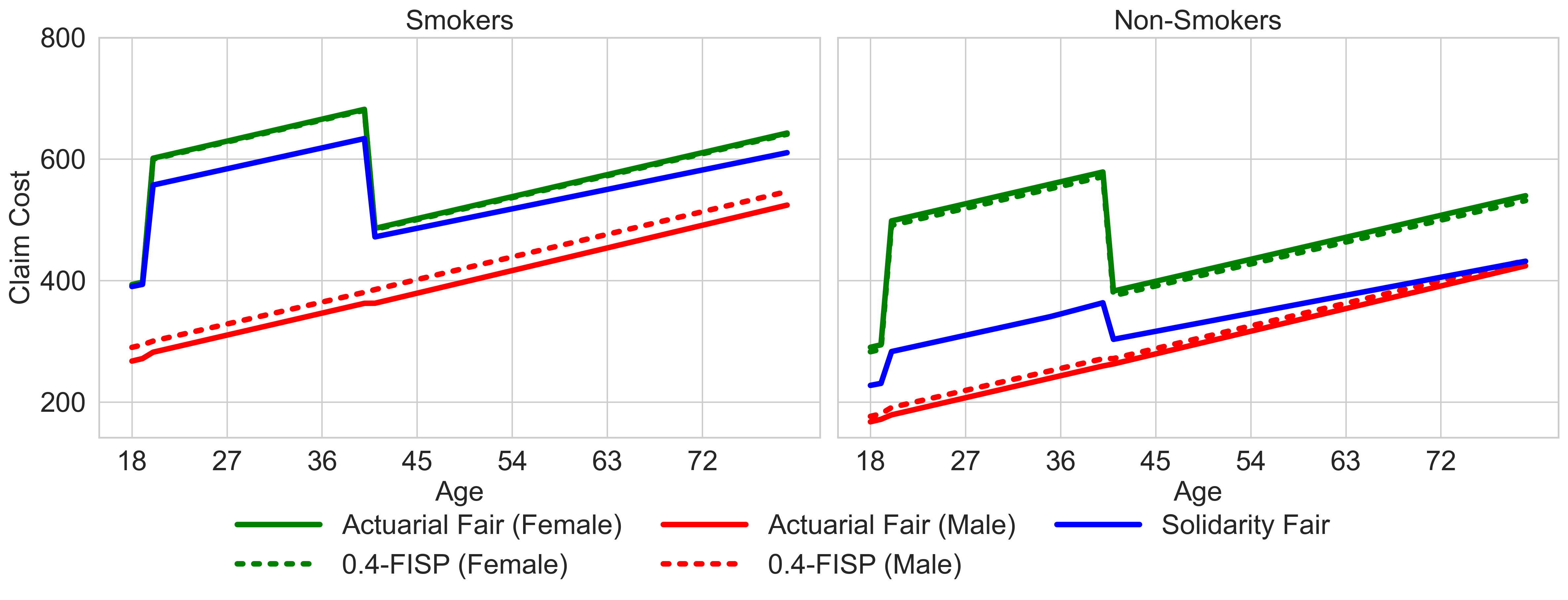}
        \caption{Condition satisfied ($P_L = 0.7 \le \frac{1}{1 + \alpha}$)}
        \label{subfig:toy_alpha0.4_gt_v}
    \end{subfigure}
     \caption{Order preservation of $\alpha$-fair premiums for $|\cD| = 2$ and $\alpha = 0.4$}
    \label{fig:toy_gt_alpha_0.4}
\end{figure} 

\begin{figure}[H]
    \begin{subfigure}{0.5\textwidth}
        \centering
        \includegraphics[width=\textwidth]{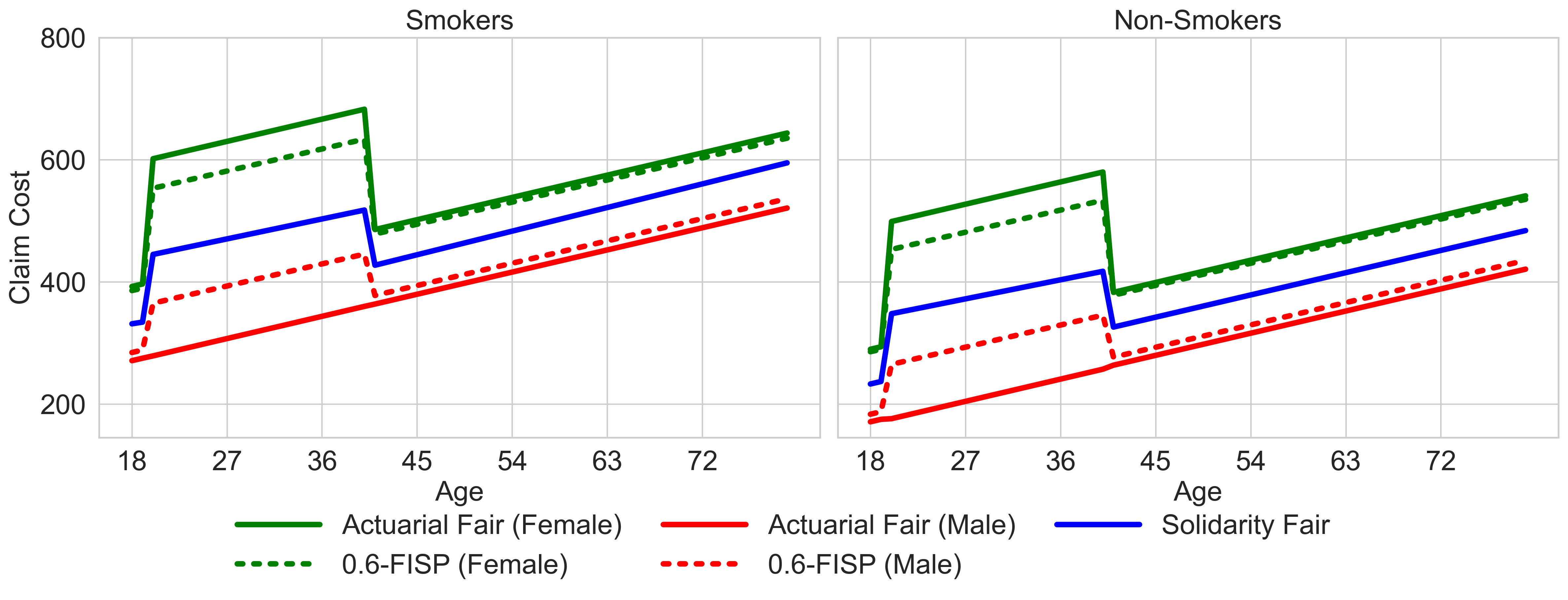}
        \caption{Condition satisfied ($P_L = 0.5 \le \frac{1}{1 + \alpha}$)}
        \label{subfig:toy_alpha0.6_gt_s}
    \end{subfigure}
    \begin{subfigure}{0.5\textwidth}
        \centering
        \includegraphics[width=\textwidth]{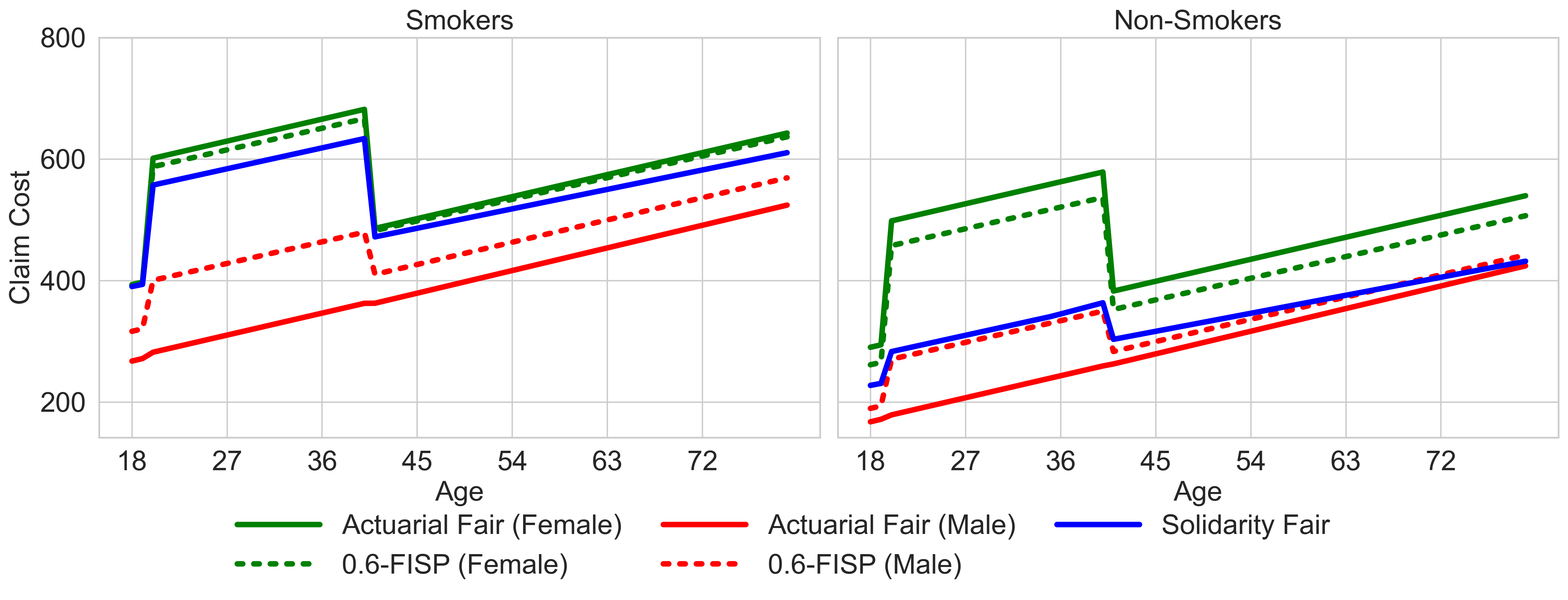}
        \caption{Condition violated ($P_L = 0.7 > \frac{1}{1 + \alpha}$)}
        \label{subfig:toy_alpha0.6_gt_v}
    \end{subfigure}
     \caption{Order preservation of $\alpha$-fair premiums for $|\cD| = 2$ and $\alpha = 0.6$}
    \label{fig:toy_gt_alpha_0.6}
\end{figure} 

\begin{figure}[H]
    \begin{subfigure}{0.5\textwidth}
        \centering
        \includegraphics[width=\textwidth]{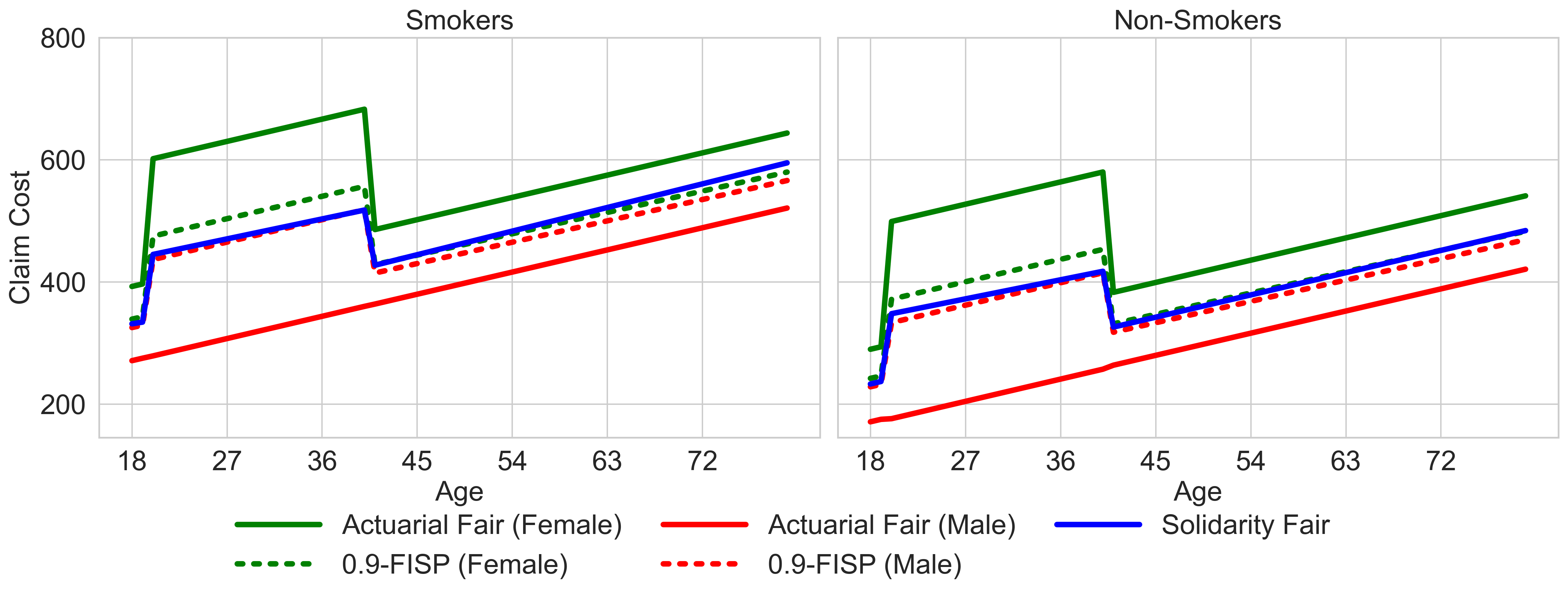}
        \caption{Condition satisfied ($P_L = 0.5 \le \frac{1}{1 + \alpha}$)}
        \label{subfig:toy_alpha0.9_gt_s}
    \end{subfigure}
    \begin{subfigure}{0.5\textwidth}
        \centering
        \includegraphics[width=\textwidth]{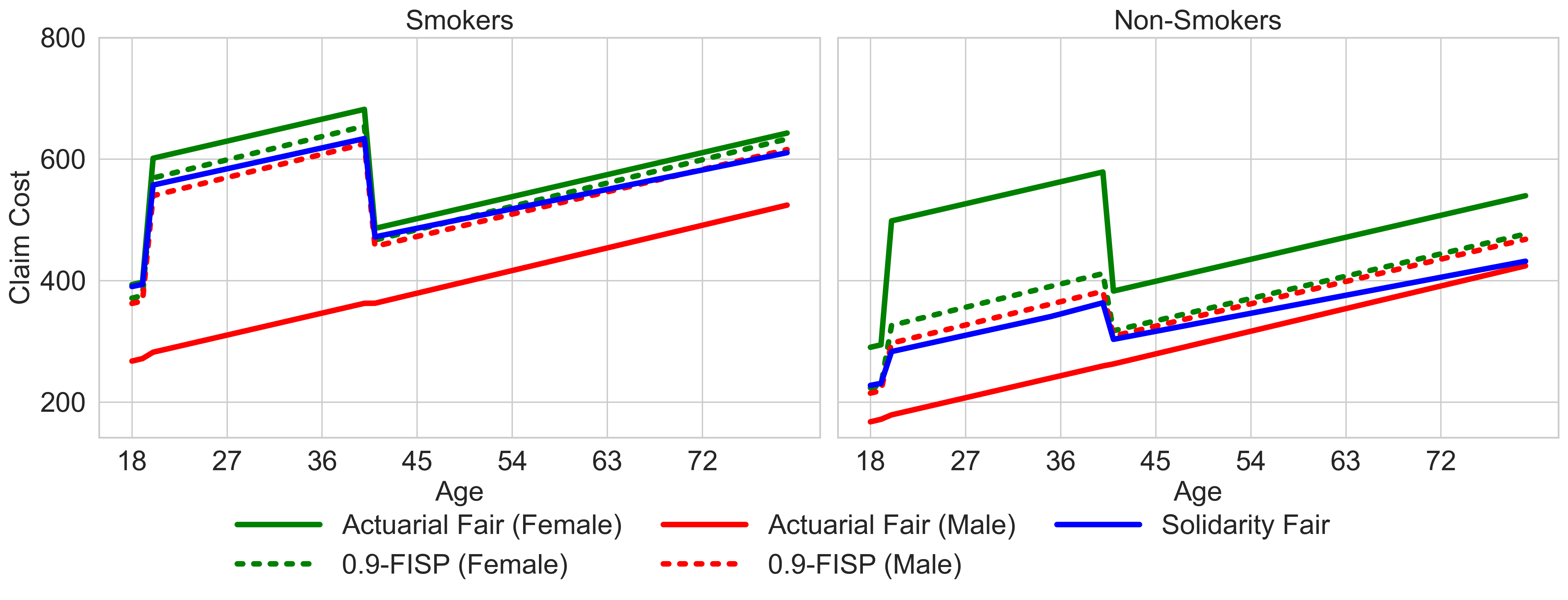}
        \caption{Condition violated ($P_L = 0.7 > \frac{1}{1 + \alpha}$)}
        \label{subfig:toy_alpha0.9_gt_v}
    \end{subfigure}
     \caption{Order preservation of $\alpha$-fair premiums for $|\cD| = 2$ and $\alpha = 0.9$}
    \label{fig:toy_gt_alpha_0.9}
\end{figure} 

\newpage
\subsection{Simulation Figures (D > 2)}
\label{appendix:deferred-simulation-figures_D>2}

We then present $\alpha$-FISP under varying fairness budget $\alpha$ for datasets B1 and B2 (i.e. $|\cD| > 2$) in the following:

\begin{figure}[H]
    \begin{subfigure}{0.5\textwidth}
        \centering
        \includegraphics[width=\textwidth]{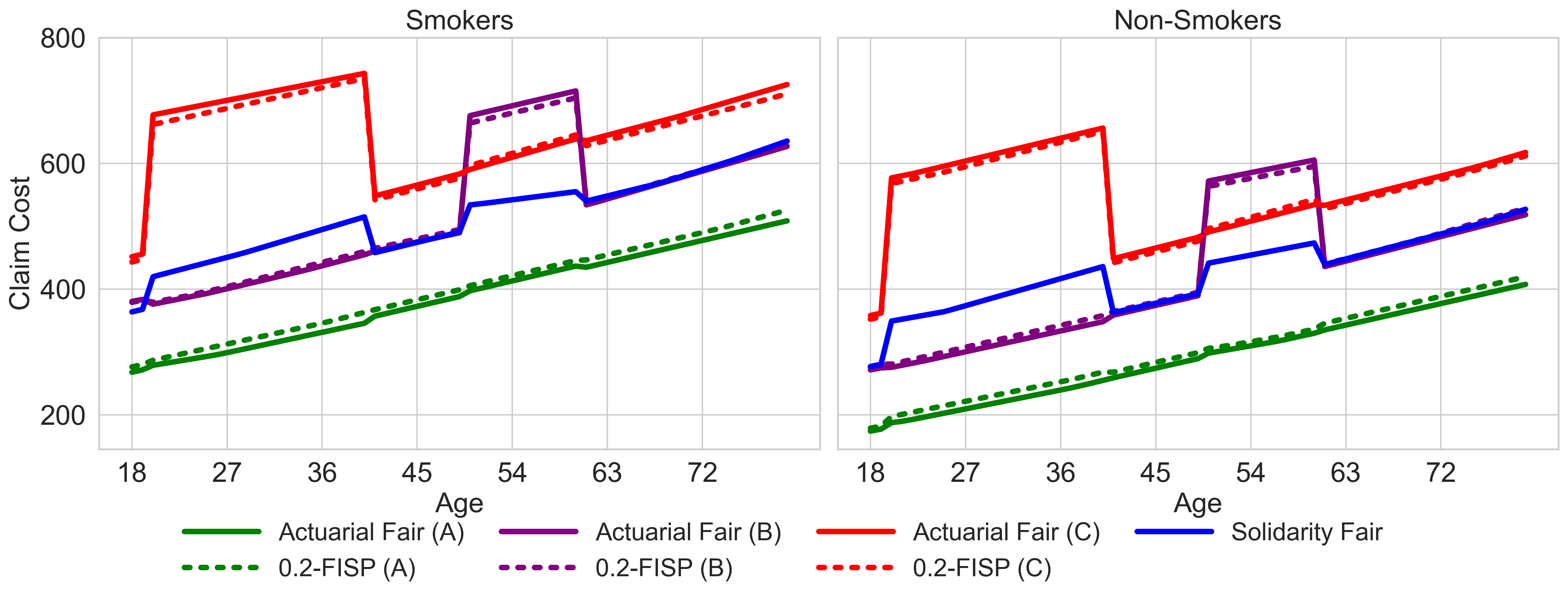}
        \caption{Condition satisfied ($P_L = \frac{1}{3} \le \frac{1}{1 + \alpha}$)}
        \label{subfig:toy3_alpha0.2_gt_s}
    \end{subfigure}
    \begin{subfigure}{0.5\textwidth}
        \centering
        \includegraphics[width=\textwidth]{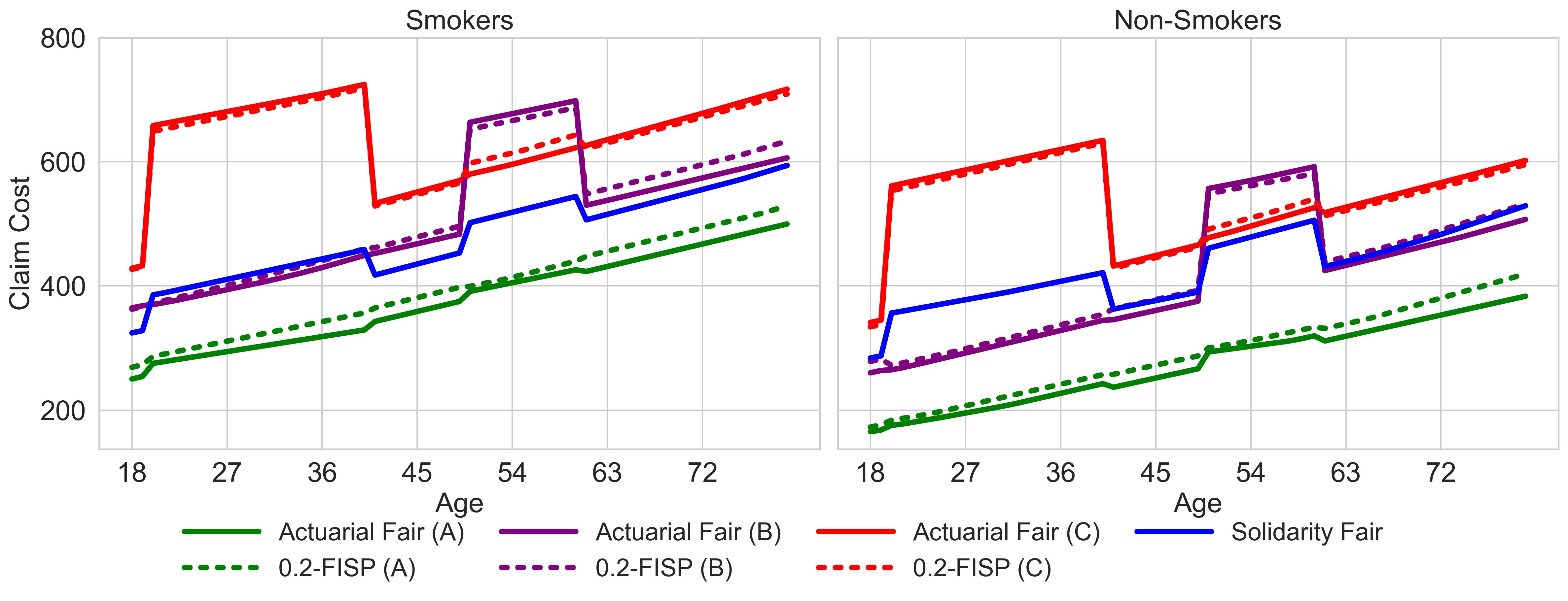}
        \caption{Condition satisfied ($P_L = 0.6 \le \frac{1}{1 + \alpha}$)}
        \label{subfig:toy3_alpha0.2_gt_v}
    \end{subfigure}
     \caption{Order preservation of $\alpha$-fair premiums for $|\cD| > 2$ and $\alpha = 0.2$}
    \label{fig:toy3_gt_alpha_0.2}
\end{figure} 

\begin{figure}[H]
    \begin{subfigure}{0.5\textwidth}
        \centering
        \includegraphics[width=\textwidth]{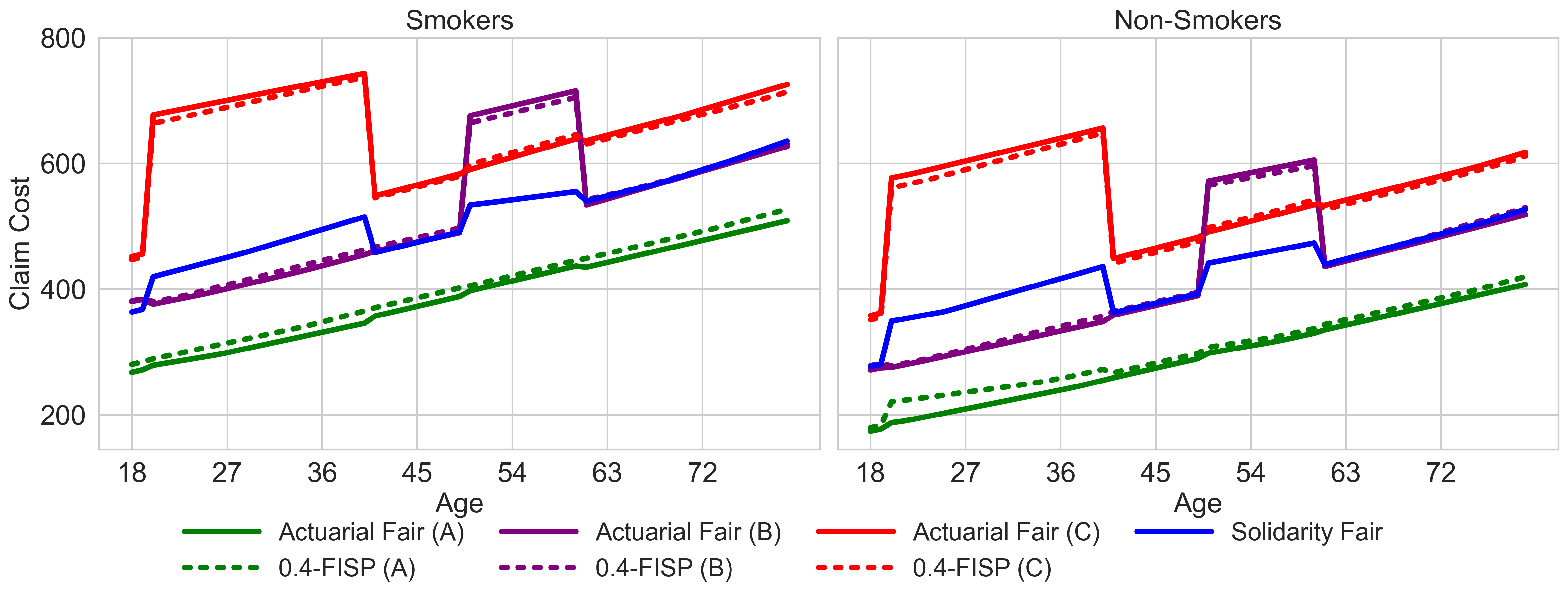}
        \caption{Condition satisfied ($P_L = \frac{1}{3} \le \frac{1}{1 + \alpha}$)}
        \label{subfig:toy3_alpha0.4_gt_s}
    \end{subfigure}
    \begin{subfigure}{0.5\textwidth}
        \centering
        \includegraphics[width=\textwidth]{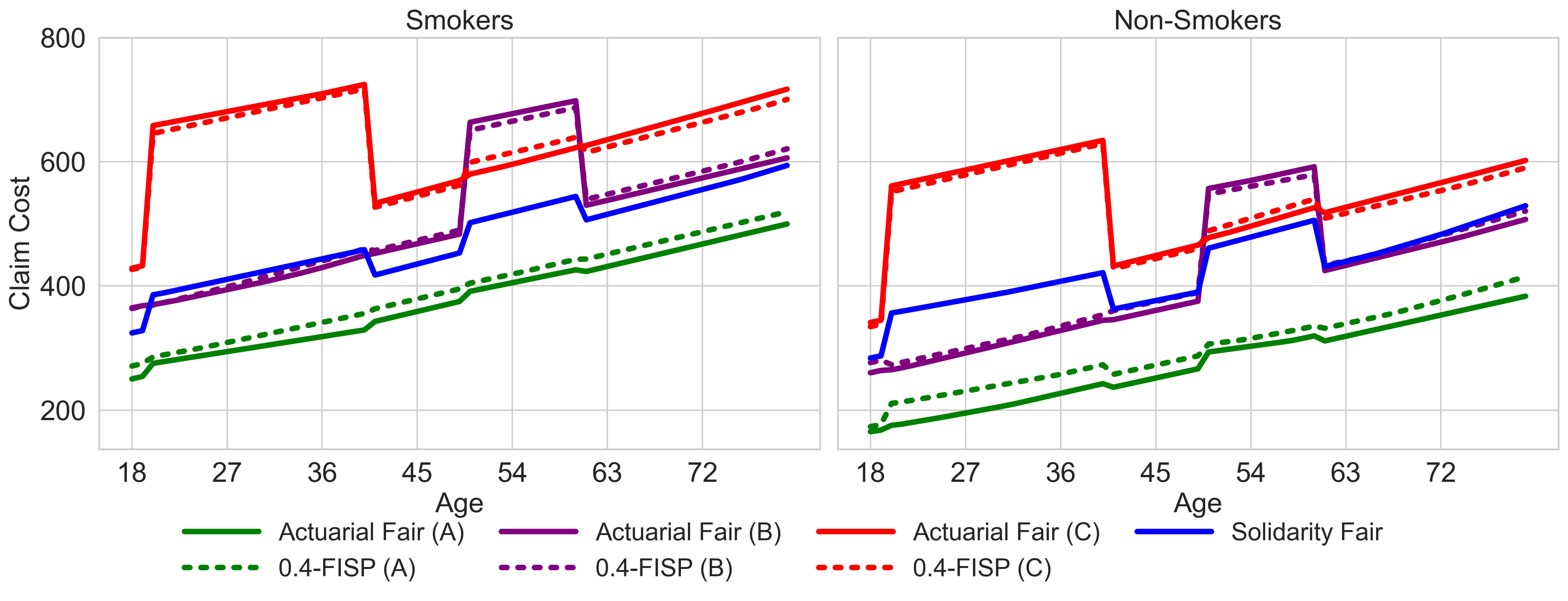}
        \caption{Condition satisfied ($P_L = 0.6 \le \frac{1}{1 + \alpha}$)}
        \label{subfig:toy3_alpha0.4_gt_v}
    \end{subfigure}
     \caption{Order preservation of $\alpha$-fair premiums for $|\cD| > 2$ and $\alpha = 0.4$}
    \label{fig:toy3_gt_alpha_0.4}
\end{figure} 

\begin{figure}[H]
    \begin{subfigure}{0.5\textwidth}
        \centering
        \includegraphics[width=\textwidth]{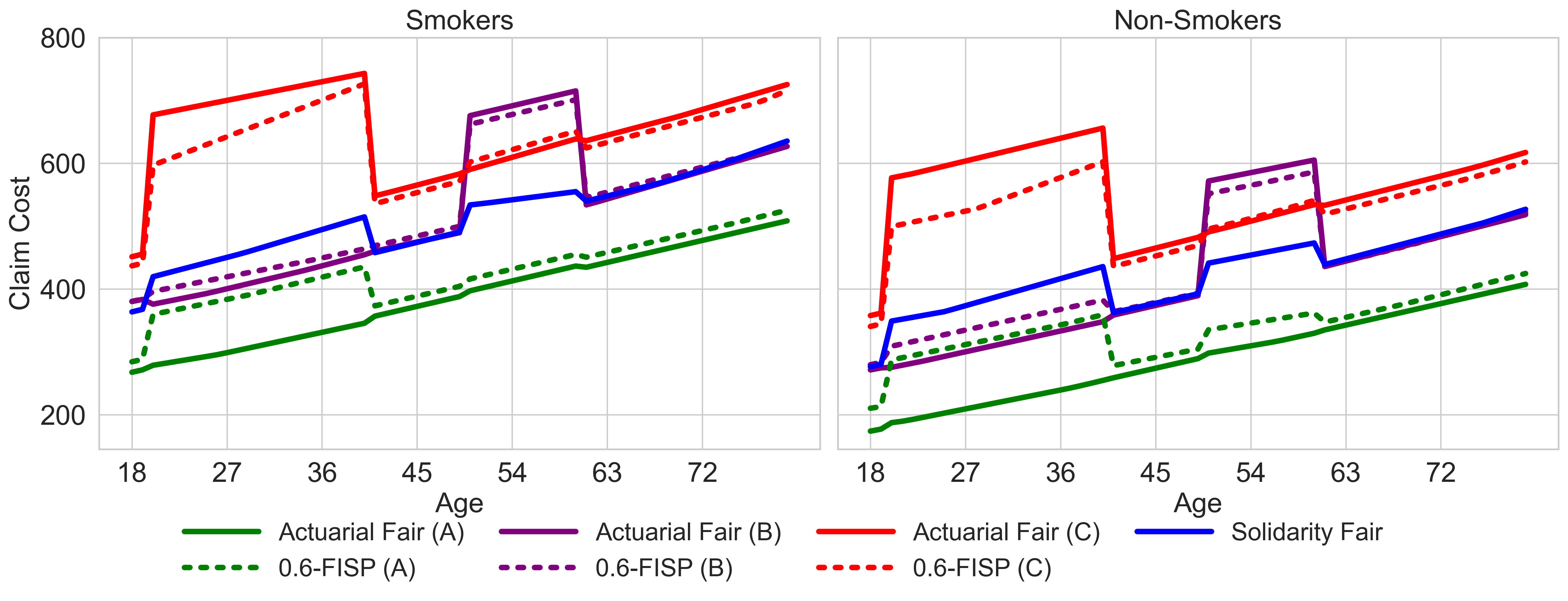}
        \caption{Condition satisfied ($P_L = \frac{1}{3} \le \frac{1}{1 + \alpha}$)}
        \label{subfig:toy3_alpha0.6_gt_s}
    \end{subfigure}
    \begin{subfigure}{0.5\textwidth}
        \centering
        \includegraphics[width=\textwidth]{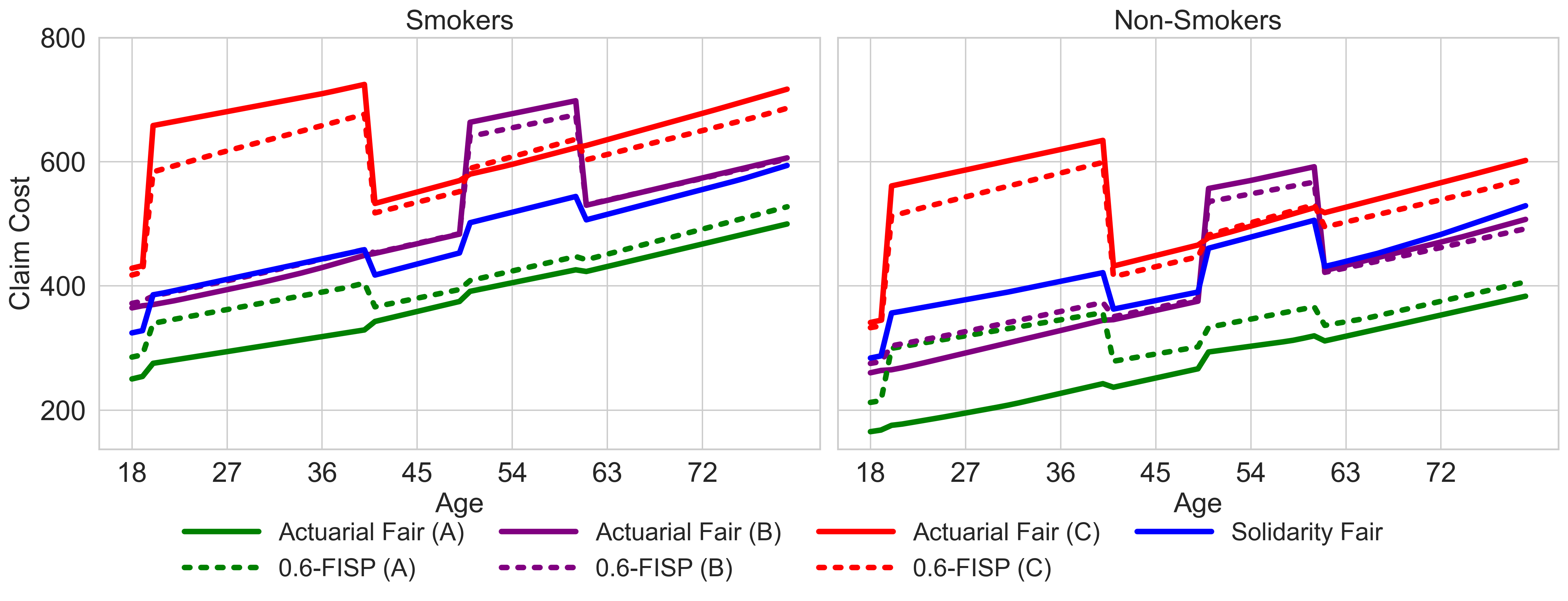}
        \caption{Condition satisfied ($P_L = 0.6 \le \frac{1}{1 + \alpha}$)}
        \label{subfig:toy3_alpha0.6_gt_v}
    \end{subfigure}
     \caption{Order preservation of $\alpha$-fair premiums for $|\cD| > 2$ and $\alpha = 0.6$}
    \label{fig:toy3_gt_alpha_0.6}
\end{figure} 

\begin{figure}[H]
    \begin{subfigure}{0.5\textwidth}
        \centering
        \includegraphics[width=\textwidth]{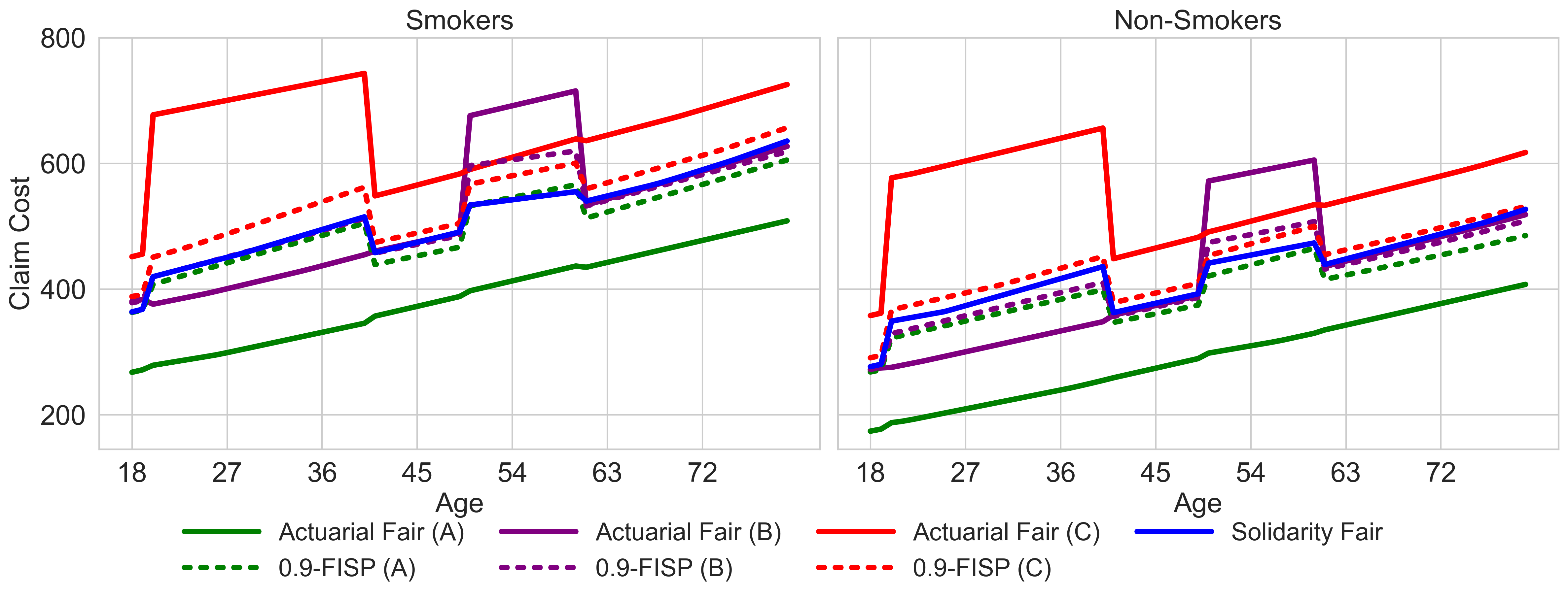}
        \caption{Condition satisfied ($P_L = \frac{1}{3} \le \frac{1}{1 + \alpha}$)}
        \label{subfig:toy3_alpha0.9_gt_s}
    \end{subfigure}
    \begin{subfigure}{0.5\textwidth}
        \centering
        \includegraphics[width=\textwidth]{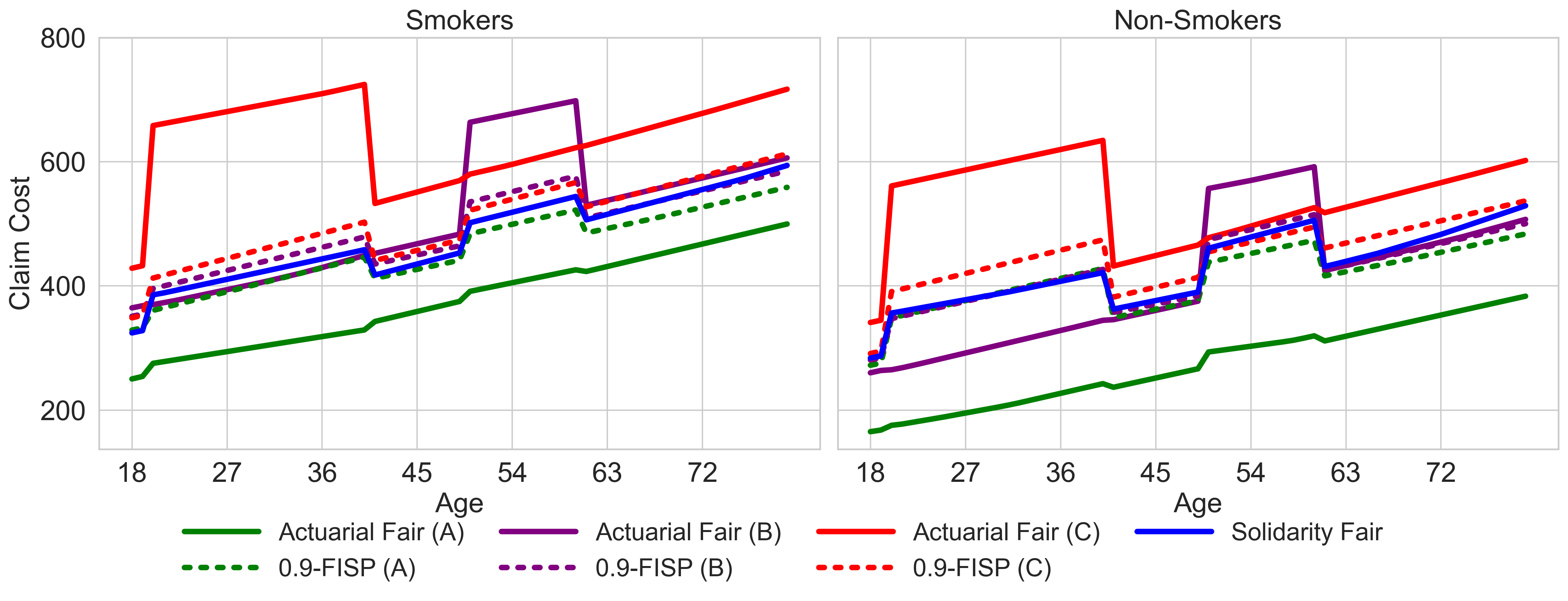}
        \caption{Condition violated ($P_L = 0.6 > \frac{1}{1 + \alpha}$)}
        \label{subfig:toy3_alpha0.9_gt_v}
    \end{subfigure}
     \caption{Order preservation of $\alpha$-fair premiums for $|\cD| > 2$ and $\alpha = 0.9$}
    \label{fig:toy3_gt_alpha_0.9}
\end{figure} 

\newpage
\subsection{Health Insurance Figures}
\label{appendix:deferred-health-insurance-figures}
\begin{figure}[H]
    \begin{subfigure}{0.50\textwidth}
        \centering
        \includegraphics[width=\textwidth]{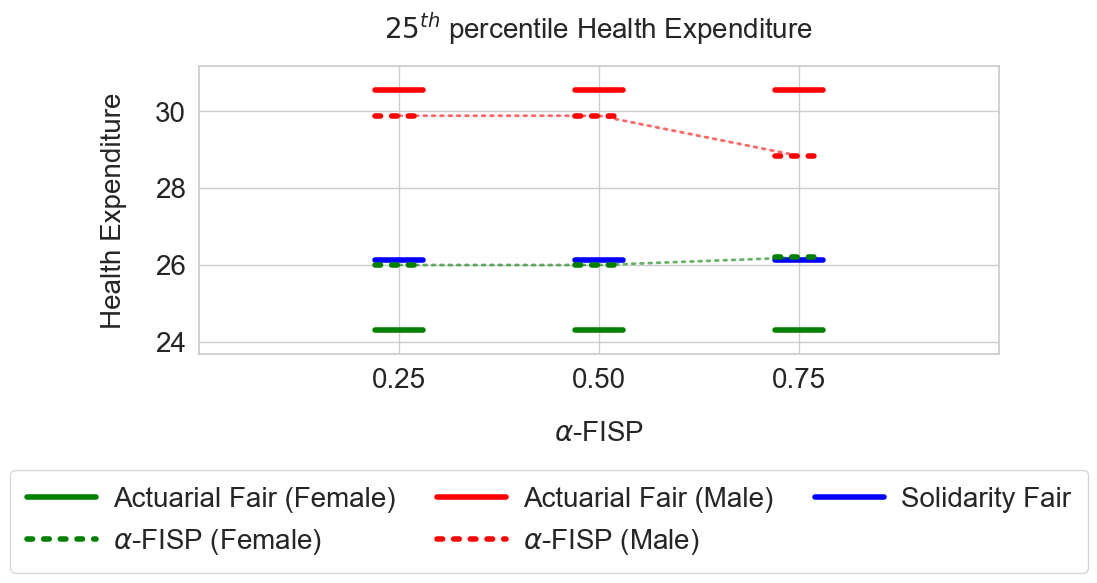}
        \label{subfig:Figures/Health_alpha_sensitivity_25th_segments}
    \end{subfigure}
    \begin{subfigure}{0.50\textwidth}
        \centering
        \includegraphics[width=\textwidth]{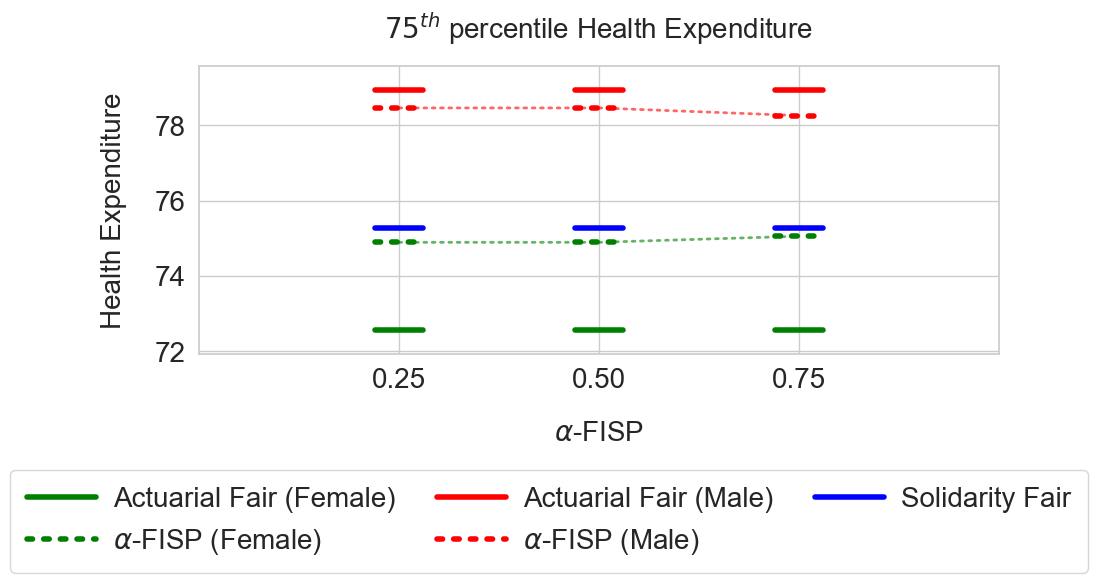}
        \label{subfig:Figures/Health_alpha_sensitivity_75th_segments}
    \end{subfigure}
     \caption{$\alpha$-fair premiums for health expenditure risk classes at the 25th and 75th percentiles. The panels illustrate the convergence of $\alpha$-FISP premiums (dashed lines) to the Solidarity Fair benchmark (blue solid line) as the fairness constraint tightens.}
    \label{fig:Health_alpha_sensitivity_panel}
\end{figure} 

\newpage
\section{Appendix C: Deferred Proofs}
\label{appendix:deferred-proofs}

\subsection{Proposition \ref{proposition:existence-of-solution} Proof}

\begin{proof}
\label{proof:existence-of-solution}

We first show that the feasible set is non-empty. For $x \in \cX$, define the candidate function:
\[
f_0(x,d_k) = \mu(x), \ k= 1, \ldots, |\cD|.
\]
Since $\mu(x) = \sum_{k=1}^{|\cD|} \mu_k(x) \cdot p_{k|x}$. Therefore, $f_0$ trivially satisfies IS and $\alpha$-fairness constraints. Thus, $f_0$ is feasible. Moreover, we know $f_0 \in \cF$, since
\begin{align*}
    \Ex \left[ (f_0(X,D)^2 \right] &= \Ex \left[ \mu(X)^2 \right] \\
    &= \Ex \left[ \left( \Ex \left[Y \mid X \right] \right)^2 \right] \\
    &\le \Ex \left[ \Ex \left[ Y^2 \mid X \right] \right] \\
    &= \Ex \left[ Y^2 \right] \\
    &< +\infty.
\end{align*}
Therefore, we have $f_0 \in \cF_{\alpha , IS}$ and $\cF_{\alpha, IS} \ne \emptyset$.

Next, we show that the feasible set $\cF_{\alpha,IS}$ is convex. Let $f,g \in \cF_{\alpha,IS}$ and $t \in [0,1]$. Define $h:=tf + (1-t)g$, then we have for $a.e.$ $x$:
\begin{align*}
    \sum_{k=1}^{|\cD|} \left( h(x,d_k) - \mu_k(x) \right) \cdot p_{k|x} &= t\sum_{k=1}^{|\cD|} \left( f(x,d_k) - \mu_k(x) \right) \cdot p_{k|x} + (1-t)\sum_{k=1}^{|\cD|} \left( g(x,d_k) - \mu_k(x) \right) \\
    & \ge 0.
\end{align*}
Also, for any $k<k'$ and $a.e.$ $x$, we have:
\begin{align*}
    h(x,d_k) - \alpha h(x,d_{k'}) &= t \left( f(x,d_k) - \alpha f(x,d_{k'}) \right) + (1-t) \left( g(x,d_k) - \alpha g(x,d_{k'}) \right) \\
    &\ge 0.
\end{align*}
Therefore, $h \in \cF_{\alpha, IS}$ meaning $\cF_{\alpha,IS}$ is convex.

Then, we show that $\cF_{\alpha,IS}$ is closed in $L^2(\bbP_{X,D})$. 

Let $\{f_n\}_{n \ge 1} \subset \cF_{\alpha,IS}$ s.t. $f_n \to f$ in $\cF = L^2(\bbP_{X,D})$. We know that $L^P$ convergence implies convergence in probability. Thus, we can find a subsequence $\{f_{n_j}\}_{j \ge 1}$ s.t. $f_{n_j} \overset{a.s.}{\to} f$ w.r.t. the probability measure $\bbP_{X,D}$. That is, there is some $\Omega_0 \subset \cX \times \cD$ and $\bbP_{X,D}(\Omega_0) = 1$ s.t.
\[
f_{n_j}(x,d) \to f(x,d), \forall (x,d) \in \Omega_0.
\]
But, note that IS and $\alpha$-fairness constraints are evaluated pointwise. We need to leverage the $a.s.$ convergence to ensure for $\bbP_X$-$a.e.$ $x$, the limit function $f$ also satisfies the constraints.

Now, fix $k \in [|\cD|]$. Define
\[
A_k := \left\{ x \in \cX \mid f_{n_j}(x,d_k) \to f(x,d_k) \right\}.
\]
Since we know $f_{n_j}(x,d) \overset{a.s.}{\to} f(x,d)$, therefore, we have
\[
\bbP_{X,D}\left( \left\{ (x,d) \mid x \in A_k^c \right\} \right) = 0.
\]
By conditioning on $X$, we get:
\[
\bbP_{X,D}\left( \left\{ (x,d) \mid x \in A_k^c \right\} \right) = \int \ones_{A_k^c}(x) \cdot p_{k|x} d\bbP_X(x) = 0.
\]
But, since $p_{k|x} > 0$, then this forces 
\[
\bbP_{X}(A_k^c) = 0.
\]
Therefore, for each $k \in [|\cD|]$, we have
\[
\bbP_X(A_k) = 1.
\]
Now, define
\[
A:= \bigcap_{k=1}^{|\cD|} A_k,
\]
then, this gives
\[
\bbP_X(A) = 1,
\]
as $A$ is a finite intersection of probability-1 sets.

Thus, we have shown that $\forall x \in A$,
\[
f_{n_j}(x,d_k) \to f(x,d_k), \forall k \in [|\cD|].
\]

Next, fix a pair $k < k'$. Then, for each subsequence index $j$, define
\[
B_j^{k,k'}:= \left\{ x \mid f_{n_j}(x,d_k) - \alpha f_{n_j}(x,d_{k'}) \ge 0 \ \text{and} \ f_{n_j}(x,d_{k'}) - \alpha f_{n_j}(x,d_k) \ge 0 \right\}.
\]
We know $f_{n_j}$ is feasible, therefore, we have
\[
\bbP_X \left( B_j^{k,k'} \right) = 1.
\]
Now, define 
\[
B^{k,k'} := \bigcap_{j=1}^{\infty} B_j^{k,k'},
\]
then, this gives
\[
\bbP_X \left( B^{k,k'} \right) = 1,
\]
as $B^{k,k'}$ is a countable intersection of probability-1 sets.

Now, we further define
\[
B^{\alpha-\text{fair}} := \bigcap_{k < k'} B^{k,k'},
\]
then, this again gives
\[
\bbP_X \left( B^{\alpha-\text{fair}} \right) = 1,
\]
since $|\cD| < \infty$, there are only finitely many pairs of $(k,k')$.

Similarly, for each $j$, define
\[
B_j^{IS} := \left\{ x \in \sum_{k=1}^{|\cD|} \left( f_{n_j}(x,d_k) - \mu_k(x) \right) \cdot p_{k|x} \ge 0 \right\},
\]
since $f_{n_j}$ is feasible, we have
\[
\bbP_X \left( B_j^{IS} \right) = 1.
\]
Define 
\[
B^{IS} := \bigcap_{j=1}^{\infty} B_j^{IS},
\]
we have 
\[
\bbP_X \left( B^{IS} \right) = 1.
\]
Finally, define
\[
B:= B^{\alpha-\text{fair}} \cap B^{IS},
\]
we get:
\[
\bbP_X(B) = 1.
\]
Thus, we have shown that $\forall x \in B$, all constraints hold for each $f_{n_j}$.

Note that since $\bbP_X(A) = \bbP_X(B) = 1$, thus, we have $\bbP_X(A \cap B) = 1$. Now, we are ready to take the limit.

Pick any $x \in A \cap B$. For each $j$, we have:
\[
f_{n_j}(x,d_k) - \alpha f_{n_j}(x,d_k) \ge 0,
\]
take $j \to \infty$, we have:
\[
f(x,d_k) - \alpha f(x,d_{k}) = \lim_{j \to \infty} \left( f_{n_j}(x,d_k) - \alpha f_{n_j}(x,d_k) \right) \ge 0.
\]
The same argument will show that the $\alpha$-fairness constraint is satisfied for the reverse way.

Pick any $x \in A \cap B$. For each $j$, we have:
\[
\sum_{k=1}^{|\cD|} \left( f_{n_j}(x,d_k) - \mu_k(x) \right) \cdot p_{k|x} \ge 0.
\]
Since $|\cD| < \infty$, and $f_{n_j}(x,d_k) \to f(x,d_k)$. By letting $j \to \infty$, we get:
\[
\lim_{j \to \infty} \left( \sum_{k=1}^{|\cD|} \left( f_{n_j}(x,d_k) - \mu_k(x) \right) \cdot p_{k|x} \right) = \sum_{k=1}^{|\cD|} \left( f(x,d_k) - \mu_k(x) \right) \cdot p_{k|x} \ge 0.
\]
Therefore, we have proved that $f \in \cF_{\alpha,IS}$ and this implies that $\cF_{\alpha, IS}$ is closed in $\cF$. $\cF_{\alpha,IS}$ is also weakly closed since it is also convex.

Next, define the objective
\begin{align*}
    \psi(f) &:= \Ex \left[ \left(f(X,D) - Y \right)^2 \right] \\
    &\ge \frac{1}{2} \Ex \left[ f(X,D)^2 \right] - \Ex \left[ Y^2 \right] \\
    &= \frac{1}{2} \|f\|_2^2 - \Ex \left[ Y^2 \right].
\end{align*}
Since $\Ex \left[ Y^2 \right] < \infty$, we have $\psi(f) \to \infty$ whenever $\|f\|_2^2 \to \infty$. Thus, $\psi$ is coercive.

To show $\psi$ is weakly LSC, expand the square to get:
\begin{align*}
    \psi(f) &= \Ex \left[ \left(f(X,D) - Y \right)^2 \right] \\
    &= \Ex \left[ f(X,D)^2 \right] - 2 \Ex \left[ f(X,D)Y \right] + \Ex \left[ Y^2 \right] \\
    &= \Ex \left[ f(X,D)^2 \right] - 2 \Ex \left[ f(X,D) \Ex \left[ Y \mid X,D \right] \right] + \Ex \left[ Y^2 \right] \\
    &= \Ex \left[ f(X,D)^2 \right] - 2 \Ex \left[ f(X,D)\mu_D(X) \right] + \Ex \left[ Y^2 \right].
\end{align*}
But, note that, by Jensen's inequality, we have
\[
\Ex \left[ \mu_D(X)^2 \right] \le \Ex \left[ Y^2 \right] < \infty,
\]
meaning $\mu_D(X) \in L^2(\bbP_{X,D})$. Thus,
\[
\psi(f) = \|f\|_2^2 - 2 \langle f, \mu_D \rangle + \Ex \left[ Y^2 \right].
\]
Since $\|\cdot\|_2^2$ is weakly LSC on $L^2(\bbP_{X,D})$ and $\langle \cdot, \mu_D \rangle$ is a bounded linear operator (inner product operator).
Therefore, $\psi$ is also weakly LSC.

Now, let 
\[
m:= \inf_{f \in \cF_{\alpha,IS}} \psi(f).
\]
Since $\cF_{\alpha, IS} \ne \emptyset$ and $m < \infty$. Take $\{f_n\}_{n \ge 1} \subset \cF_{\alpha,IS}$ with $\psi(f_n) \to m$. Then, by coercivity, $\{f_n\}_{n \ge 1}$ is bounded in $\cF$. Therefore, we can find a weakly convergent subsequence $f_{n_j} \rightharpoonup f^*$ in $\cF$. But, we already proved that $\cF_{\alpha,IS}$ is weakly closed. Thus, $f^* \in \cF_{\alpha, IS}$. By weakly LSC, we have:
\[
\psi(f^*) \le \liminf_{j \to \infty} \psi(f_{n_j}) = m.
\]
Therefore $\psi(f^*) = m$, hence $f^*$ is a minimizer.

To show $f^*$ is the unique minimizer, let $f,g \in \cF$ and $t \in (0,1)$, then apply the identity with $a,b \in \bbR$, $t \in (0,1)$:
\[
\left( ta + (1-t)b \right)^2 = ta^2 + (1-t)b^2 - t(1-t)(a-b)^2,
\]
we get:
\[
\psi(tf + (1-t)g) = t\psi(f) + (1-t)\psi(g) - t(1-t) \|f-g\|_2^2.
\]
If $f \ne g$ in $L^2(\bbP_{X,D})$, then $\|f - g\|_2^2 > 0$, resulting in
\[
\psi(tf + (1-t)g) < t\psi(f) + (1-t)\psi(g).
\]
This implies that $\psi$ is strictly convex.

Now, let $f^*, g^*$ be two distinct minimizers. Take $t = \frac{1}{2}$ and define
\[
h := \frac{1}{2} f^* + \frac{1}{2} g^* \in \cF_{\alpha, IS}.
\]
Since we have proven that $\cF_{\alpha,IS}$ is convex. Then, we have
\begin{align*}
    \psi(h) &= \frac{1}{2}\psi(f^*) + \frac{1}{2}\psi(g^*) - \frac{1}{4}\|f^* - g^*\|_2^2 \\
    &= m - \frac{1}{4}\|f^* - g^*\|_2^2 \\
    &< m,
\end{align*}
contradicting the optimality of $f^*, g^*$, and the infimum $m$ over $F_{\alpha,IS}$.

Therefore, there exists a minimizer $f^*$ up to $\bbP_{X,D}$-$a.s.$ equality.
\end{proof}

\newpage
\subsection{Theorem \ref{theorem:sample-complexity-upper-bound-fairness-and-IS} Proof}

\begin{proof}
\label{proof:sample-complexity-upper-bound-fairness-and-IS}
\textbf{Step 1: upper bounding the individual solvency (IS) deviation}

Note that we can write the population IS functional as follows
\begin{align*}
    G(f,t) &:= \Ex \left[ (f(X,D) - Y) t(X) \right] \\
    &= \Ex \left[ \left( \sum_{k=1}^{|\cD|} (f(X,d_k) - \mu_k(X)) \cdot p_{k|x} \right) t(X) \right].
\end{align*}
Consequently, we get the plug-in empirical IS functional as 
\[
\hat{G}_n(f,t) := \frac{1}{n} \sum_{i=1}^n \left( \sum_{k=1}^{|\cD|} (f(X_i,d_k) - \hat{\mu}_k(X_i)) \cdot \hat{p}_{k|X_i} \right) t(X_i),
\]
and the empirical IS functional with true $\mu_k$ and $p_{k|\cdot}$ as
\[
G_n(f,t) := \frac{1}{n} \sum_{i=1}^n \left( \sum_{k=1}^{|\cD|} (f(X_i,d_k) - \mu_k(X_i)) \cdot p_{k|X_i} \right) t(X_i),
\]

For any $f \in \cF_\alpha$, define
\begin{align*}
    IS_f(x) &:= \sum_{k=1}^{|\cD|} (f(x,d_k) - \mu_k(x)) \cdot p_{k|x}, \\
    \hat{IS}_f(x) &:= \sum_{k=1}^{|\cD|} (f(x,d_k) - \hat{\mu}_k(x)) \cdot \hat{p}_{k|x}.
\end{align*}
Then, we can decompose the error as follows:
\[
G(f,t) - \hat{G}_n(f,t) = \left( G(f,t) - G_n(f,t) \right) + \left( G_n(f,t) - \hat{G}_n(f,t) \right).
\]
Then, we will bound term by term. Starting with $|G(f,t) - G_n(f,t)|$.

Fix $x$. Then, we have
\[
\hat{IS}_f(x) - IS_f(x) = \sum_{k=1}^{|\cD|} \left( f(x,d_k) - \mu_k(x) \right) \left( \hat{p}_{k|x} - p_{k|x} \right) + \sum_{k=1}^{|\cD|} \left( \mu_k(x) - \hat{\mu}_k(x) \right) \cdot \hat{p}_{k|x}.
\]
Taking absolute value, we get:
\begin{align*}
    \left| \hat{IS}_f(x) - IS_f(x) \right| &\le (B_f + B) \sum_{k=1}^{|\cD|} \left| \hat{p}_{k|x} - p_{k|x} \right| + \sum_{k=1}^{|\cD|} \left| \mu_k(x) - \hat{\mu}_k(x) \right| \cdot \hat{p}_{k|x} \\
    &\le M \epsilon_p + \epsilon_\mu, \forall x.
\end{align*}
Since $|f(,d_k) - \mu_k(x)| \le B_f + B = M$ and $\sum_{k=1}^{|\cD|} \hat{p}_{k|x} = 1$. Therefore, with $t(x) \in [0,1]$, we have:
\begin{align*}
    \left| G(f,t) - G_n(f,t) \right| &= \left| \Ex \left[ \left( IS_f(X) - \hat{IS}_f(X) \right) t(X) \right] \right| \\
    &\le \Ex \left[ \left| IS_f(X) - \hat{IS}_f(X) \right| \right] \\
    &\le M \epsilon_p + \epsilon_\mu.
\end{align*}
Thus, we get:
\[
\sup_{f \in \cF_\alpha, t \in \cT} \left| G(f,t) - G_n(f,t) \right| \le M \epsilon_p + \epsilon_\mu.
\]

Next, we bound $\left| G_n(f,t) - \hat{G}_n(f,t) \right|$.

Define the plug-in constrained class:
\[
\hat{\cH} := \left\{ \hat{h}_{f,t}(x) := \hat{IS}_f(x) t(x) \mid f \in \cF_\alpha, t \in \cT \right\}.
\]
Then, we have:
\[
\sup_{f,t} \left| G_n(f,t) - \hat{G}_n(f,t) \right| = \sup_{\hat{h} \in \hat{\cH}} \left| P_X \hat{h} - P_{n,X} \hat{h} \right|,
\]
where $P_X$ is the expectation over $X$ and $P_{n,X}$ is the empirical average over $X_1, \ldots, X_n$.

For any $x$, we have
\begin{align*}
    \left| \hat{IS}_f(x) \right| &= \left| \sum_{k=1}^{|\cD|} \left( f(x,d_k) - \hat{\mu}_k(x) \right) \cdot \hat{p}_{k|x} \right| \\
    &\le \sum_{k=1}^{|\cD|} \left( |f(x,d_k) | + | \hat{\mu}_k(x) | \right) \cdot \hat{p}_{k|x} \\
    &\le B_f + B \\
    &= M.
\end{align*}
Therefore, 
\[
\left| \hat{h}_{f,t}(x) \right| \le M.
\]
Now, we bound the covering number of $\hat{\cH}$.

Let $\{f_j\}_{j=1}^{N_F}$ be an $\epsilon$-net of $\cF_\alpha$ in $L^2(\bbP_{X,D})$ and $\{t_\ell\}_{\ell = 1}^{N_T}$ be an $\epsilon$-net of $\cT$ in $L^2(\bbP_X)$. For any $(f,t)$, pick $f_j, t_\ell$ close. Then, we can write:
\[
\hat{h}_{f,t} - \hat{h}_{f_j, t_\ell} = \left( \hat{IS}_f - \hat{IS}_{f_j} \right) t + (t - t_\ell) \hat{IS}_{f_j}.
\]
Squaring, we get:
\[
\left( \hat{h}_{f,t} - \hat{h}_{f_j,t_\ell} \right)^2 \le 2 \left( \hat{IS}_f - \hat{IS}_{f_j} \right)^2 + 2(B_f + B)^2 (t - t_\ell)^2.
\]
Note that
\[
\hat{IS}_f(x) - \hat{IS}_{f_j} = \sum_{k=1}^{|\cD|} (f(x,d_k) - f_j(x,d_k)) \cdot \hat{p}_{k|x}.
\]

Let 
\[
0<p_{\min} := \inf_{x \in \mathcal{X}, k \in \mathcal{D}} p_{k|x},
\]
and then, taking expectation over $X$, we get:
\begin{align*}
    \left\| \hat{IS}_f - \hat{IS}_{f_j} \right\|_{L^2(\bbP_X)}^2 &\le \Ex_X \left[ \sum_{k=1}^{|\cD|} (f(X,d_k) - f_j(X,d_k))^2 \cdot \hat{p}_{k|x} \right] \\
    &\le \Ex_X \left[ \sum_{k=1}^{|\cD|} (f(X,d_k) - f_j(X,d_k))^2 \right] \\
    &\le \Ex_X \left[ \frac{1}{p_{\min}} \sum_{k=1}^{|\cD|} (f(X,d_k) - f_j(X,d_k))^2 \cdot p_{k|x} \right] \\
    &= \frac{1}{p_{\min}} \left\| f - f_j \right\|_{L^2(\bbP_{X,D})}^2.
\end{align*}
Therefore,
\[
\left( \hat{h}_{f,t} - \hat{h}_{f_j,t_\ell} \right)^2 \le \frac{2 \epsilon^2}{p_{\min}} + 2 M^2 \epsilon^2 = \epsilon^2 \left( \frac{2}{p_{\min}} + 2 M^2 \right),
\]
which immediately implies that
\[
\left\| \hat{h}_{f,t} - \hat{h}_{f_j, t_\ell} \right\|_{L^2(\bbP_X)} \le \epsilon \sqrt{\frac{2}{p_{\min}}+ 2M^2},
\]
meaning that the finite set
\[
\left\{ \hat{h}_{f_j, t_\ell} \mid j = 1, \ldots, N_F, \ell = 1, \ldots, N_T \right\}
\]
is an $\epsilon \sqrt{2/p_{\min} + 2M^2}$-net of $\cH$ w.r.t. $L^2(\bbP_X)$. Hence, we have:
\[
N \left(\epsilon \sqrt{\frac{2}{p_{\min}}+ 2M^2}, \cH, L^2(\bbP_X) \right) \le N \left(\epsilon, \cF_\alpha, \bbP_{X,D} \right) \times N \left( \epsilon, \cT, L^2(\bbP_X) \right).
\]

Let $\tilde{\epsilon} = \epsilon \sqrt{2/p_{\min} + 2M^2}$ and $C_0 = \sqrt{2/p_{\min} + 2M^2}$, we get:
\[
N \left( \tilde{\epsilon}, \cH, L^2(\bbP_X) \right) \le N \left(\frac{\tilde{\epsilon}}{C_0}, \cF_\alpha, L^2(\bbP_{X,D}) \right) \times N \left( \frac{\tilde{\epsilon}}{C_0}, \cT, L^2(\bbP_X) \right).
\]
Then, by Assumption \ref{assumption:covering-number}, we get:
\begin{align*}
    \log N(\tilde{\epsilon}, \hat{\cH}, L^2(\bbP_X)) &\le V_F \log \frac{A_F C_0}{\tilde{\epsilon}} + V_T \log \frac{A_T C_0}{\tilde{\epsilon}} \\
    &\le \left( V_F + V_T \right) \log \frac{A_H}{\tilde{\epsilon}},
\end{align*}
where $A_H > 0$, absorbing $A_F, A_T, C_0$.

Now, the standard symmetrization argument gives
\[
\Ex \left[ \sup_{\hat{h} \in \hat{\cH}} \left| P_X \hat{h} - P_{n,X} \hat{h} \right| \right] \le 2 \Ex \left[ \hat{\Re}_n (\hat{\cH}) \right]
\].
Then, by the Dudley's inequality, we have:
\[
\Ex \left[ \hat{\Re}_n(\hat{\cH}) \right] \le \frac{12}{\sqrt{n}} \int_0^M \sqrt{\log N(\epsilon, \hat{\cH}, L^2(\bbP_X))} \ d\epsilon.
\]
Plug in $ \log N(\tilde{\epsilon}, \hat{\cH}, L^2(\bbP_X)) \le \left( V_F + V_T \right) \log \frac{A_H}{\tilde{\epsilon}}$, we get
\begin{align*}
    \Ex \left[ \hat{\Re}_n(\hat{\cH}) \right] &\le \frac{12}{\sqrt{n}} \int_0^M \sqrt{(V_F + V_T) \log \frac{A_H}{\epsilon}} \ d \epsilon \\
    &= \frac{12 \sqrt{V_F + V_T}}{\sqrt{n}} \int_0^M \sqrt{\log \frac{A_H}{\epsilon}} \ d \epsilon.
\end{align*}
Let $I:= \int_0^M \sqrt{\log \frac{A_H}{\epsilon}} \ d \epsilon$ and substitute $\epsilon = Mu$, then
\begin{align*}
    I & = M \int_0^1 \sqrt{\log \frac{A_H}{Mu}} \ du \\
    &= M \int_0^1 \sqrt{\log \frac{A_H}{M} + \log \frac{1}{u}} \ du \\
    &\le M \int_0^1 \left( \sqrt{\log \frac{A_H}{M}} + \sqrt{\log\frac{1}{u}} \right) \ du \\
    &= M \left( \sqrt{\log \frac{A_H}{M}} + \frac{\sqrt{\pi}}{2} \right) \\
    &\le C_1 M.
\end{align*}
Therefore, with symmetrization, we get:
\[
\Ex \left[ \sup_{\hat{h} \in \hat{\cH}} \left| P_X \hat{h} - P_{n,X} \hat{h} \right| \right] \le CM \sqrt{\frac{V_F + V_T}{n}}.
\]
Define
\[
\Phi(X_1^n) = \sup_{\hat{h} \in \hat{\cH}} \left| P_X \hat{h} - P_{n,X} \hat{h} \right|.
\]
To apply McDiarmid's inequality, we replace $X_i$ with $X_i'$. Then, for any fixed $\hat{h}$, we have
\begin{align*}
    \left| \frac{1}{n} \sum_{j=1}^n \hat{h}(X_j) - \frac{1}{n} \sum_{j \ne i} \hat{h}(X_j) - \frac{1}{n} \hat{h}(X_i') \right| &= \frac{1}{n} \left| \hat{h}(X_i) - \hat{h}(X_i') \right| \\
    &\le \frac{2M}{n}.
\end{align*}
Hence, by McDiarmid's inequality, we get
\begin{align*}
    \bbP \left( \Phi(X_1^n) - \Ex [\Phi (X_1^n)] \ge t \right) &\le \exp \left( - \frac{2t^2}{\sum_{i=1}^n c_i^2} \right) \\
    &= \exp \left( - \frac{nt^2}{2M^2} \right).
\end{align*}
Since we want the event w.p. at least $1 - \frac{\delta}{4}$, thus, set $\exp \left( - \frac{nt^2}{2M^2} \right) = \frac{\delta}{4}$, and get $t = M \sqrt{\frac{2 \log (4/\delta)}{n}}$. Then, we have:
\[
\bbP \left( \Phi(X_1^n) - \Ex [\Phi(X_1^n)] \ge M \sqrt{\frac{2 \log (4/\delta)}{n}} \right) \le \frac{\delta}{4}.
\]
This implies that w.p. at least $1 - \frac{\delta}{4}$,
\begin{align*}
    \Phi(X_1^n) &\le \Ex [\Phi(X_1^n)] + M \sqrt{\frac{2 \log (4/\delta)}{n}} \\
    &\le CM \sqrt{\frac{V_F + V_T}{n}} + M \sqrt{\frac{2 \log (4/\delta)}{n}}.
\end{align*}
Therefore, we conclude that w.p. at least $1 - \frac{\delta}{4}$
\[
\sup_{f \in \cF_\alpha, t \in \cT} \left| G_n(f,t) - \hat{G}_n(f,t) \right| \le CM \left( \sqrt{\frac{V_F + V_T}{n}} + \sqrt{\frac{\log(4/\delta)}{n}} \right).
\]
Combining with the previous bound, we get:
\[
\sup_{f \in \cF, t \in \cT} \left| G(f,t) - \hat{G}_n(f,t) \right| \le M \epsilon_p + \epsilon\mu + CM \left( \sqrt{\frac{V_F + V_T}{n}} + \sqrt{\frac{\log(4/\delta)}{n}} \right).
\]
Now, choose
\[
\eta_n = M \epsilon_p + \epsilon\mu + CM \left( \sqrt{\frac{V_F + V_T}{n}} + \sqrt{\frac{\log(4/\delta)}{n}} \right),
\]
then on this good event, we have for every $f \in \cF_{\alpha, IS}$ and every $t \in \cT$,
\[
G(f,t) \ge 0 \implies \hat{G}_n(f,t) \ge G(f,t) - \left| G(f,t) - \hat{G}_n(f,t) \right| \ge - \eta_n.
\]
Therefore,
\[
\cF_{\alpha, IS} \subset \hat{\cF}_{\alpha, IS}^{(\eta_n)},
\]
in particular
\[
f_{\alpha, IS}^* \in \hat{\cF}_{\alpha, IS}^{(\eta_n)}.
\]


Now, we upper-bound the risk deviation.

Define the loss class 
\[
\cL_\alpha:= \{ \ell_f(x,d,y):= (f(x,d)-y)^2 \mid f \in \cF_\alpha \},
\]
then, we know that
\[
|\ell_f(x,d,y)| \le (|f(x,d)|+|y|)^2 \le (B_f + B)^2 = M^2.
\]

Let
\[
\Delta_n := \sup_{f \in \cF_\alpha} |\cR(f) - \cR_n(f)| = \sup_{\ell \in \cL_\alpha} |P\ell - P_n\ell|.
\]
Then, using the standard Rademacher generalization bound, we have
\[
\Delta_n \le 2 \Re_n(\cL_\alpha) + 3M^2 \sqrt{\frac{\log(4 / \delta)}{2n}},
\]
w.p. at least $1- \frac{\delta}{4}$. 

Then, we relate $\Re_n(\cL_\alpha)$ to $\Re_n(\cF_\alpha)$.

For $i = 1, \ldots, n$, define
\[
\phi_i(u) := (u - Y_i)^2, u \in [-B_f, B_f].
\]
Then, define
\[
\tilde{\phi}_i(u) := \phi_i(u) - \phi_i(0) = (u - Y_i)^2 - Y_i^2.
\]
Then, for $u, v \in [-B_f, B_f]$,
\begin{align*}
    |\tilde{\phi}(u) - \tilde{\phi}(v)| &= |\phi_i(u) - \phi_i(v)| \\
    &= |(u - Y_i)^2 - (v - Y_i)^2| \\
    &= |u - v| |u + v -2Y_i | \\
    &= |u - v| (|u| + |v| + 2|Y_i|) \\
    &\le 2(B_f + B)|u-v| \\
    &= 2M|u-v|.
\end{align*}
This implies that each $\tilde{\phi}_i$ is $2M$-Lipschitz and $\tilde{\phi}(0) = 0$. Then, by the Talagrand contraction lemma, we have
\begin{align*}
    \hat{\Re}_n(\cL_\alpha) &= \Ex_{\sigma} \left[ \sup_{f \in \cF_\alpha} \frac{1}{n} \sum_{i=1}^n \sigma_i \phi_i(f(X_i,D_i)) \right]\\
    &= \Ex_{\sigma} \left[ \sup_{f \in \cF_\alpha} \frac{1}{n} \sum_{i=1}^n \sigma_i \tilde{\phi}_i(f(X_i,D_i)) \right] \\
    &\le 2M \hat{\Re}_n(\cF_\alpha),
\end{align*}
where $\sigma_i$ denotes Rademacher random variable.
Therefore, taking expectation, we get
\[
\Re_n(\cL_\alpha) \le 2M \Re_n(\cF_\alpha).
\]
Thus, for any $\delta \in (0,1)$, we have:
\[
\Delta_n \le 4(B_f + B) \Re_n(\cF_\alpha) + 3M^2 \sqrt{\frac{\log(4/\delta)}{2n}},
\]
w.p. at least $1 - \frac{\delta}{4}$.

Then, on the event that $f_{\alpha,IS}^* \in \hat{\cF}_{\alpha, IS}^{(\eta_n)}$ and by definition of $\hat{f}_{\alpha, IS}$, we have:
\[
\cR_n(\hat{f}_{\alpha, IS}) \le \cR_n(f_{\alpha, IS}^*),
\]
Therefore, we have:
\begin{align*}
    &\cR(\hat{f}_{\alpha, IS}) - \cR(f_{\alpha, IS}^*) \\
    =& \left[ \cR(\hat{f}_{\alpha, IS}) - \cR_n(\hat{f}_{\alpha, IS}) \right] + \left[ \cR_n(\hat{f}_{\alpha, IS}) - \cR_n(f_{\alpha, IS}^*) \right] + \left[ \cR_n(f_{\alpha, IS}^*) - \cR(f_{\alpha, IS}^*) \right] \\
    \le& 2 \Delta_n.
\end{align*}
Therefore, w.p. at least $1 - \frac{\delta}{2}$:
\[
\cR(\hat{f}_{\alpha,IS}) - \cR(f_{\alpha,IS}^*) \le 8M \Re_n(\cF_\alpha) + 6M^2 \sqrt{\frac{\log(4/\delta)}{2n}}.
\]

\textbf{Step 2: upper bounding Fairness Bias}

For each $x \in \cX$, define
\begin{align*}
    m(x) &:= \min_k \mu_k(x), \\
    M(x) &:= \max_k \mu_k(x), \\
    R(x) &:= \frac{M(x)}{m(x)}.
\end{align*}
Then, since 
\[
\mu_{\min} \le m(x) \le M(x) \le \mu_{\max},
\]
therefore, we have:
\[
R(x) \le R_{\mu}.
\]
We then construct $g$ as follows:
\[
g_\alpha(x,d_k) := \min \left( \mu_k(x), \frac{m(x)}{\alpha} \right).
\]
We first verify that $g_\alpha(x, \cdot)$ is $\alpha$-fair for all $x \in \cX$.

Fix some $x \in \cX$, for every $k \in [|\cD|]$, $\mu_k(x) \ge m(x)$ and $\frac{m(x)}{\alpha} \ge m(x)$. This implies that
\[
g_\alpha(x,d_k) \ge m(x).
\]
On the other hand, for every $k \in [|\cD|]$,
\[
g_\alpha(x,d_k) \le \frac{m(x)}{\alpha}.
\]
Therefore, we have:
\[
\frac{\underset{k}{\max} \ g_\alpha(x,d_k)}{\underset{k}{\min} \ g_\alpha(x,d_k)} \le \frac{1}{\alpha}.
\]
This implies that $g_\alpha \in \cF_\alpha$.

Now, let
\[
\delta_k(x) := \mu_k(x) - g_\alpha(x,d_k) \ge 0.
\]
Then, if 
\begin{itemize}
    \item $\mu_k(x) \le \frac{m(x)}{\alpha}$, then $\delta_k(x) = 0$
    \item $\mu_k(x) > \frac{m(x)}{\alpha}$, then 
    \begin{align*}
        \delta_k(x) &= \mu_k(x) - \frac{m(x)}{\alpha} \\
        &\le M(x) - \frac{m(x)}{\alpha} \\
        &= m(x) \left( R(x) - \frac{1}{\alpha} \right).
    \end{align*}
\end{itemize}
Therefore, we have:
\[
\delta_k(x)^2 \le m(x)^2 \left( R(x) - \frac{1}{\alpha} \right)_+^2.
\]
Now, define
\[
\lambda(x) := \sum_{k=1}^{|\cD|} \delta_k(x) \cdot p_{k|x} \ge 0,
\]
and 
\[
\tilde{g}_\alpha(x,d_k) := g_\alpha(x,d_k) + \lambda(x).
\]
Then, we know $\tilde{g}_\alpha$ is $\alpha$-fair and satisfies IS, since adding the same constant will not violate the fairness ratio constraint. Moreover:
\begin{align*}
    &\Ex \left[ \tilde{g}_\alpha(X,D) - Y \mid X = x \right] \\
    =& \Ex \left[ g_\alpha(X,D) - Y \mid X= x \right] + \lambda(x) \\
    =& \sum_{k=1}^{|\cD|} \Ex \left[ g_\alpha(x,d_k) - Y \mid X=x,D=k \right] \cdot p_{k|x} + \lambda_(x) \\
    =& \sum_{k=1}^{|\cD|} \left( g_\alpha(x,d_k) - Y \mid X=x,D=k \right) \cdot p_{k|x} + \lambda_(x) \\
    =& \sum_{k=1}^{|\cD|} \left( g_\alpha(x,d_k) - \mu_k(x) \right) \cdot p_{k|x} + \lambda(x) \\
    =& \sum_{k=1}^{|\cD|}-\delta_k(x) \cdot p_{k|x} + \lambda(x) \\
    =& -\lambda(x) + \lambda(x) \\
    =& 0.
\end{align*}
Therefore, $\tilde{g}_\alpha \in \cF_{\alpha,IS}$.

Note that
\[
\tilde{g}_\alpha(x,d_k) - \mu_k(x) = \lambda(x) - \delta_k(x),
\]
then, let $\delta(X,D) := \delta_k(X)$, if $X = x, D = k$, then condition on $X$, $\delta$ is a discrete random variable taking values $\delta_1(x), \ldots, \delta_{|\cD|}(x)$ w.p. $p_{1|x}, \ldots, p_{|\cD||x}$.

Now, by definition of $\lambda(x)$, we have
\[
\lambda(x) := \sum_{k=1}^{|\cD|} \delta_k(x) \cdot p_{k|x} =\Ex[\delta \mid X = x],
\]
Again, knowing that:
\[
\tilde{g}_\alpha(x,d_k) - \mu_k(x) = \lambda(x) - \delta_k(x).
\]
then, therefore,
\[
\tilde{g}_\alpha(X,D) - \mu_D(X) = \lambda(x) - \delta(X,D),
\]
whenever $X = x$.

Thus, we have
\[
\Ex \left[ \left( \tilde{g}_\alpha - \mu_D \right)^2 \mid X = x \right] = \Ex \left[ (\lambda(x) - \delta)^2 \mid X = x \right],
\]
then, the conditional variance of $\delta$ given $X = x$ is:
\begin{align*}
    Var(\delta \mid X = x) &= \Ex \left[ \left( \delta - \Ex[\delta \mid X = x] \right)^2 \mid X = x \right] \\
    &= \Ex \left[ (\delta - \lambda(x))^2 \mid X = x \right] \\
    &= \Ex \left[ \left( \tilde{g}_\alpha - \mu_D \right)^2 \mid X = x \right] \\
    &= \Ex \left[ \delta^2 \mid X = x \right] - \left( \Ex[\delta \mid X = x] \right)^2 \\
    &\le \Ex[\delta^2 \mid X = x].
\end{align*}
But, we have shown that
\[
\delta_k(x)^2 \le m(x)^2 \left( R(x) - \frac{1}{\alpha} \right)_+^2.
\]
Therefore,
\begin{align*}
    Bias(\alpha) &\le \Ex \left[ \left( \tilde{g}_{\alpha} - \mu_D \right)_+^2 \right] \\
    &\le \mu_{\max}^2 \left( R_{\mu} - \frac{1}{\alpha} \right)_+^2
\end{align*}

\textbf{Step 3: combining results}

Therefore, we conclude that for any fixed $\alpha \in (0,1]$ and $\delta \in (0,1)$ w.p. at least $1 - \delta$, the following upper bound on the excessive risk holds:
\begin{align*}
    \cR(\hat{f}_{\alpha,IS}) - \cR(f^*) &\le 2 \Delta_n + Bias(\alpha) \\
    &\le 8M \Re_n(\cF_\alpha) + 6M^2 \sqrt{\frac{\log(4/\delta)}{2n}} + \mu_{\max}^2 \left( R_{\mu} - \frac{1}{\alpha} \right)_+^2
\end{align*}

\end{proof}

\newpage
\subsection{Corollary \ref{corollary:constraint-violation-rate} Proof}

\begin{proof}

Since $\hat{f}_{\alpha,IS} \in \cF_\alpha$, then by definition of $\cF_\alpha$, we know that $\hat{f}_{\alpha, IS}$ admits $0$ out-of-sample fairness violation rates.

For solvency, recall the deviation between the population solvency functional and its empirical counterpart from Theorem \ref{theorem:sample-complexity-upper-bound-fairness-and-IS} is:
\[
\sup_{f \in \cF_\alpha, IS} \left| G(f,t) - \hat{G}_n(f,t) \right| \le \eta_n,
\]
w.p. at least $1 - \delta$.

Now, on this h.p. event, applying the bound to $f = \hat{f}_{\alpha, IS}$, we have that for every $t \in \cT$,
\[
G(\hat{f}_{\alpha, IS},t) \ge \hat{G}_n(\hat{f}_{\alpha, IS},t) - \eta_n.
\]
Since the empirical estimator satisfies:
\[
\hat{G}_n(\hat{f}_{\alpha, IS},t) \ge 0, \forall t \in \cT,
\]
we get:
\[
G(\hat{f}_{\alpha, IS},t) \ge -\eta_n, \forall t \in \cT.
\]
Now, suppose that
\[
t(x) = \ones\{ IS_{\hat{f}_{\alpha, IS}}(x) < 0 \} \in \cT.
\]
Then, plug in the solvency functional, we can write:
\[
G(\hat{f}_{\alpha, IS},t) = \Ex \left[ IS_{\hat{f}_{\alpha, IS}}(X) \cdot \ones \{ IS_{\hat{f}_{\alpha, IS}}(X) < 0 \} \right].
\]
Then, on the event $\{IS_{\hat{f}_{\alpha, IS}}(X) < 0\}$, we have:
\[
IS_{\hat{f}_{\alpha, IS}}(X) = - \left( -IS_{\hat{f}_{\alpha, IS}}(X) \right)_+.
\]
This implies that:
\[
G(\hat{f}_{\alpha, IS},t) = - \Ex \left[ (-IS_{\hat{f}_{\alpha, IS}}(X))_+ \right].
\]
Then, we get:
\[
\Ex \left[ (-IS_{\hat{f}_{\alpha, IS}})_+ \right] \le \eta_n.
\]
Now, by Markov's inequality, for any $\tau > 0$, we get:
\begin{align*}
    \bbP \left( IS_{\hat{f}_{\alpha, IS}}(X) < -\tau \right) &= \bbP \left( (-IS_{\hat{f}_{\alpha, IS}}(X))_+ > \tau \right) \\
    &\le \frac{\Ex \left[ (-IS_{\hat{f}_{\alpha, IS}}(X))_+ \right]}{\tau}.
\end{align*}
Then, it follows that:
\[
\bbP \left( IS_{\hat{f}_{\alpha, IS}}(X) < -\tau \right) \le \frac{\eta_n}{\tau}.
\]
If the slack parameter $\eta_n$ is imposed, then
\[
G(\hat{f}_{\alpha, IS},t) \ge \hat{G}_n(\hat{f}_{\alpha, IS},t) - \eta_n \ge - 2 \eta_n.
\]
Then, by Markov's inequality again, we have:
\[
\Ex \left[ (-IS_{\hat{f}_{\alpha, IS}}(X))_+ \right] \le 2 \eta_n.
\]
Thus, we conclude
\[
\bbP \left( IS_{\hat{f}_{\alpha, IS}}(X) < -\tau \right) \le \frac{2 \eta_n}{\tau}
\]
    
\end{proof}

\newpage
\subsection{Proposition \ref{proposition:bounded-solution} Proof}

\begin{proof}
\label{proof:bounded solution}

Let
\[
m(x):= \min_k \mu_k(x), \ \text{and}\ M(x):= \max_k \mu_k(x).
\]
We will first show that
\[
\min_k \mu_k(x) \le f^*(x,\cdot),
\]
for $\bbP_X$-$a.e.$ $x$.

Similar to the proof of Proposition \ref{proposition:order-preserving-upper-bound}. Define
\[
S_L := \left\{ x \mid \exists k \ \text{s.t.}\ f^*(x,d_k) < m(x) \right\},
\]
and show that $\bbP_X(S_L) = 0$.

For each $k$, define 
\[
S_{L,k} := \left\{ x \mid f^*(x,d_k) < m(x) \right\}.
\]
Then, we know that
\[
S_L = \bigcup_{k=1}^{|\cD|} S_{L,k},
\]
then given $|\cD| < \infty$, if $\bbP_X(S_L) > 0$, there must be some $k_0$ s.t. $\bbP_X(S_{L,k_0}) > 0$.

Assume for contradiction that such $k_0$ exists.

Define 
\[
F(x) := \max_k f^*(x,d_k),
\]
then, under IS, we have:
\[
\sum_{k=1}^{|\cD|} f^*(x,d_k) \cdot p_{k|x} \ge \sum_{k=1}^{|\cD|} \mu_k(x) \cdot p_{k|x} = \mu(x) \ge m(x).
\]
Thus, we have $F(x) \ge m(x)$ for $\bbP_X$-$a.e.$ $x$.

Now, define
\[
\epsilon(x) := \ones_{S_{L,k_0}}(x) \cdot \min \left\{ m(x) - f^*(x,d_{k_0}), F(x) - f^*(x,d_{k_0}) \right\}.
\]
$\epsilon(x) > 0$, since we assumed that $f^*(x,d_{k_0}) < m(x)$, for $x \in S_{L,k_0}$.

We now construct $\tilde{f}$ as follows:
\[
\tilde{f}(x,d_k) := \begin{cases}
    f^*(x,d_{k_0}) + \epsilon(x),\ \text{if}\ k = k_0, \\
    f^*(x,d_k) \ \ \ \ \ \ \ \ \ \ \ \ \ , \ \text{if}\ k \ne k_0.
\end{cases}
\]
Then, we show that $\tilde{f} \in \cF$ and that $\tilde{f}$ is feasible.

Since $f^* \in \cF$ and $f^*$ is feasible. Then, enough to bound $\epsilon(x)$.
\begin{align*}
    \Ex \left[ \left( \tilde{f}(X,D) - f^*(X,D) \right)^2 \right] &= \Ex \left[ \epsilon(X)^2 \cdot \ones_{\{D = k_0\}} \right] \\
    &\le \Ex \left[ \left( \mu_{k_0}(X) - f^*(X,d_{k_0}) \right)^2 \cdot \ones_{\{D = k_0\}} \right] \\
    &\le 2 \Ex \left[ \mu_{k_0}(X)^2 \cdot \ones_{\{D = k_0\}} \right] + 2 \Ex \left[ f^*(X,d_{k_0})^2 \cdot \ones_{\{D = k_0\}} \right] \\
    &< \infty,
\end{align*}
since we have shown that $\Ex \left[ \mu_D(X)^2 \right] \le \Ex \left[ Y^2 \right] < \infty$ and $f^* \in \cF$.

Since $\epsilon(x) > 0$, IS is trivially preserved.

For $\alpha$-fairness:

If $x \notin S_{L,k_0}$, $\tilde{f}(x, \cdot) = f^*(x, \cdot)$.

If $x \in S_{L,k_0}$, then we increased the $k_0$ coordinate and capped it at $F(x)$. Thus, we know the maximum of $\tilde{f}(x, \cdot)$ is also $F(x)$. Since $f^*$ is already $\alpha$-fair, then we must have $f^*(x,d_k) \ge \alpha F(x)$. Thus, with $\epsilon(x) > 0$, we have for all $k$,
\[
\alpha F(x) \le \tilde{f}(x,d_k) \le F(x).
\]
Therefore, $\tilde{f} \in \cF$ and is feasible.

Since $f^*$ and $\tilde{f}$ differ only on $\{X \in S_{L,k_0}\, D = k_0\}$, thus, we have:
\[
\psi(\tilde{f}) - \psi(f^*) = \Ex \left[ \ones_{\{X \in S_{L,k_0}, D = k_0\}} \cdot \left( \left( \tilde{f}(X,d_{k_0} - \mu_{k_0}(X) \right)^2 - \left( f^*(X,d_{k_0}) - \mu_{k_0}(X) \right)^2 \right) \right],
\]
where $\psi(f):= \Ex \left[ (f(X,D) - Y)^2 \right]$ (same as defined in previous proofs).

For $x \in S_{L,k_0}$, define
\[
g(x) := \mu_{k_0}(x) - f^*(x,d_{k_0}) > 0.
\]
Also, we have 
\[
\tilde{f}(x,d_{k_0}) = f^*(x,d_{k_0}) + \epsilon(x) \le m(x) \le \mu_{k_0}(x).
\]
Thus, we get
\[
0 < \epsilon(x) \le g(x).
\]
Point-wise, we have:
\begin{align*}
    \left( \tilde{f}(x,d_{k_0} - \mu_{k_0}(x) \right)^2 - \left( f^*(x,d_{k_0} - \mu_{k_0}(x) \right)^2 &= (-g(x) + \epsilon(x))^2 - (-g(x))^2 \\
    &= \epsilon(x)^2 - 2\epsilon(x)g(x) \\
    &< 0.
\end{align*}
Therefore, we get:
\[
\bbP(\{X \in S_{L,k_0},D=k_0\}) = \int_{x \in S_{L,k_0}} p_{k_0 |x} d \bbP_X(x) > 0.
\]
But, this immediately implies that 
\[
\psi(\tilde{f}) < \psi(f^*),
\]
contradicting the optimality of $f^*$, as desired.

Now, we show that 
\[
f^*(x,\cdot) \le \max_k \mu_k(x),
\]
for $\bbP_X$-$a.e.$ $x$.

Define
\[
S_H := \left\{ x \mid \exists k \ \text{s.t.}\ f^*(x,d_k) > M(x) \right\},
\]
similarly, we show $\bbP_X(S_H) = 0$.

Assume for contradiction that $\bbP_X(S_H) > 0$.

Define
\[
g(x,d_k) := \min \left\{ f^*(x,d_k), M(x) \right\},
\]
then, we know $g \in \cF$.

We now argue that $g$ is $\alpha$-fair. Fix $x$.

If $F(x) \le M(x)$, then $g(x,\cdot) = f^*(x,\cdot)$, $\alpha$-fairness is preserved.

If $F(x) > M(x)$, then at least one coordinate is capped. Thus, we have:
\[
\max_k g(x,d_k) = M(x).
\]
Since $f^*$ is $\alpha$-fair, thus
\[
f^*(x,d_k) \ge \alpha F(x) \ge \alpha M(x).
\]
If $f^*(x,d_k) > M(x)$, then $g(x,d_k) = M(x) \ge \alpha M(x)$. 

If $f^*(x,d_k) \le M(x)$, then $g(x,d_k) = f^*(x,d_k) \ge \alpha F(x) > \alpha M(x)$. Hence, $\underset{k}{\min} g(x,d_k) \ge \alpha M(x) = \alpha \underset{k}{\max} g(x,d_k)$.

Therefore, $g$ is also $\alpha$-fair.

For IS, define
\[
s_g(x) := \sum_{k=1}^{|\cD|}\left( g(x,d_k) - \mu_k(x) \right) \cdot p_{k|x}.
\]
If $s_g(x) \ge 0$, then $g$ is feasible, and it is enough to get a contradiction by showing that $\psi(g) < \psi(f^*)$.

However, recall that in the construction of $g$, the cap brought down the premium for some coordinates. Thus, IS is not guaranteed. Thus, $s_g < 0$ may happen on some set.

Let
\[
T:= \{x \mid s_g < 0 \},
\]
then, on $T$, define the deficit
\[
\text{def}(x) := -s_g(x) = \sum_{k=1}^{|\cD|}\left(\mu_k(x) - g(x,d_k) \right) \cdot p_{k|x} > 0,
\]
and the corresponding positive correction as:
\[
\text{pos}(x) := \sum_{k=1}^{|\cD|}\left(\mu_k(x) - g(x,d_k) \right)_+ \cdot p_{k|x}.
\]
Note that if $\text{def}(x) > 0$, then at least one of the $\mu_k(x) - g(x,d_k), k = 1, \ldots, |\cD|$ is strictly positive. Thus, $\text{pos}(x) > 0$. Also, note that $\text{def}(x) \le \text{pos}(x)$ by the presence of $(\cdot)_+$ in $\text{pos}(\cdot)$. Therefore, we have the ratio:
\[
t(x):= 
\begin{cases}
    \frac{\text{def}(x)}{\text{pos}(x)}, x \in T, \\
    0 \ \ \ \ \ \ \ , x \notin T.
\end{cases}
\]
Now, define
\[
h(x,d_k) := g(x,d_k) + t(x)(\mu_k(x) - g(x,d_k))_+.
\]
We will verify that $h \in \cF$ and is feasible.

For each $(x,d_k)$: 

if $\mu_k(x) \le g(x,d_k)$ then $h = g$.

If $\mu_k(x) > g(x,d_k)$, then
\[
h(x,d_k) = (1 - t(x))g(x,d_k) + t(x) \mu_k(x)
\]
is a convex combination of $g(x,d_k)$ and $\mu_k(x)$. Hence, point-wise, we have:
\[
|h(x,d_k)| \le |g(x,d_k)| + |\mu_k(x)|.
\]
Then, we have:
\[
h(X,D)^2 \le 2 g(X,D)^2 + 2\mu_D(X)^2,
\]
since $(a+b)^2 \le 2a^2 + 2b^2$. But, we know that $g \in \cF$ and $\mu_D \in \cF$. Therefore, $h \in \cF$.

For IS, we have:
\begin{align*}
    \sum_{k=1}^{|\cD|} (h(x,d_k) - \mu_k(x)) \cdot p_{k|x} &= \sum_{k=1}^{|\cD|} (g(x,d_k) - \mu_k(x)) \cdot p_{k|x} + t(x) \sum_{k=1}^{|\cD|} (\mu_k(x) - g(x,d_k))_+ \cdot p_{k|x} \\
    &= s_g(x) + t(x) \text{pos}(x).
\end{align*}
If $x \notin T$, then $t(x) = 0$, $s_g(x) \ge 0$.

If $x \in T$, then
\[
s_g(x) + t(x) \text{pos}(x) = -\text{def}(x) + \frac{\text{def}(x)}{\text{pos}(x)} \cdot \text{pos}(x) = 0.
\]
For $\alpha$-fairness:

On $S_H^c$, we have $g = f^*$ and $T = \emptyset$. Thus $h = f^*$, so $h$ is $\alpha$-fair.

On $S_H$, we know there is some $k$ with $f^*(x,d_k) > M(x)$. Capping at $M(x)$, we get $\underset{k}{\max} g(x,d_k) = M(x)$. As argued previously, we have $\underset{k}{\min} g(x,d_k) \ge \alpha M(x)$. But $h$ is constructed by increasing some coordinate to values no larger than $\mu_k(x)$. Therefore, $h$ is also $\alpha$-fair.

Thus, $h \in \cF$ and is feasible.

Now, we show that $h$ strictly decreases the objective.

Fix $(x,d_k)$:

If $g(x,d_k) \ge \mu_k(x)$, then $h(x,d_k) = g(x,d_k)$.

If $g(x,d_k) < \mu_k(x)$, then 
\[
h(x,d_k) = (1 - t(x)) g(x,d_k) + t(x) \mu_k(x).
\]
Then, we have:
\[
\mu_k(x) - h(x,d_k) = (1 - t(x)) (\mu_k(x) - g(x,d_k)),
\]
which implies 
\begin{align*}
    (h(x,d_k) - \mu_k(x))^2 &= (1 - t(x))^2 (g(x,d_k) - \mu_k(x))^2 \\
    &\le (g(x,d_k) - \mu_k(x))^2.
\end{align*}
Thus, we have
\[
\psi(h) \le \psi(g).
\]
Then, enough to show that $g$ strictly decreases the objective.

Define the event
\[
A := \{f^*(X,D) > M(X)\}.
\]
Then, for any $x \in S_H$, there exists at least one $k$ with $f^*(x,d_k) > M(x)$. Then, we have:
\[
\sum_{k=1}^{|\cD|} \ones_{\{f^*(x,d_k) > M(x)\}} \cdot p_{k|x} > 0,
\]
since $p_{k|x} > 0$. Then
\begin{align*}
    \bbP(A) &= \int_{x \in S_H} \sum_{k=1}^{|\cD|} \ones_{\{f^*(x,d_k) > M(x)\}} \cdot p_{k|x} \cdot d\bbP_X(x) \\
    &> 0.
\end{align*}
On $A$, we have $g(X,D) = M(X) < f^*(X,D)$ and $\mu_D(X) \le M(X)$. Then, point-wise we have:
\begin{align*}
    (g(x,d_k) - \mu_k(x))^2 - (f^*(x,d_k) - \mu_k(x))^2 &= (M(x) - \mu_k(x))^2 - (f^*(x,d_k) - \mu_k(x))^2 \\
    &< 0.
\end{align*}
This immediately gives
\[
\psi(h) \le \psi(g) < \psi(f^*),
\]
contradicting the optimality of $f^*$, as desired.

This completes the proof.
\end{proof}

\newpage
\subsection{Proposition \ref{proposition:order-preserving} Proof}

\subsubsection{Proposition \ref{proposition:order-preserving} (i) Proof}

\begin{proof}
\label{proof:order-preserving-lower-bound}

We will prove by contradiction.

Fix a group index $k$ and define
\[
S:= \left\{ \mu_k(x) \ge \mu(x) \ \text{and} \ f^*(x,d_k) < \mu(x) \right\}.
\]
Then, enough to show that $\bbP_X(S) = 0$.

Assume for contradiction that $\bbP_X(S) > 0$.

Note that for $\bbP_X$-$a.e.$ $x$, by IS, we have:
\[
\sum_{k=1}^{|\cD|} \left( f^*(x,d_k) - \mu_k(x) \right) \cdot p_{k|x} \ge 0 \iff \sum_{k=1}^{|\cD|} f^*(x,d_k) \cdot p_{k|x} \ge \sum_{k=1}^{|\cD|} \mu_k(x) \cdot p_{k|x}.
\]
This implies 
\[
\sum_{k=1}^{|\cD|} f^*(x,d_k) \cdot p_{k|x} \ge \mu(x).
\]

Let $M(x):= \underset{k}{\max} f^*(x,d_k)$. Then, we know that 
\[
\sum_{k=1}^{|\cD|} f^*(x,d_k) \cdot p_{k|x} \le M(x),
\]
since the average is always upper-bounded by the maximum. Therefore, for any $x \in S$,
\[
M(x) \ge \mu(x) > f^*(x,d_k)
\]
with the strict inequalities:
\[
M(x) - f^*(x,d_k) > 0,
\]
and 
\[
\mu_k(x) - f^*(x,d_k) > 0.
\]
Define 
\begin{align*}
    \Delta_M(x) &:= M(x) - f^*(x,d_k) \\
    \Delta_\mu(x) &:= \mu_k(x) - f^*(x,d_k), 
\end{align*}
and the perturbation
\[
\epsilon(x) := \ones_S(x) \min \left\{ \Delta_M(x), \Delta_\mu(x) \right\}.
\]
Then, we construct an alternative solution $\tilde{f}$ as follows:
\[
\tilde{f}(x,d_j) := \begin{cases}
    f^*(x,d_k) + \epsilon(x),\ \text{if}\ j = k \\
    f^*(x,d_j) \ \ \ \ \ \ \ \ \ \ \ \ , \ \text{if}\ j \ne k.
\end{cases}
\]

Then, we verify that $\tilde{f} \in \cF$.

Note that $\tilde{f}$ and $f^*$ only differ at coordinate $k$. Therefore, enough to show the differed part is square-integrable. Concretely, the differed part is:
\[
\Ex \left[ \left( \tilde{f}(X,D) - f^*(X,D) \right)^2 \right] = \Ex \left[ \epsilon(X)^2 \ones_{\{D = k\}} \right].
\]
Note that 
\[
\epsilon(x) \le \ones_S(x) \Delta_\mu(x) = \ones_S(x) \left( \mu_k(x) - f^*(x,d_k) \right),
\]
then, we have:
\begin{align*}
    \Ex \left[ \epsilon(X)^2 \ones_{\{D = k\}} \right] &\le \Ex \left[ \left( \mu_k(X) - f^*(X,d_k) \right)^2 \cdot \ones_{\{D = k\}} \right] \\
    &\le 2 \Ex \left[ \mu_k(X)^2 \cdot \ones_{\{D = k\}} \right] + 2 \Ex \left[ f^*(X,d_k)^2 \cdot \ones_{\{D = k\}} \right] \\
    &< \infty,
\end{align*}
since in Proposition \ref{proof:existence-of-solution}, we have shown that $\Ex \left[ \mu_D(X)^2 \right] \le \Ex \left[ Y^2 \right] < \infty$, and $f \in \cF$. Therefore, we have $\tilde{f} \in \cF$.

Next, we verify that $\tilde{f}$ is feasible.

For IS, we have:
\begin{align*}
    \sum_{k=1}^{|\cD|} \tilde{f}(x,d_k) \cdot p_{k|x} &= \sum_{k=1}^{|\cD|} f^*(x,d_k) \cdot p_{k|x}  + \epsilon(x) \cdot p_{k|x} \\
    &\ge \mu(x) + \epsilon(x) \cdot p_{k|x} \\
    &\ge \mu(x).
\end{align*}

For $\alpha$0fairness.

If $x \notin S$, then $\tilde{f}(x, \cdot) = f^*(x, \cdot)$, the constraints are satisfied since $f^*$ is feasible.

Now, fix $x \in S$, then,
\begin{align*}
    \tilde{f}(x,d_k) &= f^*(x,d_k) + \epsilon(x) \\
    &\le f^*(x,d_k) + \Delta_M(x) \\
    &= M(x).
\end{align*}
This implies that the maximum of $\tilde{f}(x,\cdot)$ is also upper-bounded by $M(x)$.

Let $j^*$ be the index s.t. $M(x) = f^*(x,d_j)$, then, for every $j$, we have
\[
f^*(x,d_j) \ge \alpha f^*(x,d_{j^*}) = \alpha M(x).
\]
Therefore, for $j \ne k$, we have:
\[
\tilde{f}(x,d_j) = f^*(x,d_k) \ge \alpha M(x).
\]
Therefore, for all $j$, we have:
\[
\alpha M(x) \le \tilde{f}(x,d_j) \le M(x).
\]
Thus, for any pair of $j,l$,
\[
\tilde{f}(x,d_j) - \alpha \tilde{f}(x,d_l) \ge \alpha M(x) - \alpha M(x) = 0,
\]
and the same holds for the reverse way as one can easily verify.

Now, it remains to show that $\tilde{f}$ strictly decreases the objective.

Let 
\[
\psi(f) := \Ex \left[ (f(X,D) - Y)^2 \right].
\]
Note that by conditioning, we have:
\[
\Ex \left[ (f(X,D) - Y)^2 \mid X,D \right] = \left( f(X,D) - \mu_D(X) \right)^2 + Var(Y \mid X,D),
\]
but the variance term does not depend on $f$. Thus, enough to show that the squared bias term is decreased with $\tilde{f}$.

Since $\tilde{f}$ and $f^*$ differ only on $\{D = k\} \cap \{X \in S\}$, so, we get:
\[
\psi(\tilde{f}) - \psi(f^*) = \Ex \left[ \ones_{\{X \in S, D = k\}} \left( \left( \tilde{f}(X,d_k) - \mu_k(X) \right)^2 - \left( f^*(X,d_k) - \mu_k(X) \right)^2 \right) \right].
\]
Recall that on $\{X \in S\}$, we have $\tilde{f}(X,d_k) = f^*(X,d_k) + \epsilon(X)$.

Define
\[
g(X) := \mu_k(X) - f^*(X,d_k).
\]
Then, we have $g(X) > 0$ and $\epsilon(X) \le g(X)$ on S.

Point-wise, we have:
\begin{align*}
    \left( \tilde{f}(x,d_k) - \mu_k(x) \right)^2 - \left( f^*(x,d_k) - \mu_k(x) \right)^2 &= \left( f^*(x,d_k) +\epsilon(x) - \mu_k(x) \right)^2 - \left( f^*(x,d_k) - \mu_k(x) \right)^2 \\
    &= (-g(x) + \epsilon(x))^2 - (-g(x))^2 \\
    &= \epsilon(x)^2 - 2\epsilon(x)g(x).
\end{align*}
But, since $0 < \epsilon(x) \le g(x)$ for every $x \in S$. Thus, we have $\epsilon(x) - 2g(x) \le -g(x) < 0$. Hence, for $x \in S$, we have
\[
\epsilon(x)^2 - 2 \epsilon(x)g(x) < 0.
\]
Therefore, since we assumed $\bbP_X(S) > 0$. But, we also know that $p_{k|x} > 0$ for $\bbP_X$-$a.e.$ $x$. Thus,
\begin{align*}
    \bbP \left( \{X \in S,D = k\} \right) &= \int_{x \in S} p_{k|x} \cdot d \bbP_X(x) \\
    &> 0.
\end{align*}
Then, this implies
\begin{align*}
    \psi(\tilde{f}) - \psi(f^*) &= \int \ones_{\{x \in S, D = k\}} \left( \epsilon(x) - 2 \epsilon(x)g(x) \right) d \bbP_X(x) \\
    < 0,
\end{align*}
contradicting the optimality of $f^*$. Therefore, $\bbP_X(S) = 0$, as desired. 

This completes the proof.
\end{proof}

\newpage
\subsubsection{Proposition \ref{proposition:order-preserving} (ii) Proof}

\begin{proof}
\label{proof:order-preserving-upper-bound}

We first consider when $|\cD| = 2$.

Let one level in $\cD$ correspond to low risk (L) and the other to high risk (H).

Fix $x \in \cX$. If $\mu_L(x) \ge \alpha \mu_H(x)$ and $\mu_H(x) \ge \alpha \mu_L(x)$, then, we know $\left( f^*(x,d_L),f^*(x,d_H) \right) = \left( \mu_L(x), \mu_H(x) \right)$ is feasible (IS hits equality by definition of $\mu(x)$) and optimal with $\mu_L(x) \le \mu(x) \implies f^*(x,d_L) = \mu_L(x) \le \mu(x)$, as desired.

Now, consider the case when $\mu_L(x) < \alpha \mu_H(x)$. Then $\left( \mu_L(x), \mu_H(x) \right)$ is no longer feasible. Observe that for any fixed $f(x,d_H)$, the objective
\[
\psi(f) = \left( f(x,d_L) - \mu_L(x) \right)^2 \cdot p_{L|x} + \left( f(x,d_H) - \mu_H(x) \right)^2 \cdot p_{H|x},
\]
is only a function of $f(x, d_L)$ and is minimized at the smallest feasible $f(x,d_L)$. Since $\mu_L(x) < \alpha \mu_H(x)$, any feasible solution that keeps $f(x,d_H)$ close to $\mu_H(x)$ must have $f(x,d_L) > \mu_L(x)$. The objective is strictly increasing in $f(x,d_L)$ for $f(x,d_L) > \mu_L(x)$. Therefore, we only need to consider the scenario where $f(x,d_L) = \alpha f(x,d_H)$.

Now, let $f(x,d_L) = \alpha f(x,d_H)$, we have:
\[
\psi(f) = \left( \alpha f(x,d_H) - \mu_L(x) \right)^2 \cdot p_{L|x} + \left( f(x,d_) - \mu_H(x) \right)^2 \cdot p_{H|x}.
\]
Fairness constraint is automatically satisfied by the setup of $f(x,d_L) = \alpha f(x,d_H)$. For IS, we have:
\[
\left( p_{H|x} + \alpha p_{L|X} \right) f(x,d_H) \ge f(x,d_L) \cdot p_{L|x} + f(x,d_H) \cdot p_{H|X} = \mu(x).
\]
This implies
\[
f(x,d_H) \ge \frac{\mu(x)}{p_{H|x} + \alpha p_{L|x}}.
\]
Now, if we minimize $\psi$ with $f(x,d_L) = \alpha f(x,d_H)$, we would obtain
\[
\frac{\partial}{\partial f(x,d_H)} \psi(f) = 2 \alpha \left( \alpha f(x,d_H) - \mu_L(x) \right) \cdot p_{L|x} + 2\left( f(x,d_H) - \mu_H(x) \right) \cdot p_{H|x} =0.
\]
This gives 
\[
\tilde{f}(x,d_H) = \frac{\alpha \mu_L(x) \cdot p_{L|x} + \mu_H(x) \cdot p_{H|x}}{\alpha^2 p_{L|x} + p_{H|x}},
\]
consequently,
\[
\tilde{f}(x,d_L) = \alpha \tilde{f}(x,d_H) = \frac{\alpha^2 \mu_L(x) \cdot p_{L|x} + \alpha \mu_H(x) \cdot p_{H|x}}{\alpha^2 p_{L|x} + p_{H|x}}.
\]
Therefore, we have that
\[
f^*(x,d_H) = \max \left\{ \tilde{f}(x,d_H), \frac{\mu(x)}{p_{H|x} + \alpha p_{L|x}} \right\},
\]
and
\[
f^*(x,d_L) = \alpha f^*(x,d_H).
\]
If the maximum is attained in order to satisfy IS, then 
\begin{align*}
    f^*(x,d_L) &= \alpha f^*(x,d_H) \\
    &= \frac{\alpha \mu(x)}{p_{H|x} + \alpha p_{L|x}}.
\end{align*}
Then, enough to verify that if 
\begin{align*}
    \frac{\alpha \mu(x)}{p_{H|x} + \alpha p_{L|x}} \le \mu(x) \iff \alpha &\le p_{H|x} + \alpha p_{L|x} \\
    &=1 - p_{L|x} + \alpha p_{L|x} \\
    &= \alpha + (1 - \alpha)(1 - p_{L|x}),
\end{align*}
as desired.

Now, suppose IS is slack, that is $f^*(x,d_H) = \tilde{f}(x,d_H)$, then, we have:
\begin{align*}
    f^*(x,d_L) &= \alpha f^*(x,d_H) \\
    &= \frac{\alpha^2 \mu_L(x) \cdot p_{L|x} + \alpha \mu_H(x) \cdot p_{H|x}}{\alpha^2 p_{L|x} + p_{H|x}}.
\end{align*}
Then, enough to verify if 
\begin{align*}
    \frac{\alpha^2 \mu_L(x) \cdot p_{L|x} + \alpha \mu_H(x) \cdot p_{H|x}}{\alpha^2 p_{L|x} + p_{H|x}} \le \mu(x) \iff \frac{\alpha^2 \mu_L(x) \cdot p_{L|x} + \alpha \mu_H(x) \cdot p_{H|x}}{\alpha^2 p_{L|x} + p_{H|x}} \le\mu_L(x) \cdot p_{L|x} + \mu_H(x) \cdot p_{H|x}.
\end{align*}
Then, simplify to get:
\[
\alpha \tilde{f}(x,d_H) - \mu(x) = \frac{(1 - \alpha)(1 - p_{L|x})\left( (1 + \alpha)(\mu_H(x) - \mu_L(x)) \cdot p_{L|x} - \mu_H(x) \right)}{\alpha^2 p_{L|x} + p_{H|x}},
\]
since $(1 + \alpha)(1 - p_{L|x}) \ge 0$ and $\alpha^2 p_{L|x}+p_{H|x} > 0$, we get:
\[
\alpha \tilde{f}(x,d_H) \le \mu \iff p_{L|x} \le \frac{\mu_H(x)}{(1 + \alpha)\left( \mu_H(x) - \mu_L(x) \right)}.
\]
Take $\mu_L(x) \to 0$, we get the worst-case condition
\[
p_{L|x} \le \frac{1}{1 + \alpha},
\]
to have 
\[
f^*(x,d_L) \le \mu(x),
\]
as desired.

To extend the result to $|\cD| > 2$, we need to impose an additional restriction on the optimal solution.

Suppose there exists a low-risk cluster $C_L \subset \cD$ and a high-risk cluster $C_H = \cD \setminus C_L$ s.t. for some $M(x) > 0$
\[
\begin{cases}
    f^*(x,d_k) = \alpha M(x), \ \forall k \in C_L \\
    f^*(x,d_k) = M(x) \ \ \ , \ \forall k \in C_H.
\end{cases}
\]
Then, let 
\begin{align*}
    p_{L|x}:= \sum_{k \in C_L} p_{k|x}, \quad \mu_L(x):= \frac{1}{p_{L|x}} \sum_{k \in C_L} \mu_k(x) \cdot p_{k|x}, \\
    p_{H|x}:= \sum_{k \in C_H} p_{k|x}, \quad \mu_H(x):= \frac{1}{p_{H|x}} \sum_{k \in C_H} \mu_k(x) \cdot p_{k|x}.
\end{align*}
Then, with the same argument used for $|\cD| = 2$, we have the worst-case sufficient condition
\[
\sum_{k \in C_L} p_{k|x} \le \frac{1}{1 + \alpha},
\]
then, for every $k \in C_L$,
\[
f^*(x,d_k) \le \mu(x), \forall \alpha \in (0,1].
\]
This completes the proof.
\end{proof}

\newpage
\subsubsection{Proposition \ref{proposition:order-preserving} (iii) Proof}

\begin{proof}
\label{proof:order-preserving-strict}

Fix $x$, then we have:
\begin{align*}
    \Ex \left[ (f(X,D) - Y)^2 \mid X = x \right] &= \sum_{k=1}^{|\cD|} \Ex \left[ (f(x,d_k) - Y)^2 \mid X = x, D = k \right] \cdot p_{k|x} \\
    &= \sum_{k=1}^{|\cD|} \Ex \left[ \left( (f(x,d_k) - \mu_k(x)) + (\mu_k(x) - Y) \right)^2 \mid X = x, D = k \right] \cdot p_{k|x}.
\end{align*}
Expand the square, we have that for each fixed $k$,
\[
\Ex \left[(f(x,d_k) - Y)^2 \mid X = x, D = k \right] = (f(x,d_k) - \mu_k(x))^2 + Var(Y \mid X =x, D = k).
\]
Then, since the variance term does not depend on $f$, we can drop it when optimizing over $f$.

Let 
\[
M(x) := \max_k f(x,d_k),
\]
then, feasibility implies that 
\[
\alpha M(x) \le f(x,\cdot) \le M(x).
\]
On the other hand, if there is some $M(x)$ with
\[
\alpha M(x) \le f(x,\cdot) \le M(x),
\]
then
\[
\alpha \le \frac{f(x,d_k)}{f(x,d_j)} \le \frac{1}{\alpha},
\]
for all pairs $(k,j)$. In other words, for fixed $x$, $\alpha$-fairness constraints are precisely the existence of a scalar $M(x)$ satisfying the above inequalities.

Thus, for fixed $x$, the optimization becomes:
\[
    \min_{f \in \cF, M(x) \in \bbR} \sum_{k=1}^{|\cD|} (f(x,d_k) - \mu_k(x))^2 \cdot p_{k|x},
\]
subject to
\begin{align*}
    \sum_{k=1}^{|\cD|} (f(x,d_k) - \mu_k(x)) \cdot p_{k|x} &\ge 0 \\
    \alpha(M(x) \le f(x,d_k) &\le M(x), \forall k.
\end{align*}
Note that this is convex with a strictly convex objective. Thus, we can leverage the KKT condition.

Now, define the Lagrange multipliers:
\begin{itemize}
    \item $\lambda(x) \ge 0$, for $\mu(x) - \sum_{k=1}^{|\cD|} f(x,d_k) \cdot p_{k|x} \le 0$.
    \item $u_k(x) \ge 0$, for $f(x,d_k) - M(x) \le 0$.
    \item $v_k(x) \ge 0$, for $\alpha M(x) - f(x,d_k) \le 0$.
\end{itemize}
Then, by stationarity w.r.t. $f$, we get:
\[
2 (f(x,d_k) - \mu_k(x)) \cdot p_{k|x} - \lambda(x)\cdot p_{k|x} + u_k(x) - v_k(x) = 0.
\]

Now, consider the following cases:
\begin{enumerate}
    \item If $\alpha M(x) < f(x,d_k) < M(x)$, then $u_k(x) = v_k(x) = 0$, we get:
    \[
    f(x,d_k) = \mu_k(x) + \frac{\lambda(x)}{2}.
    \]
    \item If $f(x,d_k) = M(x)$, then for $\alpha < 1$, the lower bound is slack, meaning $v_k(x) = 0$, we get:
    \[
    u_k(x) = \left( \lambda(x) - 2(M(x) - \mu_k(x)) \right) \cdot p_{k|x} \ge 0 \implies \mu_k(x) + \frac{\lambda(x)}{2} \ge M(x).
    \]
    \item If $f(x,d_k) = \alpha M(x)$, then the upper bound is slack, meaning $u_k(x) = 0$, we get:
    \[
    v_k(x) = \left( 2( \alpha M(x) - \mu_k(x)) -\lambda(x) \right) \ge 0 \implies \mu_k(x) + \frac{\lambda(x)}{2} \le \alpha M(x).
    \]
\end{enumerate}
Then, for every $k$, we have:
\[
f(x,d_k) = \Pi_{[\alpha M(x),M(x)]} \left( \mu_k(x) + \frac{\lambda(x)}{2} \right).
\]
Note that if $\mu_k(x) \ge \mu_j(x)$, then
\[
\mu_k(x) + \frac{\lambda(x)}{2} \ge \mu_j(x) + \frac{\lambda(x)}{2} \implies f(x,d_k) = \Pi \left( \mu_k(x) + \frac{\lambda(x)}{2} \right) \ge \Pi \left( \mu_j(x) + \frac{\lambda(x)}{2} \right) = f(x,d_j).
\]
Hence, we have
\[
\mu_k(x) \ge \mu_j(x) \implies f^*(x,d_k) \ge f^*(x,d_j),
\]
for $\bbP_X$-$a.e.$ $x$.
\end{proof}


\newpage

\end{document}